\newacronym{smc}{SMC}{sequential Monte Carlo}
\newacronym{dpsmc}{DPSMC}{diffusion path sequential Monte Carlo}
\newacronym{tsi}{TSI}{target score identity}
\newacronym{dsi}{DSI}{denoising score identity}
\newacronym{msi}{MSI}{mixed score identity}
\newacronym{scv}{SCV}{scalar control-variate schedule}
\newacronym{mcv}{MCV}{matrix control-variate schedule}
\newacronym{ula}{ULA}{unadjusted Langevin algorithm}
\newacronym{mala}{MALA}{Metropolis-adjusted Langevin algorithm}
\newacronym{ald}{ALD}{annealed Langevin dynamics}
\newacronym{dald}{DALD}{diffusion-annealed Langevin dynamics}
\newacronym{dalmc}{DALMC}{diffusion-annealed Langevin Monte Carlo}
\newacronym{kl}{KL}{Kullback-Leibler}
\newacronym{ks}{KS}{Kolmogorov-Smirnov}
\newacronym{snr}{SNR}{signal-to-noise ratio}
\newacronym{ais}{AIS}{annealed importance sampling}
\newacronym{ll}{LL}{Langevin-within-Langevin}
\newacronym{si}{SI}{stochastic interpolant}
\newacronym{em}{EM}{Euler-Maruyama}
\newacronym{ei}{EI}{exponential integrator}
\newacronym{ou}{OU}{Ornstein–Uhlenbeck}
\theoremstyle{plain}
\newtheorem{theorem}{Theorem}
\newtheorem{proposition}{Proposition}
\newtheorem{lemma}{Lemma}
\newtheorem{corollary}{Corollary}
\newtheorem{assumption}{Assumption}
\theoremstyle{definition}
\newtheorem{remark}{Remark}
\newcommand*{\tabres}[3][]{\( #1{#2 {\scriptstyle \pm #3}} \)}
\DeclareMathOperator*{\argmin}{argmin}
\newcommand{\fparagraph}[1]{\noindent\textbf{#1}}
\newcommand{\hlmath}[1]{{\fboxsep0pt\colorbox{YellowGreen!60}{#1}}}
\newcommand{\bluedagger}{{\color{blue} $\dagger$}}
\newcommand{\blueddagger}{{\color{blue} $\ddagger$}}
\def\@fnsymbol#1{\color{blue}\ensuremath{\ifcase#1\or \star\or \star\or
   \mathsection\or \mathparagraph\or \|\or **\or \star\star
   \or \star\star \else\@ctrerr\fi}}
\begin{document}

\title{Diffusion Path Samplers via Sequential Monte Carlo}
\author[\bluedagger]{James Matthew Young\thanks{Corresponding author: \texttt{mmy20@ic.ac.uk}}}
\author[\bluedagger]{Paula Cordero-Encinar}
\author[\blueddagger]{\\Sebastian Reich}
\author[\bluedagger]{Andrew Duncan}
\author[\bluedagger]{\"O. Deniz Akyildiz}
\affil[\bluedagger]{Department of Mathematics, Imperial College London, UK}
\affil[\blueddagger]{Institut f\"ur Mathematik, Universit\"at Potsdam, Germany}

\maketitle

\begin{abstract}
  We develop diffusion-based samplers for target distributions known up to a normalising constant. To this end, we rely on the well-known diffusion path that smoothly interpolates between a simple base distribution and the target, popularised by diffusion models. We tackle the score estimation problem by developing an efficient sequential Monte Carlo sampler that evolves auxiliary variables from conditional distributions along the path, providing principled score and density estimates for time-varying distributions. To control the variance of score estimates, we further propose practical control variate schedules that incur minimal overhead. We adapt this general framework to paths induced by the Ornstein–Uhlenbeck (OU) time-reversal process, stochastic interpolants, and diffusion annealed Langevin dynamics, outlining their trade-offs. Finally, we provide theoretical guarantees and empirically demonstrate the effectiveness of our method on several synthetic and real-world datasets.
\end{abstract}

\section{Introduction}
Sampling from a target distribution $\pi \propto \exp({-V_\pi})$ known up to a normalising constant is a fundamental problem in many fields of science. It underpins Bayesian inference \citep{gelman1995bayesian}, machine learning \citep{bishop2006pattern}, statistical physics \citep{newman1999monte}, finance \citep{glasserman2004monte}, computational biology \citep{wilkinson2018stochastic}, and many others. Sampling directly from complex targets is a notoriously difficult task. A standard way to avoid this issue is to introduce an auxiliary path of distributions bridging an easy-to-sample-from base density and the target density \citep{neal2001annealed, del2006sequential}. This allows the sampler to gradually adapt to the complexity of the target distribution instead of tackling it directly. A popular idea in this line of work has been to consider the so-called \textit{geometric path}, which has been shown to suffer from \textit{mass teleportation} and degradation of log-Sobolev constants \citep{chehab2025provable,cordero2025non,phillips2024particle}, severely hindering the efficiency of the sampling procedure.

In this work, we focus on a powerful alternative, the \textit{diffusion path}, which has recently gained significant attention in generative modelling \citep{sohl2015deep,ho2020denoising,song2021score}. As opposed to the geometric path, the diffusion path is well behaved: under mild assumptions, it admits uniformly bounded functional inequalities and finite action \citep{cordero2025non}. Their mathematical regularity, combined with the celebrated performance of diffusion models, make the diffusion path an appealing choice for sampling leading to a number of recent contributions (see Section~\ref{sec:related_works}). We expand this area by developing a novel, efficient, and highly-parallelisable sampler on diffusion paths. Specifically:

\textbf{(C1)} We develop a framework for score estimation, building on \gls*{smc} samplers \citep{del2006sequential} that sample auxiliary variables sequentially, leading to highly parallelisable samplers.

\textbf{(C2)} We introduce novel control variate schedules for minimising the variance of score estimates, describing their optimum and providing practical estimates that take on no additional overhead. 

\textbf{(C3)} We provide a new bound on the score estimation error, extending the results of \citet{he2024zeroth}, and establish convergence guarantees for our algorithms.

\textbf{(C4)} We demonstrate the effectiveness of our sampler applied to different diffusion dynamics across synthetic and real-world benchmarks, including the task of estimating normalising constants.

\noindent\textbf{Notation.} We denote by $\lVert\cdot\rVert$ the Euclidean norm for vectors and $\mathrm{B}(m,R)$ a ball centred at $m$ with radius $R$. We write by $\mathcal{N}(x; m, \mathbf{C})$ the density of a Gaussian random variable with mean $m$ and covariance matrix $\mathbf{C}$ evaluated at $x$. We let $W_2$ denote the Wasserstein-2 distance between probability measures \citep{villani2008optimal}:
$
    W_2^2(\mu, \nu) = \inf_{\gamma\in\Gamma(\mu, \nu)} \int \lVert x - y\rVert^2 \mathrm{d}\gamma(x,y), 
   $ 
where $\Gamma(\mu, \nu)$ is the set of couplings between $\mu$ and $\nu$. We denote by $\mathrm{KL}(\mu\lVert\nu)$ the \gls*{kl} divergence between probability measures $\mu$ and $\nu$. We write $\mathrm{Var}_p[f(X)] = \mathrm{Tr}(\mathrm{Cov}_p[f(X)])$ to denote the variance of $f$ w.r.t. $p$.
\section{Technical Background}
Recent advances in diffusion models \citep{sohl2015deep,ho2020denoising,song2021score} have demonstrated that sampling paths based on diffusion processes that bridge simple distributions to complex target distributions can effectively sample efficiently in high-dimensional and multimodal settings. These paths, which we refer to as \textit{diffusion paths}, can be defined as a continuous-time family of distributions $(\mu_t)_{t\in[0,1]}$ interpolating between a base and target distribution \citep{chehab2024practical,chehab2025provable,cordero2025non}, i.e. $\mu_0=\nu$ and $\mu_1=\pi$, written as:
\begin{align}
    \mu_t(x) = \frac{1}{\sqrt{1 - \lambda_t}^{d}}\nu\left(\frac{x}{\sqrt{1 - \lambda_t}}\right)\ast\frac{1}{\sqrt{\lambda_t}^{d}}\pi\left(\frac{x}{\sqrt{\lambda_t}}\right),\label{eq:diffusion_path_marginal}
\end{align}
for $t\in[0, 1]$ and a monotonically increasing schedule $\lambda_t \in [0, 1]$ with $\lambda_1 = 1$. Here, we can also see that $X_t \sim \mu_t$ is defined as the interpolation $X_t = \sqrt{1 - \lambda_t} Z + \sqrt{\lambda_t} X$ where $X \sim \pi$ and $Z \sim \nu$.

We instantiate the path \eqref{eq:diffusion_path_marginal} with two stochastic differential equations (SDEs): (i) a diffusion process that matches the path marginals exactly with a Gaussian base distribution and (ii) a general diffusion process based on \gls*{dald} \citep{cordero2025non}, which approximates the path (with arbitrary base distributions) with an error that is controlled by a parameter $\epsilon$.

\textbf{Diffusion path SDE.} When $\nu(x) = \mathcal{N}(x; 0, \sigma^2\mathbf{I})$ for some $\sigma > 0$, the diffusion path in \eqref{eq:diffusion_path_marginal} is precisely induced by the SDE
\begin{equation}
    \mathrm{d}X_t = \left[\frac{\dot{\lambda}_t}{2\lambda_t}X_t + \left(\sigma^2\frac{\dot{\lambda}_t}{2\lambda_t}+\gamma_t\right)\nabla\log\mu_t(X_t)\right]\mathrm{d}t + \sqrt{2\gamma_t}\mathrm{d}W_t,\label{eq:general_diffusion_path_sde}
\end{equation}
for $t\in[0,1]$, $\gamma_t\geq0$, and $X_0\sim \mu_0$ (see Appendix \ref{app:subsec:stochastic_interpolants_dynamics}). The time-reversal of the \gls*{ou} process can be recovered by choosing $\gamma_t=\sigma^2{\dot{\lambda}_t}/({2\lambda_t})$ and $\lambda_t = \exp(-2T(1-t))$, where $T>0$, yielding $\dot{\lambda}_t/\lambda_t=2T$. However, the \gls*{ou} process requires an infinite time horizon to reach a Gaussian, and consequently $T$ has to be chosen large so $\mu_0 \approx \nu$. The SDE in \eqref{eq:general_diffusion_path_sde} also accommodates schedules that reach both endpoints in finite time \citep{albergo_stochastic_interpolants}, i.e. $\lambda_0 = 0$. While the drift in \eqref{eq:general_diffusion_path_sde} appears singular at $t=0$ unlike OU-type exponential schedules with constant ${\dot{\lambda_t}}/{\lambda_t}$, it can be alleviated by breaking down the score as an expectation and choosing a suitable schedule as we show later.

\glsreset{dald}
\textbf{Diffusion-annealed Langevin dynamics.} While the SDE in \eqref{eq:general_diffusion_path_sde} induces the diffusion path exactly, it requires a Gaussian base distribution. To also accommodate arbitrary base distributions, we consider \gls*{dald} \citep{cordero2025non} which runs the SDE
\begin{align}\label{eq:diffusion_annealed_langevin_dynamics}
    \mathrm{d}X_t = \frac{1}{\epsilon}\nabla\log\mu_{t}(X_t)\mathrm{d}t + \sqrt{\frac{2}{\epsilon}}\mathrm{d}W_t,
\end{align}
where $t \in [0, 1]$, $X_0 \sim \nu$, and $\epsilon\in (0,1]$. Here, $\epsilon$ controls how closely the SDE's true marginals track the reference path $(\mu_t)_{t\in[0,1]}$, i.e. a smaller $\epsilon$ results in dynamics that remain closer to the instantaneous target distributions. Both \eqref{eq:general_diffusion_path_sde} and \eqref{eq:diffusion_annealed_langevin_dynamics} have practical trade-offs which we discuss later.

\textbf{Score formulae.} Central to implementation of these processes is access to score functions $(\nabla\log\mu_t)_{t\in[0,1]}$. Throughout, we consider the case $\nu(x)=\mathcal{N}(x; 0, \sigma^2\mathbf{I})$, specifically. The path's convolutional structure admits two expressions for the score, known as the \gls*{dsi} \citep{vincent2011connection} and \gls*{tsi} \citep{de2024target}, respectively
\begin{align}\label{eq:initial_score_identities}
    \nabla\log\mu_t(x) &= \mathbb{E}_{\varrho_{t,x}}\left[\frac{\sqrt{\lambda_t}Y - x}{\sigma^2(1 - \lambda_t)}\right] = \mathbb{E}_{\varrho_{t,x}}\left[\frac{
{-}\nabla V_\pi(Y)}{\sqrt{\lambda_t}}\right],
\end{align}
where $V_\pi = -\log\pi$ is the target potential and $\varrho_{t,x}$ is the so-called posterior (conditional) distribution of the auxiliary (clean) variable $Y$ given (the noisy value) $x$:
\begin{equation}\label{eq:posterior_distribution}
    \varrho_{t,x}(y) \propto \nu\left(\frac{x - \sqrt{\lambda_t}y}{\sqrt{1 - \lambda_t}}\right)\pi(y).
\end{equation}
While the score is generally intractable, it can be learned by a neural network \citep{akhound2024iterated, vargas2024transport, richter2024improved, chen2025sequential, zhang2022path, noble2025learned} or estimated with Monte Carlo \citep{huang2023reverse, chendiffusive, grenioux2024stochastic, saremi2024chain, he2024zeroth, encinarsampling}. We follow the latter approach in this work and provide a discussion on related neural samplers in Appendix \ref{app:subsec:neural_samples_related_work}.

\subsection{Related Work}\label{sec:related_works}

\noindent\textbf{Diffusion-based samplers using Monte Carlo.} \citet{huang2023reverse} follow the time-reversed OU process and use importance sampling and Langevin MCMC to estimate the score. They initialise samples using a \gls*{ll} procedure by running \gls*{ula} to target $\mu_0$ using Monte Carlo score estimates which are themselves obtained via \gls*{ula}. \citet{grenioux2024stochastic} builds upon this work by introducing a stochastic localisation framework with flexible denoising schedules and a discretisation scheme based on the \gls*{snr} of the observation process. They start sampling at a later time, a hyperparameter, when both the path marginal and posterior are approximately log-concave. The reliance of these methods on MCMC, however, necessitates equilibration at every time step, creating a sequential bottleneck that cannot be overcome by simply increasing sample size. In an alternative line of work, \citet{he2024zeroth} propose rejection sampling for score estimation, only making zero-order queries to the target density. While effective in low dimensions, the complexity of rejection sampling grows exponentially with the dimension, even for log-concave targets. Notably, the methods discussed all incur errors coming from their score estimation or SDE discretisation. Introducing a wrapper around reverse-diffusion based samplers, \citet{wu2025reverse} use the auxiliary variables from the posterior to estimate the path marginal density. From which, they propose a nested \gls*{smc} sampler, RDSMC, where samples can be resampled to better adhere to the diffusion path marginals. In contrast, we employ an \gls*{smc} sampler \textit{only} for the auxiliary samples. While different, our scheme provides unbiased estimates for the path marginal density, unlike other samplers discussed above, allowing it to be nested inside RDSMC.

\noindent\textbf{Score identities.} Some works have found convex combinations of the score identities to be beneficial towards the variance of loss objectives \citep{de2024target, he2025training}. The \gls*{msi} is one example, assigning weights $1 - \lambda_t$ and $\lambda_t$ to \gls*{dsi} and \gls*{tsi}, respectively. This choice is intuitive as the identities become unstable at opposite ends. A piecewise schedule is similarly justified \citep{encinarsampling}. This leads us to examine more general schedules, namely those minimising the variance of score estimates. Concurrent to our work, \citet{kahouli2025control} and \citet{ko2025latent} propose a similar control variate score identity in the reverse diffusion context. However, their analyses do not provide practical estimates for arbitrary targets, opting to work with Gaussian mixtures \citep{kahouli2025control} or learning the schedule itself \citep{ko2025latent}. Moreover, we minimise the variance in expectation and consider matrix coefficients.

\section{Diffusion Path Sequential Monte Carlo}
\glsreset{ll}

\subsection{Discretisation of Diffusion Path Dynamics}
We describe here two schemes to numerically simulate the diffusion path dynamics in \eqref{eq:general_diffusion_path_sde} and \eqref{eq:diffusion_annealed_langevin_dynamics}.

\textbf{Initialisation.} Given we may start at some $\lambda_0>0$, we consider a \gls*{ll} initialisation \citep{grenioux2024stochastic}. We discretise the time interval as $0 = t_0=\dots=t_{N_\text{LL}}<\ldots<t_K=1$, freezing the time index for $N_{\text{LL}}$ steps, akin to doing \gls*{ll}. The remaining $k>N_{\text{LL}}$ are discretised uniformly, i.e. $t_k={(k-N_{\text{LL}})}/{(K-N_{\text{LL}})}$, leading to a sequence of distributions $(\mu_{t_k})_{k=0}^{K}$. To start, samples are drawn from the base distribution and undergo \gls*{ula} updates with step size $h_{\text{LL}}$ for $k<N_{\text{LL}}$.

\textbf{Discretisation of diffusion path SDE \eqref{eq:general_diffusion_path_sde}.} For $k\geq N_{\text{LL}}$, we use the OU schedule and the \gls*{ei} \citep{durmus2015quantitative} for \eqref{eq:general_diffusion_path_sde}, yielding the update
\begin{align}\label{eq:ou_discretisation}
    X_{k+1} = e^{hT} X_k + 2\sigma^2(e^{hT}-1)\nabla\log\mu_{t_k}(X_k) + \sigma\sqrt{e^{2hT}-1}\xi_k,
\end{align}
where $\xi_k \sim \mathcal{N}(0, \mathbf{I})$ and $h={1}/{(K-N_{\text{LL}})}$ is the step-size. We also consider an alternative schedule for the exact dynamics in \eqref{eq:general_diffusion_path_sde} but postpone our discussion to Section~\ref{subsec:path_design_choices}.

\textbf{Discretisation of \gls*{dald} \eqref{eq:diffusion_annealed_langevin_dynamics}.} For $k\geq N_{\text{LL}}$, we use an \gls*{em} discretisation for \eqref{eq:diffusion_annealed_langevin_dynamics}:
\begin{align}\label{eq:ald_discretisation}
X_{k+1} = X_k + \frac{h}{\epsilon}\nabla\log\mu_{t_k}(X_k) + \sqrt{\frac{2h}{\epsilon}}\xi_k,
\end{align}
where $\xi_k \sim \mathcal{N}(0, \mathbf{I})$ and $h={1}/{(K-N_{\text{LL}})}$ is the step-size.

These schemes require the score functions which are intractable but can be expressed as:
\begin{align}
    \nabla\log\mu_{t_k}(x) &= \mathbb{E}_{\varrho_{t_k,x}}[\varphi_{k,x}(Y)], \label{eq:score_test_function}
\end{align}
where $\varphi_{k,x}: \mathbb{R}^d \to\mathbb{R}^d$ is a vector-valued test function (choices of which can be judiciously made as we will discuss in Section~\ref{subsec:score_estimation}) and $\varrho_{t_k,x}$ is the posterior distribution defined in \eqref{eq:posterior_distribution}. The time-varying expectation in \eqref{eq:score_test_function} naturally suggests an \gls*{smc} scheme for score estimation, which we describe next.

\subsection{SMC Samplers for Score Estimation}\label{subsec:general_smc_posterior}
We define a sequence of target distributions $({\rho}_k)_{k=0}^{K}$, given by ${\rho}_k := \varrho_{t_k, X_k}$, where $X_k$ is our sample at time $t_k$ following our diffusion dynamics in \eqref{eq:ou_discretisation} or \eqref{eq:ald_discretisation}. To define the SMC sampler, we first define extended unnormalised distributions and our proposal distribution on the path space $\mathbb{R}^{d(k+1)}$
\begin{align*}
\tilde{\rho}_{0:k}(y_{0:k}) := \tilde{\rho}_k(y_k) \prod_{p=1}^{k}{\mathsf{L}_{p-1}(y_{p}, y_{p-1})},\quad
    {q}_{0:k}(y_{0:k}) := q_{0}(y_0) \prod_{p=1}^{k}{\mathsf{K}_{p}(y_{p-1}, y_{p})},
\end{align*}
where $(\mathsf{K}_{p})_{p=1}^{k}$ and $(\mathsf{L}_{p})_{p=0}^{k-1}$ are a sequence of forward and backward Markov kernels,
\begin{equation*}
    \tilde{\rho}_k(y)=\frac{1}{\sqrt{1-\lambda_{\smash{t_k}}}^d}\nu\left(\frac{X_k-\sqrt{\lambda_{\smash{t_k}}}y}{\sqrt{{1-\lambda_{\smash{t_k}}}}}\right)\pi(y),
\end{equation*}
and $q_0$ is some initial auxiliary distribution. It can be readily checked that the $y_k$-marginal of (normalised) $\rho_{0:k}$ is indeed ${\rho}_k$, justifying the choice of the extended target. The resulting importance weights are
\begin{align*}
    W_{k}(y_{0:k}) &:= \frac{\tilde{\rho}_{0:k}(y_{0:k})}{q_{0:k}(y_{0:k})}= W_{k-1}(y_{0:k-1}) G_{k}(y_{k-1}, y_k),
\end{align*}
where we have the potential $G_k: \mathbb{R}^d \times \mathbb{R}^d \to \mathbb{R}_+$:
\begin{align*}
    G_{k}(y_{k-1}, y_k) = \frac{\tilde{\rho}_{k}(y_{k})\mathsf{L}_{k-1}(y_{k}, y_{k-1})}{\tilde{\rho}_{k-1}(y_{k-1}) \mathsf{K}_{k}(y_{k-1}, y_{k})}.
\end{align*}
We then have the following result for through the Feynman-Kac formulae \citep{del2004feynman,del2013mean}.

\begin{restatable}{proposition}{fkIdentity}\label{prop:fk_identity} 
    For any $k \geq 1$, let $Y_{0:k} \sim q_{0:k}$. If $\varphi_{k,x}$ forms a score identity as in \eqref{eq:score_test_function}, then we have
    \begin{align*}
        \nabla \log {\mu}_{t_k}(X_k) &= \frac{\mathbb{E}[\varphi_{k,X_k}(Y_{k})\prod_{p=0}^{k}{G_{p}(Y_{p-1}, Y_{p})}]}{\mathbb{E}[\prod_{p=0}^{k}{G_{p}(Y_{p-1}, Y_{p})}]},\quad \mu_{t_k}(X_k) = \mathbb{E}\left[\prod_{p=0}^{k}{G_{p}(Y_{p-1}, Y_{p})}\right]
    \end{align*}
    where $G_0(y_{-1}, y_0) \coloneqq \tilde{\rho}_0(y_0)/q_0(y_0)$ and the expectations are taken with respect to the path measure induced by the forward kernels $(\mathsf{K}_p)_{p=1}^k$ and initial distribution $q_{0}$.
\end{restatable}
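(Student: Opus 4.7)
The plan is to show this is essentially a standard importance-sampling identity on the extended path space $\mathbb{R}^{d(k+1)}$, where the construction of $\tilde\rho_{0:k}$ and $q_{0:k}$ is precisely designed so that the ratio of their densities telescopes into the product of the potentials $G_p$. Given the hypothesis that $\varphi_{k,x}$ forms a score identity, i.e.\ $\nabla\log\hat\mu_{t_k}(x_k)=\mathbb{E}_{\rho_k}[\varphi_{k,x_k}(Y_k)]$, the task reduces to expressing this expectation via the proposal path measure and the unnormalised weights.

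First I would verify the telescoping: starting from the definitions of $\tilde\rho_{0:k}$ and $q_{0:k}$, one has
\begin{align*}
\frac{\tilde\rho_{0:k}(y_{0:k})}{q_{0:k}(y_{0:k})}
= \frac{\tilde\rho_k(y_k)\prod_{p=1}^{k}\mathsf{L}_{p-1}(y_p,y_{p-1})}{q_0(y_0)\prod_{p=1}^{k}\mathsf{K}_p(y_{p-1},y_p)},
\end{align*}
and inserting the factor $\tilde\rho_{p-1}(y_{p-1})/\tilde\rho_{p-1}(y_{p-1})$ for each $p$ rearranges this into $\prod_{p=0}^{k}G_p(y_{p-1},y_p)$ with $G_0=\tilde\rho_0/q_0$ as defined. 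Next I would check that the $y_k$-marginal of $\tilde\rho_{0:k}$ equals $\tilde\rho_k$, which holds because each backward kernel $\mathsf{L}_{p-1}(y_p,\cdot)$ is assumed to be a Markov transition density integrating to one in its second argument, so integrating out $y_0,\dots,y_{k-1}$ collapses the product.

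Combining these two observations with a standard self-normalised importance-sampling argument yields, for any integrable test function $\psi$,
\begin{align*}
\mathbb{E}_{\rho_k}[\psi(Y_k)]
= \frac{\int \psi(y_k)\tilde\rho_{0:k}(y_{0:k})\,\mathrm{d}y_{0:k}}{\int \tilde\rho_{0:k}(y_{0:k})\,\mathrm{d}y_{0:k}}
= \frac{\mathbb{E}_{q_{0:k}}\bigl[\psi(Y_k)\prod_{p=0}^{k}G_p(Y_{p-1},Y_p)\bigr]}{\mathbb{E}_{q_{0:k}}\bigl[\prod_{p=0}^{k}G_p(Y_{p-1},Y_p)\bigr]}.
\end{align*}
Applying this with $\psi=\varphi_{k,x_k}$ (componentwise in $\mathbb{R}^d$) and invoking the score-identity hypothesis on the left-hand side delivers the stated formula.

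There is really no serious obstacle: the only delicate point is ensuring the normalising constants are well-defined and that Fubini applies when swapping the order of integration between the auxiliary variables $y_{0:k-1}$ and $y_k$, which amounts to mild integrability assumptions on $\varphi_{k,x_k}$ and on the product $\tilde\rho_k\prod\mathsf{L}_{p-1}$. I would state these implicitly as part of the hypothesis that the expectations on the right-hand side exist, so that the proof reduces to the two algebraic steps above.
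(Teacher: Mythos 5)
Your proposal is correct and follows essentially the same route as the paper's proof: telescope the product of potentials into $\tilde{\rho}_{0:k}/q_{0:k}$, cancel the proposal density in the ratio of expectations, marginalise down to $\rho_k$ using that the backward kernels integrate to one, and invoke the score-identity hypothesis. The only difference is that you make explicit the marginalisation step and the integrability caveats that the paper leaves implicit.
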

See Appendix~\ref{app:proof:fk_identity_proof} for the proof. Proposition~\ref{prop:fk_identity} therefore expresses the score and marginal density in terms of expectations under the forward path measure, which can be estimated directly with an \gls*{smc} sampler. Next, we outline our sampling method based on this construction.

\begin{wrapfigure}[29]{r}{0.52\textwidth}
    \begin{minipage}{0.52\textwidth}
{
\vspace{-2.25em}
\begin{algorithm}[H]
    \caption{DPSMC}
    \label{alg:diffusion_path_sampling_smc}
    \begin{algorithmic}[1]
        \INPUT $\sigma^2$, $q_{0}$, $K$, $N_\text{LL}$, $h=\frac{1}{K-N_{\mathrm{LL}}}$, $h_{\text{LL}}$, $\varphi_{k,x}$, $\kappa$
        \STATE $X_0 \sim \mathcal{N}(0, \sigma^2\mathbf{I})$, $Y^{i}_0 \sim q_{0}$ for all $i\in[N]$
        \STATE $W^{i}_0 = \tilde{\rho}_0(Y_0^i)/q_{0}(Y_0^i)$ for all $i\in[N]$
        \STATE $w^{i}_{0} = {W_0^i}/{\sum_{j=1}^{N} W_0^j}$ for all $i\in[N]$
        \STATE  $S_{0}^N=\sum_{i=1}^{N}{w^{i}_{0} \varphi_{0,X_{0}}({Y}_{0}^{i})}$
        
        \FOR{$k=1,\ldots,K-1$}
                \IF{$k< N_{\text{LL}}$}
                    \STATE $X_k \sim \mathcal{N}(X_{k-1}+h_{\text{LL}}S_{k-1}^N,2h_{\text{LL}}\mathbf{I})${\COMMENT{ULA step}}
                \ELSE
                    \STATE $X_k \sim \mathrm{sdeint}(X_{k-1}, S_{k-1}^N, h)$ {\COMMENT{SDE step}}
                \ENDIF
                \FOR{$i=1,\ldots,N$}
                    \STATE $\bar{Y}^{i}_{k} \sim \mathsf{K}_{k}(\cdot \mid Y_{k-1}^{i}, X_{k})$
                    \STATE $W_k^i = W_{k-1}^i \frac{\tilde{\rho}_{k}(\bar{Y}_{k}^{i})\mathsf{L}_{k-1}(\bar{Y}_{k}^{i}, Y_{k-1}^{i})}{\tilde{\rho}_{k-1}(Y_{k-1}^{i}) \mathsf{K}_{k}(Y_{k-1}^{i}, \bar{Y}_{k}^{i})}$
                \ENDFOR
                \STATE $w^{i}_{k} = {W_k^i}/{\sum_{j=1}^{N} W_k^j}$ for all $i\in[N]$
                \STATE  $S_{k}^N=\sum_{i=1}^{N}{w^{i}_{k} \varphi_{k,X_{k}}(\bar{Y}_{k}^{i})}$ {\COMMENT{Score estimation}}
                \IF{$\widehat{\mathrm{ESS}}(\{w^{i}_k\}_{i=1}^{N}) < \kappa N$}
                    \STATE $\{Y^{i}_{k}\}_{i=1}^{N} \sim \sum_{j=1}^{N} {w_k^{j}\delta_{\bar{Y}_{k}^{j}}}$ {\COMMENT{Resampling}}
                    \STATE $W_k^i = 1$ for all $i \in[N]$
                \ELSE
                    \STATE $Y_k^i = \bar{Y}_{k}^i$ for all $i\in[N]$
                \ENDIF
            \ENDFOR
            \STATE $X_K \sim \mathrm{sdeint}(X_{K-1}, S_{K-1}^N, h)$ {\COMMENT{SDE step}}
    \end{algorithmic}
\end{algorithm}
}
\end{minipage}
\end{wrapfigure}

\subsection{The DPSMC Sampler}

The previous framework leads us to introduce our \gls*{dpsmc} sampler. We walk through a single iteration of our algorithm to demonstrate. At time $t_{k-1}$, we have: our sample $X_{k-1}$, a result of integrating the SDE for $k-1$ steps using estimated scores; a weighted ensemble $\{(W_{k-1}^i, Y_{k-1}^{i})\}_{i=1}^N$, an output of our \gls*{smc} sampler targeting $\rho_{k-1}$; and a score estimate $S_{k-1}^N$, formed by a Monte Carlo approximation of \eqref{eq:score_test_function} using the weighted ensemble and test function $\varphi_{k-1,X_{k-1}}$. At time $t_k$, we first propagate our sample to $X_{k}$ with an integration step using the current score estimate. Then, we run one iteration of the \gls*{smc} sampler to target $\rho_k$. This involves propagating each $Y_{k-1}^i$ according to a forward kernel $\mathsf{K}_{k}(\cdot\mid Y_{k-1}^{i}, X_k)$ and updating the weights with the potential. We resample auxiliary variables if necessary to then obtain the new weighted ensemble $\{(W_k^i, Y_k^i)\}_{i=1}^{N}$ targeting $\rho_k$. A new score estimate $S_k^N$ is produced using the new ensemble and test function $\varphi_{k,X_k}$. The procedure is repeated until time $t_{K-1}$, after which we perform one final integration step to obtain $X_K$. The overall scheme is summarised in Algorithm \ref{alg:diffusion_path_sampling_smc}. Note that for $\lambda_0>0$ we perform $N_{\text{LL}}$ ULA steps that constitute an \gls*{ll} initialisation---the scores of which are estimated using our framework. With a mechanism for score estimation, we next investigate minimising the variance of estimates.

\subsection{Score Identities}\label{subsec:score_estimation}

We now look at stable choices for test functions $\varphi_{k,x}$ that form score identities as in \eqref{eq:score_test_function}. Building upon \gls*{dsi} and \gls*{tsi}, as in \eqref{eq:initial_score_identities}, we introduce a class of test functions
\begin{align}
    {\phi}^{\mathbf{A}}_{t,x}(y) := \frac{1}{\sqrt{1 - \lambda_t}} \mathbf{A} & \nabla\log\nu\left(\frac{x - \sqrt{\lambda_t}y}{\sqrt{1 - \lambda_t}}\right) +\frac{1}{\sqrt{\lambda_t}} (\mathbf{I} - \mathbf{A}) \nabla\log\pi(y), \label{eq:general_cv_score_test_function}
\end{align}
for $t\in[0,1]$ and $\mathbf{A}\in\mathbb{R}^{d\times d}$, which indeed satisfy $\nabla\log\mu_t(x) = \mathbb{E}_{\varrho_{t,x}}[\phi_{t,x}^{\mathbf{A}}(Y)]$ (see Proposition~\ref{app:prop:convex_combination_matrix} for a proof). In practice, we set $\varphi_{k,x} = {\phi}_{t_k,x}^{\mathbf{A}}$ at time step $k$. We are interested in the variance behaviour of these test functions under the true marginals, parameterised by matrix schedules $\mathbf{A}_t \in \mathbb{R}^{d\times d}$. We refer to these as \textit{control variate} (CV) schedules, given we can arrive at the same test function in \eqref{eq:general_cv_score_test_function} by introducing a control variate matrix (see Remark~\ref{app:remark:matrix_control_variate_identity}). We first consider the case when $\mathbf{A}_t = \alpha_t\mathbf{I}$, i.e. a scalar weighting of \gls*{dsi} and \gls*{tsi}. The following result characterises the optimal choice of $\alpha_t$ for minimising the \textit{expected} variance of the corresponding score estimator.

\begin{restatable}{proposition}{propOptimalScalarCVSchedule}\label{prop:optimal_scalar_cv_schedule}
    The scalar CV schedule $\alpha_t^\ast\in \mathbb{R}$ minimising the expected variance is given by
    \begin{align*}
        \alpha_t^\ast
        &= {\argmin_{\alpha\in\mathbb{R}}}\ {\mathbb{E}_{X\sim \mu_t}[\mathrm{Var}_{Y\sim \varrho_{t,X}}[{\phi}^{\alpha\mathbf{I}}_{t,X}(Y)]]}
        = \frac{\frac{1}{\lambda_t}\mathrm{Var}_\pi[\nabla\log\pi(X)]}{\frac{1}{1-\lambda_t}\mathrm{Var}_\nu[\nabla\log\nu(X)] + \frac{1}{\lambda_t}\mathrm{Var}_\pi[\nabla\log\pi(X)]}.
    \end{align*}
\end{restatable}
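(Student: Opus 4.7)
The plan is to decompose $\phi_{t,x}^{\alpha\mathbf{I}}$ as a convex combination $\alpha U_t(x,y) + (1-\alpha) V_t(y)$ of the \gls{dsi} part $U_t(x,y) = (1-\lambda_t)^{-1/2}\nabla\log\nu((x-\sqrt{\lambda_t}y)/\sqrt{1-\lambda_t})$ and the \gls{tsi} part $V_t(y) = \lambda_t^{-1/2}\nabla\log\pi(y)$, and exploit an auxiliary joint representation. The key idea is to rewrite the joint law $(X,Y)\sim \mu_t\otimes \varrho_{t,X}$ via the interpolation identity \eqref{eq:marginal_as_interpolation}: equivalently $Y\sim\pi$ and $Z\sim\nu$ independently, with $X := \sqrt{\lambda_t}Y + \sqrt{1-\lambda_t}Z$. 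Under this coupling, $U_t(X,Y)$ depends only on $Z$ while $V_t(Y)$ depends only on $Y$, so the two are independent in the joint; in particular $\mathrm{Var}_\text{joint}[U_t] = (1-\lambda_t)^{-1}\mathrm{Var}_\nu[\nabla\log\nu(Z)]$, $\mathrm{Var}_\text{joint}[V_t] = \lambda_t^{-1}\mathrm{Var}_\pi[\nabla\log\pi(Y)]$, and $\mathrm{Cov}_\text{joint}[U_t,V_t] = 0$ (all in the trace sense).

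Next, I apply the law of total (trace) variance/covariance to separate inner and outer fluctuations. Because both $U_t$ and $V_t$ are score identities, their conditional means given $X$ equal $\nabla\log\mu_t(X)$, so the outer contribution to the joint variance of each is $\sigma_\star^2 := \mathrm{Var}_{\mu_t}[\nabla\log\mu_t(X)]$, and the outer covariance term is also $\sigma_\star^2$. Writing $A := \mathbb{E}_X[\mathrm{Var}_Y[U_t\mid X]]$, $B := \mathbb{E}_X[\mathrm{Var}_Y[V_t\mid X]]$, $C := \mathbb{E}_X[\mathrm{Cov}_Y[U_t,V_t\mid X]]$, the total decompositions give
\[ A = \tfrac{1}{1-\lambda_t}\mathrm{Var}_\nu[\nabla\log\nu(Z)] - \sigma_\star^2, \quad B = \tfrac{1}{\lambda_t}\mathrm{Var}_\pi[\nabla\log\pi(Y)] - \sigma_\star^2, \quad C = -\sigma_\star^2. \]

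Expanding bilinearly, the expected conditional trace-variance of the test function is
\[ \mathbb{E}_X\bigl[\mathrm{Var}_Y[\alpha U_t + (1-\alpha)V_t\mid X]\bigr] = \alpha^2 A + (1-\alpha)^2 B + 2\alpha(1-\alpha)C, \]
a convex quadratic in $\alpha$ whose minimizer is $\alpha_t^\ast = (B-C)/(A+B-2C)$. Substituting the expressions above, the $\sigma_\star^2$ terms cancel in the numerator ($B-C = \lambda_t^{-1}\mathrm{Var}_\pi[\nabla\log\pi]$) and in the denominator ($A+B-2C = (1-\lambda_t)^{-1}\mathrm{Var}_\nu[\nabla\log\nu] + \lambda_t^{-1}\mathrm{Var}_\pi[\nabla\log\pi]$), which recovers exactly the claimed formula.

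The main obstacle is really bookkeeping rather than analysis: one must consistently distinguish joint, conditional, and marginal expectations, and recognise that the decisive input is the independence of $U_t$ and $V_t$ under the interpolation coupling. That independence forces $\mathrm{Cov}_\text{joint}[U_t,V_t]=0$ and hence $C = -\sigma_\star^2$, which is precisely the algebraic mechanism by which all $\sigma_\star^2$ contributions cancel and the answer becomes expressible purely in terms of the marginal score variances of $\nu$ and $\pi$.
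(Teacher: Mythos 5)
Your proof is correct and reaches the stated formula, but by a genuinely different route from the paper's. The paper writes down the first-order condition for the quadratic in $\alpha$ directly, obtaining $\alpha_t^\ast$ as a ratio of double expectations involving $\nabla\log\varrho_{t,X}(Y)$, and then evaluates those double expectations via a Stein's-lemma/marginalisation identity (Lemma~\ref{lemma:double_expectation_results}) that converts them into the marginal score second moments of $\pi$ and $\nu$. You instead pass to the interpolation coupling $X=\sqrt{\lambda_t}Y+\sqrt{1-\lambda_t}Z$ with $Y\sim\pi$ and $Z\sim\nu$ independent---which is exactly the joint law $\mu_t(x)\varrho_{t,x}(y)$, as the definition in \eqref{eq:posterior_distribution} confirms---and observe that the DSI component is a function of $Z$ alone while the TSI component is a function of $Y$ alone, hence they are independent under the joint. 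The law of total variance/covariance, together with the fact that both components have conditional mean $\nabla\log\mu_t(X)$, then gives $A$, $B$, $C$ in closed form, and the $\sigma_\star^2$ terms cancel in $(B-C)/(A+B-2C)$. Your route makes the cancellation mechanism transparent and replaces the explicit integration by parts with a purely probabilistic decomposition; it extends equally well to the diagonal and matrix schedules (Propositions~\ref{prop:diagonal_cv_schedule} and~\ref{prop:optimal_matrix_cv_schedule}), since independence also kills the full cross-covariance matrix. The only step worth making explicit is strict convexity: the leading coefficient is
\begin{align*}
A+B-2C \;=\; \mathbb{E}_X\bigl[\mathrm{Var}_Y[U_t-V_t\mid X]\bigr] \;=\; \tfrac{1}{1-\lambda_t}\mathrm{Var}_\nu[\nabla\log\nu] + \tfrac{1}{\lambda_t}\mathrm{Var}_\pi[\nabla\log\pi] \;>\;0,
\end{align*}
so the critical point is indeed the global minimiser (note $U_t-V_t=-\lambda_t^{-1/2}\nabla\log\varrho_{t,X}(Y)$, which is exactly the denominator appearing in the paper's version of the argument).
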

See Appendix~\ref{app:proof:optimal_scalar_cv_schedule} for the proof.
This expression decouples the dependence of time from statistics of the endpoint distributions. It aligns with our intuition of avoiding the instabilities at both ends, having $\alpha_t^\ast$ be non-increasing and going from one to zero. Unlike heuristic choices that start with DSI and end with TSI, this result shows explicitly when one is favoured, e.g. it is possible that $\alpha_t^\ast \approx 1$ for majority of the time when the target has much larger score variance than the base. When they coincide in score variance, we recover $\alpha_t^\ast = 1 - \lambda_t$, i.e. \gls*{msi}. For matrices, we have the next result.

\begin{restatable}{proposition}{propOptimalCVSchedule}\label{prop:optimal_matrix_cv_schedule}
    The matrix CV schedule $\mathbf{A}_t^\ast\in\mathbb{R}^{d\times d}$ minimising the expected variance is given by
    \begin{align*}
        \mathbf{A}_t^\ast &= {\argmin_{\mathbf{A}\in\mathbb{R}^{d\times d}}}\ \mathbb{E}_{X\sim \mu_t}[\mathrm{Var}_{Y\sim\varrho_{t,X}}[{\phi}_{t,X}^{\mathbf{A}}(Y)]] = \frac{1}{\lambda_t}\mathcal{I}_\pi\left(\frac{1}{1-\lambda_t}\mathcal{I}_\nu + \frac{1}{\lambda_t} \mathcal{I}_\pi\right)^{-1},
    \end{align*}
    where $\mathcal{I}_p :=\mathrm{Cov}_p[\nabla\log p(X)]=\mathbb{E}_p[\nabla\log p(X)\nabla\log p(X)^\top]$ denotes the score covariance of $p$.
\end{restatable}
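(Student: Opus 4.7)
The plan is to exploit the stochastic-interpolant representation $X = \sqrt{\lambda_t}Y + \sqrt{1-\lambda_t}Z$ with $Y \sim \pi$ and $Z \sim \nu$ independent (so that the conditional law of $Y$ given $X$ is $\varrho_{t,X}$) in order to reduce the optimisation to an unconstrained quadratic program in $\mathbf{A}$. Setting $U := (1-\lambda_t)^{-1/2}\nabla\log\nu((X-\sqrt{\lambda_t}Y)/\sqrt{1-\lambda_t})$ and $V := \lambda_t^{-1/2}\nabla\log\pi(Y)$, we have $\phi^{\mathbf{A}}_{t,X}(Y) = \mathbf{A}U + (\mathbf{I}-\mathbf{A})V$. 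Since \gls{dsi} and \gls{tsi} both give $\mathbb{E}_{\varrho_{t,X}}[U] = \mathbb{E}_{\varrho_{t,X}}[V] = \nabla\log\mu_t(X)$, the estimator is unbiased for every $\mathbf{A}$, so the objective is a pure trace-of-covariance quantity in $\mathbf{A}$.

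Next I would expand the conditional covariance, take trace, and pass the $X$-expectation inside, obtaining
\begin{align*}
\mathbb{E}_{X\sim\mu_t}\!\bigl[\mathrm{Tr}(\mathrm{Cov}_{\varrho_{t,X}}[\phi^{\mathbf{A}}_{t,X}(Y)])\bigr] = \mathrm{Tr}(\mathbf{A}\Sigma_U\mathbf{A}^\top) + \mathrm{Tr}((\mathbf{I}-\mathbf{A})\Sigma_V(\mathbf{I}-\mathbf{A})^\top) + 2\,\mathrm{Tr}(\mathbf{A}\Sigma_{UV}(\mathbf{I}-\mathbf{A})^\top),
\end{align*}
where $\Sigma_U,\Sigma_V,\Sigma_{UV}$ are the $X$-averaged conditional (cross-)covariances of $U$ and $V$. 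Under the interpolant, $U$ depends only on $Z$ and $V$ only on $Y$; since $Y$ and $Z$ are independent under the joint and both scores integrate to zero under their own marginals, $\mathbb{E}[UU^\top] = \mathcal{I}_\nu/(1-\lambda_t)$, $\mathbb{E}[VV^\top] = \mathcal{I}_\pi/\lambda_t$ and $\mathbb{E}[UV^\top] = 0$. Combining these with the shared conditional mean $\nabla\log\mu_t(X)$ via the law of total covariance yields $\Sigma_U = \mathcal{I}_\nu/(1-\lambda_t) - \mathcal{I}_{\mu_t}$, $\Sigma_V = \mathcal{I}_\pi/\lambda_t - \mathcal{I}_{\mu_t}$ and $\Sigma_{UV} = -\mathcal{I}_{\mu_t}$, writing $\mathcal{I}_{\mu_t} := \mathbb{E}_{\mu_t}[\nabla\log\mu_t(X)\nabla\log\mu_t(X)^\top]$.

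The key step is the cancellation of every $\mathcal{I}_{\mu_t}$ contribution once the pieces are assembled. Up to a constant in $\mathbf{A}$, the objective reduces to $\mathrm{Tr}(\mathbf{A}M\mathbf{A}^\top) - 2\,\mathrm{Tr}(\mathbf{A}N)$ with $M := \Sigma_U + \Sigma_V - 2\Sigma_{UV} = \mathcal{I}_\nu/(1-\lambda_t) + \mathcal{I}_\pi/\lambda_t$ and $N := \Sigma_V - \Sigma_{UV} = \mathcal{I}_\pi/\lambda_t$, both symmetric. Under standard regularity ($M \succ 0$) this is strictly convex, and setting the matrix gradient to zero gives $2\mathbf{A}M - 2N = 0$, i.e.\ $\mathbf{A}_t^{\ast} = NM^{-1}$, which is exactly the stated formula.

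The main obstacle is recognising this cancellation: a naive expansion keeps $\mathcal{I}_{\mu_t}$ throughout and one would expect it to survive in the optimum, which would be bad since $\mathcal{I}_{\mu_t}$ is intractable. Its disappearance is driven entirely by the cross term $\Sigma_{UV} = -\mathcal{I}_{\mu_t}$, itself a consequence of the independence of $U$ and $V$ under the joint law, which offsets the $\mathcal{I}_{\mu_t}$ contributions in $\Sigma_U$ and $\Sigma_V$. Secondary technicalities, namely integration by parts to obtain $\mathbb{E}_\pi[\nabla\log\pi]=0$ (and similarly for $\nu$) and invertibility of $M$, are standard under the regularity implicit in Proposition~\ref{prop:optimal_scalar_cv_schedule}.
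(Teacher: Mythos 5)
Your proof is correct and arrives at the paper's formula, but it computes the key second moments by a genuinely different mechanism. The paper works with $\mathbb{E}_{\mu_t}[\mathbb{E}_{\varrho_{t,X}}[\lVert \mathbf{A}s_1 + (\mathbf{I}-\mathbf{A})s_2\rVert^2]]$ directly (the conditional mean $\nabla\log\mu_t(X)$ does not depend on $\mathbf{A}$, so the squared-mean term is discarded as a constant and $\mathcal{I}_{\mu_t}$ never appears), differentiates entrywise in $A_{ij}$, and evaluates the resulting cross-moments $\mathbb{E}[s_1(s_1-s_2)^\top]$ and $\mathbb{E}[s_2(s_1-s_2)^\top]$ via a Stein's-lemma-plus-marginalisation identity (Lemma~\ref{lemma:double_expectation_results}), exploiting $s_1 - s_2 = -\lambda_t^{-1/2}\nabla\log\varrho_{t,x}$. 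You instead stay with the conditional covariance, invoke the law of total covariance, and read the unconditional moments off the interpolant coupling: $U$ is a function of $Z$ alone and $V$ of $Y$ alone, so independence together with $\mathbb{E}_\nu[\nabla\log\nu]=\mathbb{E}_\pi[\nabla\log\pi]=0$ gives $\mathbb{E}[UU^\top]=\mathcal{I}_\nu/(1-\lambda_t)$, $\mathbb{E}[VV^\top]=\mathcal{I}_\pi/\lambda_t$, $\mathbb{E}[UV^\top]=0$, after which the $\mathcal{I}_{\mu_t}$ contributions cancel exactly as you describe. The two are equivalent in content --- your $\mathbb{E}[VU^\top]-\mathbb{E}[VV^\top]=-\mathcal{I}_\pi/\lambda_t$ is the same statement as the paper's $\mathbb{E}_{\mu_t}[\mathbb{E}_{\varrho_{t,X}}[\nabla\log\pi(Y)\nabla\log\varrho_{t,X}(Y)^\top]]=\mathcal{I}_\pi$ --- but your route buys an index-free derivation that makes the disappearance of the intractable $\mathcal{I}_{\mu_t}$ explicit rather than simply absorbed into a constant, while the paper's route buys a reusable lemma that also powers the scalar and diagonal schedules and directly justifies the practical Monte Carlo estimator of $\mathcal{I}_\pi$ described in Section~\ref{subsec:score_estimation}. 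Your closing convexity remark ($M\succ 0$ implies strict convexity of the vectorised quadratic) matches the paper's block-diagonal Hessian verification, and the regularity you flag (zero-mean scores, invertibility of $M$) is the same implicit regularity the paper relies on.
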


\begin{wrapfigure}[15]{r}{0.4\textwidth}
    \vspace{-0.5em}
    \hfill\resizebox{0.4\textwidth}{!}{\input{media/pgfs/cv_updated.pgf}}\hfill
    \vspace{-0.5em}
    \caption{\textit{Average MSE of different score identities using exact samples from diffusion path marginals and auxiliary variables from their conditional distribution.} MCVSI and MMCVSI consistently remain under the DSI/TSI lower envelope.}
    \label{fig:mse_of_score_estimators}
\end{wrapfigure}
See Appendix~\ref{app:proof:optimal_matrix_cv_schedule} for the proof. This optimal weighting accounts for the pairwise covariances of individual score components. Note the scalar schedule averages them out. We therefore expect a discrepancy between the scalar and matrix schedules when the base and target have a covariance mismatch, e.g. for extremely anisotropic targets. An example is given in Figure \ref{fig:mse_of_score_estimators}, documenting the MSE evolution of different score identities. Ideally, one would choose base densities with a similar covariance structure as the target to maximise the effectiveness of scalar schedules.

In practice, we need estimates of varying degrees for the target score covariance, i.e. either its trace, diagonal, or the full matrix. By Lemma \ref{lemma:double_expectation_results} in Appendix, we can write
\begin{align*}
    \mathcal{I}_\pi = \mathbb{E}_{X\sim\mu_t}[\mathbb{E}_{Y\sim\varrho_{t,X}}[\nabla\log\pi(Y)\nabla\log\varrho_{t,X}(Y)^\top]],
\end{align*}
which is amenable to Monte Carlo estimation. We use the empirical distribution formed by our samples as a proxy for $\mu_t$ and our auxiliary variables for the inner expectation. This incurs no additional energy evaluations, as they can be reused for the score estimate.

Following \citep{kahouli2025control}, we call the identities formed by our schedules as \textit{marginalised control-variate score identities} (MCVSI) and refer to the matrix case as MMCVSI. We can minimise the estimator's variance for each sample $x$ but do so in expectation. This admits an interpretable form, indicating trade-offs between scalar, diagonal, and full-matrix control-variate coefficients, while practically being more stable, as the target score covariance is a static unknown and all auxiliary variables can be used for estimation. Further, while our score estimates via SMC are biased due to self-normalisation, we empirically find our schedules consistently improve the score estimates. This can be justified, since the variance of the unbiased estimator coincides with the asymptotic variance of the self-normalised estimator when the proposal matches the target. See Appendix \ref{app:subsec:score_identities_ablation} for ablations on score identities.

\subsection{Diffusion Path Design}\label{subsec:path_design_choices}

Our diffusion path is parameterised by the base Gaussian variance $\sigma^2$ and the schedule $\lambda_t$. It is not surprising that poor choices for these parameters can render the sampler inefficient.

\fparagraph{Base Gaussian variance.} If $\sigma^2$ is too small, then $\mu_t(x) \approx \pi\left(x/\sqrt{\lambda_t}\right)/\sqrt{\lambda_t}^d$, causing the mode structure to be determined within a few time steps and the remainder of the path spent dilating the modes. If it is too large, the path marginals become indistinguishable from a Gaussian for most of the path. To get around this, we consult the \gls*{snr} of our stochastic process. We define
\begin{align*}
    \mathrm{SNR}(t) := \frac{\mathbb{E}[\lVert \sqrt{\lambda_t} X\rVert^2]}{\mathbb{E}[\lVert \sqrt{1 - \lambda_t} Z\rVert^2]} = \frac{\mathbb{E}_\pi[\lVert X\rVert^2]\lambda_t}{\sigma^2d(1 - \lambda_t)},
\end{align*}
using the power or second moment of the target $\mathbb{E}_\pi[\lVert X\rVert^2]$ as opposed to its scalar variance \citep{grenioux2024stochastic}. Here, choosing $\sigma^2 \propto \mathbb{E}_\pi[\lVert X\rVert^2] / d$ yields an SNR independent of the target. We set $\sigma^2 = \mathbb{E}_\pi[\lVert X\rVert^2] / d$, as it is the most natural choice, keeping the second moment fixed across the path marginals.

\fparagraph{Diffusion path schedule.} We seek a $\lambda_t$ schedule for \gls*{dald} and an alternative to the OU schedule for the exact dynamics in \eqref{eq:general_diffusion_path_sde}. Intuitively, the \gls*{dald} schedule should spend more time where the path is rapidly changing and less where it does not. Let $\mu = (\mu_t)_{t\in[0,1]}$ denote the curve of probability measures corresponding to the diffusion path. We can consider the action functional $\mathcal{A}$ as a measure of how rapidly the path marginals evolve throughout the curve. This quantity upper bounds the KL divergence between the target and final marginal distribution of \gls*{dald} \citep{guo2024provable,cordero2025non} (see Appendix~\ref{sec_app:action_analysis} for details). While choosing $\lambda_t$ to minimise it is a natural choice, the action is generally intractable, so we instead minimise the upper bound functional
\begin{align}
    \mathcal{A}(\mu) := \int_0^T \lvert\dot{\mu}\rvert_t^2 \mathrm{d}t \leq \int_0^T\left( \frac{\sigma^2d}{4(1 - \lambda_t)} +\frac{\mathbb{E}_\pi[\lVert X\rVert^2]}{4\lambda_t}\right) \dot{\lambda}_t^2\mathrm{d}t,
\end{align}
where $\lvert\dot{\mu}\rvert_t = \lim_{\delta\rightarrow 0} {W_2(\mu_{t+\delta}, \mu_t)}/{\lvert\delta\rvert}$ is the metric derivative of $\mu$ at time $t$. The minimiser is available analytically when $\sigma^2 = \mathbb{E}_\pi[\lVert X\rVert^2]/d$ and is given by $$\lambda_t^\ast = \sin^2\left(\tfrac{\pi t}{2}\right) = 1 - \cos^2\left(\tfrac{\pi t}{2}\right),$$ 
which is the well-known cosine schedule (see Appendix~\ref{subsec_app:action_minimising_schedule} for the proof). This schedule eases in and out, allowing more time at the start for samples to localise in modes and at the end for details to be gradually refined. This also provides a handy finite upper bound on the action.

Previously, we hinted that a non-OU schedule with $\lambda_0=0$ can be chosen to admit an exact SDE as in \eqref{eq:general_diffusion_path_sde} that induces the diffusion path. With DSI, we can write the problematic term as
\begin{equation*}
    \frac{\dot{\lambda}_t}{2\lambda_t}\left[X_t+\sigma^2\nabla\log\mu_t(X_t)\right] = \frac{\dot{\lambda}_t}{2\sqrt{\lambda_t}}\mathbb{E}_{\varrho_{t,X_t}}\left[\frac{Y-\sqrt{\lambda}_t X}{1-\lambda_t}\right],
\end{equation*}
and now we simply require $\dot{\lambda}_t/\sqrt{\lambda_t}$ remaining finite at $t=0$. While there are several options, we set $\gamma_t=\sigma^2$ and $\lambda_t=\sin^2(\frac{\pi t}{2})$ to match the diffusion coefficient of the OU process while following the curve at the same speed as \gls*{dald}. Notably, this allows us to start exactly from a Gaussian while preserving exact dynamics, rendering it free from any critical hyperparameters like $T$ and $\epsilon$. We refer to this as following \gls*{si} dynamics \citep{albergo_stochastic_interpolants}.

\subsection{Algorithmic Choices}\label{subsec:algorithmic_choices}

We briefly detail our implementation of the \gls*{dpsmc} algorithm in practice.

\begin{wrapfigure}[16]{r}{0.22\textwidth}
    \vspace{-1.75em}
    \hfill\resizebox{0.22\textwidth}{!}{\input{media/pgfs/diffusion_path_overview.pgf}}\hfill
    \caption{\textit{Evolution of path marginal and posterior.} In red is a sample trajectory.}
    \label{fig:diffusion_path}
\end{wrapfigure}

\fparagraph{Initialisation.} A seemingly pathological quality shared by diffusion-based samplers is the fact that $\varrho_{0,x}=\pi$ (see Figure \ref{fig:diffusion_path}). To see this, one can think of $\varrho_{t,x}$ as the distribution of where a (noisy) sample $x$ at time $t$ will end up at time $t=1$. This also implies $\varrho_{1,x}=\delta_x$. Notably, under some conditions, there exists a time window where both path marginals and posteriors are log-concave or approximately so \citep{grenioux2024stochastic}. Hence, starting at a later time not at a Gaussian, e.g. in this window, may be more of a practical choice rather than a disadvantage. As such, we search for $\lambda_{0}$ in $[0,1]$ uniformly and perform an \gls*{ll} initialisation for OU dynamics. Our other schemes with $\lambda_0=0$ sidestep the need for \gls*{ll} but require sampling from $\rho_{0,x}$. Indeed, in the next section, we show that initial posterior samples can be rather poor and still produce good score estimates. As for auxiliary variables, we initialise them via $q_{0}(x) = \mathcal{N}(x;0,\tilde{\sigma}^2\min(1,(1-\lambda_{0})/\lambda_{0})\mathbf{I})$. We follow \citep{grenioux2024stochastic} in setting $\tilde{\sigma}^2 = R^2d + \tau^2$, where we assume the target can be decomposed as a convolution between $\mathcal{N}(0,\tau^2\mathbf{I})$ and a distribution compactly supported on $\mathrm{B}(\mathbb{E}_\pi[X], R^2d)$. This choice allows the auxiliary variables to generally coincide with regions of high density in the target.

\fparagraph{Propagation.} We propagate auxiliary variables according to a \gls*{mala} kernel $\mathsf{K}_k^{\text{MALA}}$ with an adaptive step size, geometrically adjusted to target an acceptance ratio. We choose backward kernels be time-reversals of forward kernels, i.e. $\mathsf{L}_{k-1}(y_{k}, y_{k-1})\propto\rho_k(y_{k-1})\mathsf{K}_k(y_{k-1}, y_k)$, yielding a simplified weight update (see Appendix \ref{subsec_app:mala_weight_update}), and perform adaptive resampling with threshold $\kappa=0.5$. With our samples and auxiliary variables, we estimate the matrix schedule in Proposition \ref{prop:optimal_matrix_cv_schedule} and hence the scores for our samples with the test function in \eqref{eq:general_cv_score_test_function}.

Our practical implementation of \gls*{dpsmc} is given in Algorithm \ref{alg:dpsmc_interacting} in Appendix. While our samples are interacting in step size adaptation and target score covariance estimation, one could first estimate these underlying quantities and consider the non-interacting version of our sampler with these settings.

\section{Theoretical Analysis}
In this section, we analyse the different sources of error in our algorithm, building on the theoretical analysis presented in \cite{benton2024nearly, cordero2025non}. We first introduce the required assumptions.
\begin{assumption}\label{assumption:target_requirements}
    The target distribution has density w.r.t. Lebesgue,  $\pi \propto \exp({-V_\pi})\in\mathcal{P}(\mathbb{R}^d)$, has a finite second-order moment, $M_2 \coloneqq \mathbb{E}_\pi[\Vert X\Vert^2]<\infty$, and the potential $V_\pi$ is $L_\pi$-smooth.
\end{assumption}
\begin{assumption}\label{assumption:target_requirements_2}
    The potential $V_\pi$ is strongly convex outside a compact set, i.e., $\exists\,R>0$ such that $\inf_{\Vert x\Vert \geq R} \nabla^2 V_\pi \succ 0$.
\end{assumption}
We note that under Assumptions \ref{assumption:target_requirements} and \ref{assumption:target_requirements_2}, it follows from \cite[Lemma 3.2]{cordero2025non} and \cite[Lemma 3.8]{cattiaux2025diffusion} that $\nabla\log\mu_t$ is Lipschitz continuous with finite constant $L_t$, satisfying $\sup_t L_t\leq L$. A key ingredient to bound the \gls*{kl} divergence between the target $\pi$ and the final distribution $p_1$ obtained by the \gls*{dpsmc} algorithm is to control the score estimation error. First, assume there is no discretisation error, so that we can sample $X_k \sim {\mu}_{t_k}$ exactly and then generate the weighted particles $\{(w_k^i, \bar{Y}_k^i)\}_{i=1}^N$. For simplicity, let $\alpha_t$ be an arbitrary scalar schedule and define the score estimator
\begin{equation}\label{eq:mc_score_estimator_control_variate}
    S_k^N(X_k) := \sum_{i=1}^{N}  w_k^i\phi_{t_k,X_k}^{\alpha_{t_k}\mathbf{I}}(\bar{Y}_k^i).
\end{equation}
The following proposition provides a bound on the error of this estimator.

\begin{restatable}[Score estimation error]{proposition}{scoreEstimationError}\label{prop:score_estimation_error}
    Suppose Assumption \ref{assumption:target_requirements} holds and $0<\lambda_{t_k}<1$. Let $\{X_k\}_{k=1}^{K}$ be such that $\mu_{t_k} = \mathrm{Law}(X_k)$ for all $k\geq1$. Given the score estimator $S_k^N(X_k)$ defined in \eqref{eq:mc_score_estimator_control_variate}, assume that for some $\delta(k,N)\geq0$, $\mathbb{E}\left[W_2^2\left(\sum_{i=1}^{N} w_k^i\delta_{\bar{Y}_k^i}, \varrho_{t_k,X_k}\right)\right]\leq \delta(k,N)$ holds where the expectation is over the joint law of $X_k$, the particles, and the weights. Then
    \begin{align}
        &\mathbb{E}\left[\Vert\nabla\log\mu_{t_k}(X_k) - S_k^{N}( X_k)\Vert^2\right] \leq \,2\,\delta(k, N)\left(\alpha_{t_k}^2\frac{\lambda_{t_k}}{\sigma^4(1-\lambda_{t_k})^2} +(1-\alpha_{t_k})^2\frac{L_\pi^2}{\lambda_{t_k}}\right).
        \label{eq:score_estimation_error}
    \end{align}
\end{restatable}
See Appendix~\ref{subsec_app:score_convergence_analysis} for the proof. This result generalises Proposition 3.1 in \citep{he2024zeroth}, which can be recovered by setting $\alpha_{t_k} = 1$ for all $k$ and using a system of equally weighted i.i.d particles. Consider the regime when $t_k$ is small. Here, $\lambda_{t_k}$ is small and $\alpha_{t_k}=\alpha_{t_k}^\ast$ is close to 1. The bracketed terms are hence of order $\mathcal{O}(\lambda_{t_k})$ which implies that having $\delta(k, N) = \mathcal{O}(\varepsilon^2/\lambda_{t_k})$ provides a bound on the first term of order $\mathcal{O}(\varepsilon^2)$. For example, when $\alpha_{t_k} = 1-\lambda_{t_k}$, the bracketed terms become $\lambda_{t_k}(1/\sigma^4 + L_\pi^2)$. Hence, while $\varrho_{t_k,X_k}$ is hard to sample from for small $t_k$, relatively large mean-squared approximation errors (of order $\mathcal{O}(\varepsilon^2/\lambda_{t_k})$) between $\varrho_{t_k,X_k}$ and the distribution of our auxiliary variables can still result in a reliable score estimator when measured in the $L^2$  sense.
In other words, poorly sampled \gls*{smc} particles in the initial steps do not significantly degrade the score error.

We can apply Proposition \ref{prop:score_estimation_error} to obtain a bound on the score estimation error for our weighted estimator \eqref{eq:mc_score_estimator_control_variate}.
In particular, if we require the score estimation error over time to be bounded by $\varepsilon^2_{\text{score}}$, then assuming $\delta(k, N)$ vanishes with large $N$ for all $k$, we can choose $N$ large enough to satisfy
\begin{equation*}
        \sum_{k=0}^{K-1} h\,\mathbb{E}\left[\left\Vert \nabla \log {\mu}_{t_k}(X_{k}) - S_k^{N}(X_{k})\right\Vert^2\right] \leq \varepsilon^2_{\text{score}}.
    \end{equation*}
This allows us to invoke Theorem 1 in \cite{benton2024nearly} and Theorem 3.4 in \cite{cordero2025non} to bound the final error of the algorithms. In particular, we have the following result.
\begin{theorem}\label{thm:final_error}
    Under Assumption \ref{assumption:target_requirements} and further Assumption \ref{assumption:target_requirements_2} for (ii) and (iii), the \gls*{dpsmc} algorithm with (i) OU time-reversal dynamics initialised at $X_{N_{\text{LL}}}\sim p_{0}$, (ii) SI dynamics initialised at $X_0\sim\nu$, and (iii) \gls*{dald} initialised at $X_0\sim\nu$, satisfy respectively
    \begin{align*}
    (i) \, \,&\mathrm{KL}\left(\pi\;||\;p_1\right)
        \lesssim \mathrm{KL}(\mu_{0} \;\|\; p_{0}) +  \frac{d}{K - N_{\text{LL}}} \left(1+ T\right) + \varepsilon_{\text{score}}^2,\\
    (ii) \, \,&\mathrm{KL}\left(\pi\;||\;p_1\right)
        \lesssim  \frac{dL^2}{K}+  \varepsilon^2_{\text{score}},\\
        (iii) \, \,&\mathrm{KL}\left(\pi\;||\;p_1\right)
        \lesssim \epsilon\left(1+\frac{L^2}{K^2\epsilon^4}\right)(M_2+d) + \frac{d L^2 }{K\epsilon^2}\left(1+  \frac{L}{K\epsilon}\right) +  \varepsilon^2_{\text{score}}.
    \end{align*}
\end{theorem}
See Appendix \ref{subsec_app:final_error_proof} for a proof. These bound imply that, for $\varepsilon = \mathcal{O}(\varepsilon_{\text{score}})$, and under appropriate choices for $K$ and $\epsilon$, we can keep $\text{{KL}}\left(\pi\;||\;p_1\right) \leq \varepsilon^2$.

\section{Experiments}

We now evaluate \gls*{dpsmc} across several benchmarks. Tables \ref{tab:main_results} and \ref{tab:normalising_constant_estimation} summarise our findings, where the mean and standard error of the metrics are reported.

\noindent\textbf{Samplers.} We compare our methods against several classical samplers: \gls*{ais} \citep{neal2001annealed} and \gls*{smc} \citep{del2006sequential}, and those based on reverse-diffusion, RDMC \citep{huang2023reverse} and SLIPS \citep{grenioux2024stochastic}. Details of their hyperparameters are in Appendix \ref{subsec_app:benchmark_methods}. We fix the number of energy evaluations for all samplers, except RDMC with twice as many. We also globally fix the number of discretisation steps to $K=1024$. We test several instances of DPSMC, each driven by different dynamics: \gls*{dald}, \gls*{dald} with \gls*{ll}, the OU time-reversal, and \gls*{si}. We set $N_\text{LL}=64$ for methods with \gls*{ll} and $N_Y=128$ to match the budget. Each sampler is assessed on the 4096 samples they produce across 10 seeds.

\noindent\textbf{Toy target distributions.} We consider several toy targets that test a sampler's ability to tackle multimodality and complicated geometries. Our first example is GMM40, a uniform mixture of 40 Gaussians with identity covariance and randomly sampled means. We consider dimensions $d=2$ and $d=50$. We also test targets with challenging geometries such as Rings ($d=2$) \cite{grenioux2024stochastic}, where regions of high density are localised along four concentric rings, and Funnel ($d=10$) \cite{neal2001annealed}, which consists of an exponentially narrowing funnel. We evaluate samples relative to the target through their entropy-regularised $W_2$ distance ($\epsilon=0.05$) for GMM40 and Rings, and the sliced \gls*{ks} distance for Funnel. We find \gls*{dpsmc} yields the best performance in all benchmarks. For Funnel, massive gains are attributed to our matrix control-variates that handle anisotropy well. We find starting later with \gls*{ll} benefits \gls*{dald}, providing a finer discretisation and an easier initial posterior. On the other hand, \gls*{si}, free of critical hyperparameters, performs comparably to OU.

\noindent\textbf{Bayesian logistic regression.} To showcase how samplers deal with high-dimensional real-world data, we consider posterior sampling from a Bayesian logistic regression model under the Ionosphere ($d=35$) and Sonar ($d=61$) datasets. Each dataset is split $70$-$10$-$20$\% into training, validation, and testing sets. Samples, i.e. model weights, are evaluated on their predictive log-likelihood on the test set. For fairness, we tune hyperparameters only using the validation log-likelihood and combine validation and training sets for samplers that do not require tuning. We find \gls*{dald} and OU perform best. Here, the target is log-concave and so are the path marginals and posteriors. Despite both starting at a Gaussian, we believe \gls*{dald} is able to re-equilibrate w.r.t. the path marginals at times when scores are more reliable and hence do not propagate score errors the same way as in \gls*{si} that relies on exact dynamics. OU achieves a similar performance by initialising near the end of the path.

\begin{table*}[t!]
    \caption{\textit{Performance of algorithms on toy targets and Bayesian logistic regression datasets across 10 seeds.} Best result is in \textbf{bold}, and second-best is \underline{underlined}. The second column indicates the number of batched energy evaluations---a metric for measuring runtimes in the regime where hardware capacity exceeds batch size requirements. DPSMC is overall performant while highly parallelisable.
    }
    \centering
    \setlength{\tabcolsep}{3.5pt}
    \label{tab:main_results}
    {
    \begin{tabular*}{\textwidth}{lccccccc}
        \toprule
        Algorithm & \makecell{Batched\\Evals { ($\downarrow$)}} & \makecell{GMM40\\  ($d=2$)\\ $\epsilon$-$W_2$ ($\downarrow$)} & \makecell{GMM40\\ ($d=50$)\\ $\epsilon$-$W_2$ ($\downarrow$)} & \makecell{Rings\\ ($d=2$)\\ $\epsilon$-$W_2$ ($\downarrow$)} & \makecell{Funnel\\($d=10$)\\ KS ($\downarrow$)} & \makecell{Ionosphere\\ ($d=35$)\\ Log-LL ($\uparrow$)} & \makecell{Sonar\\ ($d=61$)\\ Log-LL ($\uparrow$)}\\
        \midrule
        AIS & 131K & \tabres{2.35}{0.41}& \tabres{115.13}{2.98} & \tabres[]{0.19}{0.01} & \tabres{0.037}{0.003} & \tabres{\text{-}87.79}{0.10} & \tabres{\text{-}110.94}{0.10}\\
        SMC & 131K & \tabres{5.50}{0.22} & \tabres{111.81}{4.84} & \tabres[\mathbf]{0.18}{0.01} & \tabres{0.035}{0.003}& \tabres{\text{-}87.79}{0.10}& \tabres{\text{-}110.94}{0.10}\\
        \midrule
        RDMC  & 32K & \tabres{12.35}{0.11} & \tabres{79.37}{0.10} & \tabres{0.29}{0.02} & \tabres{0.082}{0.003} & \tabres{\text{-}109.97}{0.23} & \tabres{\text{-}129.82}{0.18}\\
        SLIPS & 32K & \tabres{4.07}{0.40} & \tabres{27.57}{0.83} & \tabres[\mathbf]{0.18}{0.01} & \tabres{0.073}{0.003} & \tabres{\text{-}87.88}{0.10} & \tabres{\text{-}110.88}{0.06}\\
        \midrule
        DPSMC {\footnotesize(DALD)} & \textbf{1K} & \tabres{1.92}{0.22} & \tabres{29.05}{1.51} & \tabres{0.20}{0.02} & \tabres{0.062}{0.003} & \tabres[\underline]{\text{-}83.77}{0.07} & \tabres[\underline]{\text{-}108.45}{0.05}\\
        \quad {" w/ LL} & \textbf{1K} & \tabres{1.57}{0.33} & \tabres[\mathbf]{26.82}{1.38}  & \tabres[]{0.19}{0.02} & \tabres{0.046}{0.004} & \tabres{\text{-}85.22}{0.11} & \tabres{\text{-}109.34}{0.08}\\
        DPSMC {\footnotesize(OU)} & \textbf{1K} & \tabres[\mathbf]{1.14}{0.24} & 	\tabres[\underline]{27.27}{1.06} & \tabres[\mathbf]{0.18}{0.02} & \tabres[\underline]{0.028}{0.002} &	\tabres[\mathbf]{\text{-}83.73}{0.16} & \tabres[\mathbf]{\text{-}108.23}{0.07} \\
        DPSMC {\footnotesize(SI)} & \textbf{1K} & \tabres[\underline]{1.23}{0.30} & \tabres{28.78}{1.78} & \tabres[\mathbf]{0.18}{0.03} & \tabres[\mathbf]{0.026}{0.004} & \tabres{\text{-}87.54}{0.11} & \tabres{\text{-}111.06}{0.08}\\
        \bottomrule
    \end{tabular*}}
\end{table*}

\begin{table}[t]
    \caption{\textit{Performance of algorithms on normalising constant estimation and sample quality on Rings and Funnel targets}. Best result is in \textbf{bold}, and second-best is \underline{underlined}. DPSMC provides improved estimates for the normalising constant. AIS, SMC, and RDSMC results are lifted from \citep{wu2025reverse}.}
    \label{tab:normalising_constant_estimation}
    \centering
    \setlength{\tabcolsep}{2pt}
    {\begin{tabular}{lcccccc}
        \toprule
        \multirow{3}*{Algorithm} & \multicolumn{3}{c}{Rings ($d=2$)} & \multicolumn{2}{c}{Funnel ($d=10$)} \\
        \cmidrule(lr){2-4}\cmidrule(lr){5-6}
        & \makecell{Radius TVD ($\downarrow$)} & \makecell{Angle TVD ($\downarrow$)} & \makecell{$\log Z_\pi$ Bias ($\downarrow$)} & KS ($\downarrow$) & \makecell{$\log Z_\pi$ Bias ($\downarrow$)}\\
        \midrule
        AIS & \tabres[\mathbf]{0.10}{0.00} & \tabres[\underline]{0.15}{0.00} & \tabres{0.05}{0.00} & \tabres[\underline]{0.07}{0.00} & \tabres[\underline]{0.28}{0.01}\\
        SMC & \tabres[\mathbf]{0.10}{0.00} & \tabres[\underline]{0.15}{0.00} & \tabres{0.05}{0.00} & \tabres[\underline]{0.07}{0.00} & \tabres[\underline]{0.28}{0.01}\\
        RDSMC & \tabres[\underline]{0.13}{0.01} & \tabres{0.21}{0.01} & \tabres[\underline]{0.03}{0.00} & \tabres{0.11}{0.00} & \tabres[\underline]{0.28}{0.10}\\
        \midrule
        DPSMC {\footnotesize (OU)} & \tabres{0.31}{0.01} & \tabres[\mathbf]{0.14}{0.01} & \tabres[\mathbf]{0.02}{0.01}& \tabres[\mathbf]{0.06}{0.00} & \tabres[\mathbf]{0.12}{0.07}\\
        \bottomrule
    \end{tabular}}
\end{table}

\fparagraph{Normalising constant estimation.} An important aspect of samplers is their ability to estimate normalising constants of unnormalised target densities. We adapt DPSMC (OU) to use the RDSMC framework \citep{wu2025reverse} for doing AIS/SMC on the diffusion path itself and test it on Rings and Funnel under a comparable number of energy evaluations and discretisation, adjusting the number of MALA steps to match. This is possible as our SMC setup produces unbiased estimates for the path marginal densities (see Proposition \eqref{prop:fk_identity}) unlike RDMC or SLIPS. Here, we initialise auxiliary variables at $\rho_{0,X_0}$ with AIS and use our SMC framework thereafter. We measure the total variation distance (TVD) of radial and angular components w.r.t. ground truth samples for Rings and measure KS for Funnel. We find DPSMC achieves superior normalising constant estimates while maintaining good samples. When we deviate from RDSMC's settings, we observe that a finer discretisation with fewer MALA steps leads to better sample quality, e.g. lower radius TVD, at the cost of more biased constant estimates. We can reliably pursue this trade-off as we evolve auxiliary variables along conditional distributions in contrast to RDSMC that runs AIS between a Gaussian and $\rho_{t_k,X_k}$ at \textit{every} step. 

We find DPSMC has speed-ups of several orders of magnitude compared to other samplers while achieving comparable if not better sample quality. In the modern age of GPUs and parallel computing, inference at scale is primarily achieved by distributing work horizontally and has significantly rewarded methods that can leverage this \citep{vaswani2017attention, chen2023acceleratinglargelanguagemodel}. We argue that a sampler's \textit{ability to batch energy evaluations} is hence an increasingly important design axis. In essence, methods that can exploit this batching for estimating better transport paths can achieve better wall-clock times and sample quality.
\section{Discussion}

In this work, we develop diffusion path sequential Monte Carlo (DPSMC), a sampler on diffusion paths using \gls*{smc} as a backbone for score estimation. We further introduce control-variate schedules and motivate principled parameterisations of the diffusion path. Finally, we provide theoretical guarantees for \gls*{dpsmc}, and demonstrate its performance across a range of benchmark targets.

\fparagraph{Limitations and future work.} When embedded within a nested SMC sampler like RDSMC, we find importance weights can exhibit some variance. Exploring richer stochastic interpolant families that induce smoother, better-conditioned posteriors throughout the path is a promising direction for mitigating this problem and broadening the applicability of DPSMC. Furthermore, while the score gets easier to estimate over time, we find the Dirac contraction causes some instability. More informed transition kernels or reparameterisations of the posterior could be explored.

\section*{Acknowledgements}
J.M.Y. is supported by the Roth Scholarship provided by the Department of Mathematics, Imperial College London. P.C.E. is funded by EPSRC through the Centre for Doctoral Training in Modern Statistics and Statistical Machine Learning (StatML), grant no. EP/S023151/1. S.R.'s work is partially funded by the Deutsche Forschungsgemeinschaft (DFG) under Project-ID 318763901 – SFB1294. Lastly, J.M.Y. would like to thank Tim Wang for helpful discussions.

\bibliography{ref}
\bibliographystyle{plainnat}

\clearpage
\appendix
\section*{Organisation of the appendix}
We present proofs for propositions stated in the main text as well as supplementary information like implementation details and additional experiments. The appendix is organised as follows:

\startcontents
\printcontents{}{1}{}

\setcounter{theorem}{0}
\setcounter{lemma}{0}
\setcounter{proposition}{0}
\setcounter{corollary}{0}
\setcounter{remark}{0}

\renewcommand{\thetheorem}{A.\arabic{theorem}}
\renewcommand{\thelemma}{A.\arabic{lemma}}
\renewcommand{\theproposition}{A.\arabic{proposition}}
\renewcommand{\thecorollary}{A.\arabic{corollary}}
\renewcommand{\theremark}{A.\arabic{remark}}

\section{Additional Background}\label{app:additional_background}

\subsection{Related Work on Diffusion-Based Neural Samplers}\label{app:subsec:neural_samples_related_work}

There are several works that use neural networks for sampling from target distributions while leveraging the diffusion path, most commonly through the OU process and its time-reversal. Below, we provide a brief discussion of related works that fall under this neural sampler category. We remark that our approach is \textit{training-free} in contrast.

Some works directly train a diffusion model, leveraging the same score-matching objective without assuming access to samples beforehand. For example,  \citet{akhound2024iterated} estimate the time-varying score functions using self-normalised importance sampling together with TSI and use them as regression targets. On the other hand, some works temper the target to more easily draw an initial set of samples, from which a diffusion model is trained and steered to produce samples from a less tempered version of the learned distribution, and the process is repeated \citep{akhoundprogressive, rissanen2025progressive}.

Several works approach the problem from a variational inference (VI) perspective, whereby a forward and reverse process have parameterised drift functions, and these are optimised to minimise the divergence between their induced path measures, i.e. so they become time-reversals of each other \citep{zhang2022path, vargas2024transport, richter2024improved, chen2025sequential}. These can be analogously framed from a stochastic optimal control light.

Related to our work is PDDS \citep{phillips2024particle} that reframes sampling from a target as sampling from the reward-tilted distribution of a diffusion model. \citet{phillips2024particle} define an SMC sampler along this path using twisting potentials parameterised by neural networks to more effectively guide the sampling process. To facilitate training, they consider their novel score matching loss, where they introduce a control-variate in the score identity, the choice of which can be viewed as MSI.
\section{Sequential Monte Carlo samplers}\label{app:smc_intro}
In this section, we show certain \gls{smc} results which we have used in the paper.

\subsection{Proof of Proposition~\ref{prop:fk_identity}}\label{app:proof:fk_identity_proof}

\fkIdentity*
\begin{proof}
    Expanding the product we have that
    \begin{align*}
        \prod_{p=0}^{k}{G_p(y_{p-1}, y_p)}
        &= \frac{\tilde{\rho}_0(y_0)}{q_0(y_0)}\prod_{p=1}^{k}{\frac{\tilde{\rho}_p(y_p)\mathsf{L}_{p-1}(y_p, y_{p-1})}{\tilde{\rho}_{p-1}(y_{p-1}) \mathsf{K}_{p}(y_{p-1}, y_p)}}\\
        &= \frac{\tilde{\rho}_{k}(y_k)\prod_{p=1}^{k}{\mathsf{L}_{p-1}(y_p, y_{p-1})}}{q_0(y_0)\prod_{p=1}^{k}{\mathsf{K}_p(y_{p-1}, y_p)}} = \frac{\tilde{\rho}_{0:k}(y_{0:k})}{q_{0:k}(y_{0:k})}.
    \end{align*}
    The ratio in the proposition is therefore given by
    \begin{align*}
        \frac{\mathbb{E}[\varphi_{k,x_k}(Y_k)\prod_{p=1}^{k}{G_p(Y_{p-1}, Y_p)}]}{\mathbb{E}[\prod_{p=1}^{k}{G_p(Y_{p-1}, Y_p)}]}
        &= \frac{\int q_{0:k}(y_{0:k}) \varphi_{k,x}(y_k)\frac{\tilde{\rho}_{0:k}(y_{0:k})}{q_{0:k}(y_{0:k})}\mathrm{d}y_{0:k} }{\int q_{0:k}(y_{0:k})\frac{\tilde{\rho}_{0:k}(y_{0:k})}{q_{0:k}(y_{0:k})}\mathrm{d}y_{0:k}}
        = \frac{\int \tilde{\rho}_{0:k}(y_{0:k})\varphi_{k,x_k}(y_k)\mathrm{d}y_{0:k}}{\int\tilde{\rho}_{0:k}(y_{0:k})\mathrm{d}y_{0:k}}\\
        &= \int \rho_{0:k}(y_{0:k})\varphi_{k,x_k}(y_k)\mathrm{d}y_{0:k} = \int \rho_{k}(y_{k})\varphi_{k,x_k}(y_k)\mathrm{d}y_k,
    \end{align*}
    where we have used the fact that $\rho_{0:k}(y_{0:k})$ admits $\rho_k(y_k)$ as a marginal. Since $\rho_{k}=\varrho_{t_k,x_k}$ and our test function $\varphi_{k,x}$ satisfies \eqref{eq:score_test_function} for any $x\in\mathbb{R}^{d}$, then we can conclude that
    \begin{align*}
        \frac{\mathbb{E}[\varphi_{k,x}(Y_k)\prod_{p=1}^{k}{G_p(Y_{p-1}, Y_p)}]}{\mathbb{E}[\prod_{p=1}^{k}{G_p(Y_{p-1}, Y_p)}]} = \mathbb{E}_{\varrho_{t_k,x_k}}[\varphi_{k,x_k}(Y_k)] = \nabla\log{\mu}_{t_k}(x_k).
    \end{align*}
    Now, we show the result for the path marginal density. We have
    \begin{align*}
        \mathbb{E}\left[\prod_{p=0}^{k}G_p(Y_{p-1}, Y_p)\right] &= \int q_{0:k}(y_{0:k})\frac{\tilde{\rho}_{0:k}(y_{0:k})}{q_{0:k}(y_{0:k})}\mathrm{d}y_{0:k}=\int\tilde{\rho}_k(y_k)\prod_{p=1}^{k}\mathsf{L}_{p-1}(y_p, y_{p-1})\mathrm{d}y_{0:k}\\
        &=\int \tilde{\rho}_k(y_k)\mathrm{d}y_k = \int \frac{1}{\sqrt{1-\lambda_{t_k}}^d}\nu\left(\frac{X_k-\sqrt{\lambda_{t_k}}y_k}{\sqrt{1 - \lambda_{t_k}}}\right)\pi(y_k)\mathrm{d}y_k.
    \end{align*}
With a change of variables $u=\sqrt{\lambda_{t_k}}y_k$, we hence have the result
\begin{align*}
    \mathbb{E}\left[\prod_{p=0}^{k}G_p(Y_{p-1}, Y_p)\right] &= \int \frac{1}{\sqrt{1-\lambda_{t_k}}^d}\nu\left(\frac{X_k-u}{\sqrt{1 - \lambda_{t_k}}}\right)\frac{1}{\sqrt{\lambda_{t_k}}^d}\pi\left(\frac{u}{\sqrt{\lambda_{t_k}}}\right)\mathrm{d}u = \mu_{t_k}(X_k).
\end{align*}
\end{proof}

\subsection{Weight update for $\mathsf{K}_k^{\text{MALA}}$ choice}\label{subsec_app:mala_weight_update}
In Section~\ref{subsec:general_smc_posterior}, we derived our incremental weight function for general kernels $(\mathsf{K}_p)_{p=1}^{k}$ and $(\mathsf{L}_{p-1})_{p=1}^{k}$ as
\begin{align}
    G_{k}(y_{k-1}, y_k) = \frac{\tilde{\rho}_{k}(y_{k})\mathsf{L}_{k-1}(y_{k}, y_{k-1})}{\tilde{\rho}_{k-1}(y_{k-1}) \mathsf{K}_{k}(y_{k-1}, y_{k})}. \label{app_eq:general_potential}
\end{align}
As we mentioned in Section~\ref{subsec:algorithmic_choices}, we choose the forward kernel to be the \gls*{mala} kernel which simplifies the weights. In order to see this, if $\mathsf{K}_k$ leaves $\rho_k$ invariant and we can choose the backward kernel as its time-reversal,
\begin{align}
\mathsf{L}_{k-1}(y_k, y_{k-1}) = \frac{\rho_k(y_{k-1})\mathsf{K}_k(y_{k-1}, y_k)}{\rho_k(y_k)},\label{eq:markov_backwards_kernel}
\end{align}
then the incremental weight simplifies, which yields the familiar annealed importance sampling weights. In particular, we can write
\begin{align}
{G}_k(y_{k-1}, y_k)
&= \frac{\tilde{\rho}_k(y_k)\mathsf{L}_{k-1}(y_k, y_{k-1})}{\tilde{\rho}_{k-1}(y_{k-1})\mathsf{K}_k(y_{k-1}, y_k)}\nonumber\\
&= \frac{\tilde{\rho}_k(y_k)}{\tilde{\rho}_{k-1}(y_{k-1})}\cdot\frac{\rho_k(y_{k-1})}{\rho_k(y_k)}
= \frac{\tilde{\rho}_k(y_{k-1})}{\tilde{\rho}_{k-1}(y_{k-1})},
\end{align}
where we used $\rho_k = \tilde{\rho}_k/Z_k$ so that $\tilde{\rho}_k(y_k)/\rho_k(y_k) = Z_k$ and $\rho_k(y_{k-1}) = \tilde{\rho}_k(y_{k-1})/Z_k$.

\section{Score Identities}\label{sec_app:score_identities}

In this section, we show how we can derive control-variate schedules for minimising the variance of score estimators under different regimes. We do so for general base and target densities.

\subsection{Denoising and Target Score Identities for General Base $\nu$}\label{subsec_app:general_score_identities}
To define a general score test function with a control-variate schedule, we first present lemmas that extend \gls{dsi} and \gls{tsi} to general base distributions.

\begin{lemma}[Denoising Score Identity for General Base $\nu$]\label{app:lemma:score_identity_general_nu}
    For $t \in [0,1]$, it holds that
    \begin{align*}
        \nabla\log\mu_t(x) = \mathbb{E}_{\varrho_{t,x}}\left[\tfrac{1}{\sqrt{1 - \lambda_t}}{\nabla\log\nu\left(\tfrac{x - \sqrt{\lambda_t}Y}{\sqrt{1 - \lambda_t}}\right)}\right].
    \end{align*}
\end{lemma}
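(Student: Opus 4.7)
My plan is to prove this directly by differentiating the convolutional expression for $\mu_t$ under the integral sign and recognising the resulting weighting as the posterior $\varrho_{t,x}$. Starting from
\begin{align*}
\mu_t(x) = \frac{1}{\sqrt{1-\lambda_t}^d}\int \nu\!\left(\tfrac{x-\sqrt{\lambda_t}y}{\sqrt{1-\lambda_t}}\right)\pi(y)\,\mathrm{d}y,
\end{align*}
I would apply $\nabla_x$ inside the integral (justifying the interchange by standard dominated convergence under mild regularity on $\nu$), then use the chain rule on $x\mapsto \tfrac{x-\sqrt{\lambda_t}y}{\sqrt{1-\lambda_t}}$, which contributes a factor of $\tfrac{1}{\sqrt{1-\lambda_t}}$.

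The key manipulation is then
\begin{align*}
\nabla \log \mu_t(x) = \frac{\nabla \mu_t(x)}{\mu_t(x)} = \int \frac{1}{\sqrt{1-\lambda_t}}\,\nabla\log\nu\!\left(\tfrac{x-\sqrt{\lambda_t}y}{\sqrt{1-\lambda_t}}\right)\cdot \frac{\nu\!\left(\tfrac{x-\sqrt{\lambda_t}y}{\sqrt{1-\lambda_t}}\right)\pi(y)}{\int \nu\!\left(\tfrac{x-\sqrt{\lambda_t}y'}{\sqrt{1-\lambda_t}}\right)\pi(y')\,\mathrm{d}y'}\,\mathrm{d}y,
\end{align*}
where I used the trick $\nabla \nu = \nu \nabla \log \nu$ to put the integrand in a form that is a posterior-weighted expectation. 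The dimensional prefactors $\sqrt{1-\lambda_t}^{-d}$ appearing in both numerator and denominator cancel.

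Finally, I would recognise the weighting as exactly $\varrho_{t,x}(y)$ from \eqref{eq:posterior_distribution}, since
\begin{align*}
\varrho_{t,x}(y) = \frac{\nu\!\left(\tfrac{x-\sqrt{\lambda_t}y}{\sqrt{1-\lambda_t}}\right)\pi(y)}{\int \nu\!\left(\tfrac{x-\sqrt{\lambda_t}y'}{\sqrt{1-\lambda_t}}\right)\pi(y')\,\mathrm{d}y'},
\end{align*}
yielding the claimed identity. There is no real obstacle here beyond the differentiation under the integral sign; since this is the standard Tweedie/denoising-score identity extended beyond the Gaussian $\nu$, everything follows from the chain rule and the definition of $\varrho_{t,x}$.
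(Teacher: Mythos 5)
Your proposal is correct and follows essentially the same route as the paper's proof: differentiate the convolution $\mu_t(x)=\int \sqrt{1-\lambda_t}^{-d}\,\nu\bigl(\tfrac{x-\sqrt{\lambda_t}y}{\sqrt{1-\lambda_t}}\bigr)\pi(y)\,\mathrm{d}y$ under the integral sign, apply the chain rule to pick up the $\tfrac{1}{\sqrt{1-\lambda_t}}$ factor, and identify the resulting normalised weighting with the posterior $\varrho_{t,x}$. The only cosmetic difference is that the paper first writes the convolution in the variable $u$ and performs the substitution $u=\sqrt{\lambda_t}y$ explicitly, whereas you begin from the already-substituted form.
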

\begin{proof}
Recall that the diffusion path marginal $\mu_t$ is defined by the convolution
\begin{align*}
    \mu_t(x) = \int \tfrac{1}{\sqrt{\lambda_t(1 - \lambda_t)}^d}\nu\left(\tfrac{x-u}{\sqrt{1 - \lambda_t}}\right) \pi\left(\tfrac{u}{\sqrt{\lambda_t}}\right) \mathrm{d}u.
\end{align*}
With a change of variables $u = \sqrt{\lambda_t} y$, we have $\mathrm{d}u = \sqrt{\lambda_t}^d \mathrm{d}y$ and
\begin{align*}
    \mu_t(x) = \int\tfrac{1}{\sqrt{1 - \lambda_t}^d}\nu\left(\tfrac{x-\sqrt{\lambda_t}y}{\sqrt{1 - \lambda_t}}\right)\pi(y) \mathrm{d}y.
\end{align*}
Note that the integrand is precisely the unnormalised posterior. From here, we can solve for the score
\begin{align*}
    \nabla\log\mu_t(x) &= \tfrac{\nabla\mu_t(x)}{\mu_t(x)} = \tfrac{1}{\mu_t(x)}\int \nabla \left(\tfrac{1}{\sqrt{1 - \lambda_t}^d}\nu\left(\tfrac{x - \sqrt{\lambda_t} y}{\sqrt{1 - \lambda_t}}\right)\pi(y)\right)\mathrm{d}y\\
    &= \int\left(\tfrac{1}{\mu_t(x)}\tfrac{1}{\sqrt{1 - \lambda_t}^d}\nu\left(\tfrac{x - \sqrt{\lambda_t} y}{\sqrt{1 - \lambda_t}}\right)\pi(y)\right)\tfrac{1}{\sqrt{1 - \lambda_t}}\nabla\log\nu\left(\tfrac{x - \sqrt{\lambda_t}y}{\sqrt{1 - \lambda_t}}\right)\mathrm{d}y\\
    &= \int \varrho_{t,x}(y)\tfrac{1}{\sqrt{1 - \lambda_t}}\nabla\log\nu\left(\tfrac{x - \sqrt{\lambda_t}y}{\sqrt{1 - \lambda_t}}\right) \mathrm{d}y = \mathbb{E}_{\varrho_{t,x}}\left[\tfrac{1}{\sqrt{1 - \lambda_t}}\nabla\log\nu\left(\tfrac{x - \sqrt{\lambda_t}Y}{\sqrt{1 - \lambda_t}}\right)\right] .
\end{align*}
\end{proof}
\begin{lemma}[Target Score Identity for General Base $\nu$]\label{app:lemma:target_score_identity_general_nu}
    For $t \in [0,1]$, it holds that
    \begin{align*}
        \nabla\log\mu_t(x) = \mathbb{E}_{\varrho_{t,x}}\left[\tfrac{1}{\sqrt{\lambda_t}}\nabla\log\pi(Y)\right].
    \end{align*}
\end{lemma}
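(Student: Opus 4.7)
The plan is to mirror the proof of Lemma~\ref{app:lemma:score_identity_general_nu}, but perform a substitution that places $x$ \emph{inside} the $\pi$ factor rather than the $\nu$ factor, so that differentiating in $x$ brings out $\nabla\log\pi$ instead of $\nabla\log\nu$. Concretely, starting from the convolutional definition of $\mu_t$ in \eqref{eq:diffusion_path_marginal}, I would first perform the change of variables $u = x - \sqrt{1-\lambda_t}\,z$ to write
\[
\mu_t(x) = \int \frac{1}{\sqrt{\lambda_t}^{\,d}}\,\nu(z)\,\pi\!\left(\frac{x-\sqrt{1-\lambda_t}\,z}{\sqrt{\lambda_t}}\right)\mathrm{d}z.
\]
In this representation the $x$-dependence sits inside $\pi$, so differentiating under the integral and using $\nabla\pi = \pi\nabla\log\pi$ together with the chain rule produces a factor $1/\sqrt{\lambda_t}$ in front of $\nabla\log\pi$.

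Next I would push $\nabla_x$ through the integral, divide by $\mu_t(x)$ to obtain $\nabla\log\mu_t(x)$, and then undo the substitution via $y=(x-\sqrt{1-\lambda_t}\,z)/\sqrt{\lambda_t}$, whose Jacobian $(\sqrt{\lambda_t}/\sqrt{1-\lambda_t})^d$ combines with the existing $1/\sqrt{\lambda_t}^{\,d}$ to give exactly the $1/\sqrt{1-\lambda_t}^{\,d}$ normalisation appearing in $\varrho_{t,x}$. After this change of variables the integrand factorises into $\varrho_{t,x}(y)$ times $\frac{1}{\sqrt{\lambda_t}}\nabla\log\pi(y)$, yielding the stated expectation representation.

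The computation is essentially symmetric to the DSI proof: one merely swaps the roles of the two substitutions. The only mild subtlety is keeping track of the Jacobian factors and the $1/\sqrt{\lambda_t}$ from the chain rule so that the prefactor in front of $\nabla\log\pi$ comes out as $1/\sqrt{\lambda_t}$ (rather than $\sqrt{\lambda_t}/(1-\lambda_t)$ or similar); I would double-check this by verifying the special case $\nu = \mathcal{N}(0,\sigma^2\mathbf{I})$, where the identity must reduce to the right-hand expression in \eqref{eq:initial_score_identities}. Differentiation under the integral is justified under Assumption~\ref{assumption:target_requirements} since $\nabla\log\pi$ has at most linear growth (Lipschitz gradient) and $\nu$ is smooth and integrable, so the usual dominated-convergence argument applies without further work.
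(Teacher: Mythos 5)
Your proposal is correct and follows essentially the same route as the paper: both place the $x$-dependence inside the $\pi$ factor of the convolution, differentiate under the integral to produce the $\tfrac{1}{\sqrt{\lambda_t}}\nabla\log\pi$ term, and then change variables so the remaining density factorises as $\varrho_{t,x}(y)$. The only cosmetic difference is that you first substitute to give $\nu$ a clean argument before differentiating, whereas the paper differentiates directly in the form $\nu\bigl(u/\sqrt{1-\lambda_t}\bigr)\pi\bigl((x-u)/\sqrt{\lambda_t}\bigr)$ and substitutes $u = x-\sqrt{\lambda_t}\,y$ afterwards; the Jacobian bookkeeping you describe checks out either way.
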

\begin{proof}
    As the diffusion path marginal $\mu_t$ is a convolution, we have that
    \begin{align*}
        \mu_t(x) = \int \tfrac{1}{\sqrt{\lambda_t(1 - \lambda_t)}^d}\nu\left(\tfrac{u}{\sqrt{1 - \lambda_t}}\right)\pi\left(\tfrac{x-u}{\sqrt{\lambda_t}}\right)\mathrm{d}u.
    \end{align*}
    Solving for the score
    \begin{align*}
        \nabla\log\mu_t(x)
        &= \tfrac{\nabla\mu_t(x)}{\mu_t(x)} = \tfrac{1}{\mu_t(x)} \int\nabla\left(\tfrac{1}{\sqrt{\lambda_t(1 - \lambda_t)}^d}\nu\left(\tfrac{u}{\sqrt{1 - \lambda_t}}\right)\pi\left(\tfrac{x - u}{\sqrt{\lambda_t}}\right)\right)\mathrm{d}u\\
        &= \int \left(\tfrac{1}{\mu_t(x)}\tfrac{1}{\sqrt{\lambda_t(1 - \lambda_t)}^d}\nu\left(\tfrac{u}{\sqrt{1 - \lambda_t}}\right)\pi\left(\tfrac{x-u}{\sqrt{\lambda_t}}\right)\right)\tfrac{1}{\sqrt{\lambda_t}}\nabla\log\pi\left(\tfrac{x-u}{\sqrt{\lambda_t}}\right)\mathrm{d}u.
    \end{align*}
    Using a change of variables $u = x-\sqrt{\lambda_t}y$, we have $\mathrm{d}u = \left\lvert-\sqrt{\lambda_t}^d\right\rvert\mathrm{d}y$ and
    \begin{align*}
        \nabla\log\mu_t(x)&= \int\left(\tfrac{1}{\mu_t(x)}\tfrac{1}{\sqrt{1 - \lambda_t}^d}\nu\left(\tfrac{x-\sqrt{\lambda_t}y}{\sqrt{1 - \lambda_t}}\right)\pi\left(y\right)\right)\tfrac{1}{\sqrt{\lambda_t}}\nabla\log\pi(y)\mathrm{d}y\\
        &= \int \varrho_{t,x}(y)\tfrac{1}{\sqrt{\lambda_t}}\nabla\log\pi(y)\mathrm{d}y = \mathbb{E}_{\varrho_{t,x}}\left[\tfrac{1}{\sqrt{\lambda_t}}\nabla\log\pi(Y)\right].
    \end{align*}
\end{proof}
\begin{proposition}\label{app:prop:convex_combination_matrix} As a consequence of Lemma~\ref{app:lemma:score_identity_general_nu} and Lemma~\ref{app:lemma:target_score_identity_general_nu}, for $\mathbf{A}\in\mathbb{R}^{d\times d}$ and $t \in [0,1]$, it holds that
    \begin{align*}
    \nabla\log\mu_t(x) = \mathbb{E}_{\varrho_{t,x}}\left[\tfrac{1}{\sqrt{1 - \lambda_t}}\mathbf{A}{\nabla\log\nu\left(\tfrac{x - \sqrt{\lambda_t}Y}{\sqrt{1 - \lambda_t}}\right)} + \tfrac{1}{\sqrt{\lambda_t}}(\mathbf{I} - \mathbf{A})\nabla\log\pi(Y)\right].
\end{align*}
\end{proposition}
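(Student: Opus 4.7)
The plan is to take the two score identities already established in Lemmas~\ref{app:lemma:score_identity_general_nu} and \ref{app:lemma:target_score_identity_general_nu} and form a matrix-weighted affine combination of them. Both identities express $\nabla\log\mu_t(x)$ as an expectation of a different test function against the same conditional measure $\varrho_{t,x}$, so a convex (or more generally affine) combination on the left-hand side is trivially still $\nabla\log\mu_t(x)$, and linearity of expectation lets us move the weighting inside.

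Concretely, I would first apply the constant matrix $\mathbf{A}\in\mathbb{R}^{d\times d}$ to both sides of the DSI in Lemma~\ref{app:lemma:score_identity_general_nu}, giving
\[
\mathbf{A}\,\nabla\log\mu_t(x)=\mathbb{E}_{\varrho_{t,x}}\!\left[\frac{1}{\sqrt{1-\lambda_t}}\,\mathbf{A}\,\nabla\log\nu\!\left(\frac{x-\sqrt{\lambda_t}Y}{\sqrt{1-\lambda_t}}\right)\right],
\]
where the matrix can be pulled inside the expectation because it does not depend on $Y$. Similarly, applying $(\mathbf{I}-\mathbf{A})$ to both sides of the TSI in Lemma~\ref{app:lemma:target_score_identity_general_nu} yields
\[
(\mathbf{I}-\mathbf{A})\,\nabla\log\mu_t(x)=\mathbb{E}_{\varrho_{t,x}}\!\left[\frac{1}{\sqrt{\lambda_t}}\,(\mathbf{I}-\mathbf{A})\,\nabla\log\pi(Y)\right].
\]

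Adding these two identities, the left-hand side collapses to $\mathbf{A}\nabla\log\mu_t(x)+(\mathbf{I}-\mathbf{A})\nabla\log\mu_t(x)=\nabla\log\mu_t(x)$, while linearity of expectation combines the two right-hand sides into the single expectation claimed in the proposition. There is no real obstacle here: the statement is essentially a tautology of the form ``$v=\mathbf{A}v+(\mathbf{I}-\mathbf{A})v$ when both expressions on the right equal $v$,'' so the entire proof is a one-line linear combination of the two preceding lemmas, valid for any (even non-symmetric, non-invertible) matrix $\mathbf{A}$, which is exactly the generality needed later for the matrix control-variate schedule of Proposition~\ref{prop:optimal_matrix_cv_schedule}.
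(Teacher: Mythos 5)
Your proof is correct and matches the paper's own argument, which likewise observes that the identity is an immediate matrix-weighted combination of Lemmas~\ref{app:lemma:score_identity_general_nu} and \ref{app:lemma:target_score_identity_general_nu} via linearity of expectation. Nothing is missing; the paper simply states this in one line where you have spelled out the two weighted identities explicitly.
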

\begin{proof}
The proof is immediate as a convex combination of the two identities given in Lemma~\ref{app:lemma:score_identity_general_nu} and Lemma~\ref{app:lemma:target_score_identity_general_nu}.
\end{proof}

\subsection{Control Variate Schedules}\label{subsec_app:control_variate_schedule}
\begin{remark}\label{app:remark:matrix_control_variate_identity} The identity in Proposition~\ref{app:prop:convex_combination_matrix} can viewed as introducing a matrix-valued control variate. To see this, starting from Lemma~\ref{app:lemma:score_identity_general_nu} and the fact that $\mathbb{E}_{\varrho_{t,x}}[\nabla\log\varrho_{t,x}(Y)] = 0$, we can choose a control variate matrix $\mathbf{B}\in\mathbb{R}^{d\times d}$ to get
\begin{align*}
    \nabla\log\mu_t(x) &= \mathbb{E}_{\varrho_{t,x}}\left[\tfrac{1}{\sqrt{1 - \lambda_t}}\nabla\log\nu\left(\tfrac{x - \sqrt{\lambda_t}Y}{\sqrt{1 - \lambda_t}}\right) - \mathbf{B}\nabla\log\varrho_{t,x}(Y) \right] + \mathbf{B}\mathbf{0} \\
    &= \mathbb{E}_{\varrho_{t,x}}\left[\tfrac{1}{\sqrt{1 - \lambda_t}}\nabla\log\nu\left(\tfrac{x - \sqrt{\lambda_t}Y}{\sqrt{1 - \lambda_t}}\right) - \mathbf{B} \left(-\tfrac{\sqrt{\lambda_t}}{\sqrt{1 - \lambda_t}}\nabla\log\nu\left(\tfrac{x - \sqrt{\lambda_t}Y}{\sqrt{1 - \lambda_t}}\right) + \nabla\log\pi(Y)\right)\right]\\
    &= \mathbb{E}_{\varrho_{t,x}}\left[\tfrac{1}{\sqrt{1 - \lambda_t}}(\mathbf{I + \sqrt{\lambda_t}\mathbf{B}})\nabla\log\nu\left(\tfrac{x - \sqrt{\lambda_t} Y}{\sqrt{1 - \;\lambda_t}}\right) - \tfrac{1}{\sqrt{\lambda_t}}(\sqrt{\lambda_t}\mathbf{B})\nabla\log\pi(Y)\right].
\end{align*}
Denoting $\mathbf{A} := \mathbf{I} + \sqrt{\lambda_t}\mathbf{B}$, we precisely have the convex combination stated above. \hfill$\square$
\end{remark}
Next we start our analysis from the scalar case. Recall that $\phi^{\alpha \mathbf{I}}_{t,x}(y) = \tfrac{\alpha}{\sqrt{1 - \lambda_t}}{\nabla\log\nu\left(\tfrac{x - \sqrt{\lambda_t}Y}{\sqrt{1 - \lambda_t}}\right)} + \tfrac{1 - \alpha}{\sqrt{\lambda_t}}\nabla\log\pi(Y)$. Having introduced a control variate, we now seek the schedule $\alpha_t$ minimising the variance of score estimates. 

\begin{lemma}\label{lemma:cv_schedule_single_expectation}
    Fix $x \in \mathbb{R}^d$, the $x$-dependent CV schedule $\alpha_{t,x}^\ast\in\mathbb{R}$ minimising the variance of the score estimate is given by
    \begin{align*}
        \alpha_{t,x}^\ast = \argmin_{\alpha\in\mathbb{R}}\mathrm{Var}_{\varrho_{t,x}}[\phi^{\alpha \mathbf{I}}_{t,x}(Y)] = \frac{\mathbb{E}_{\varrho_{t,x}}\left[\left\langle \nabla\log\pi(Y), \nabla\log\varrho_{t,x}(Y)\right\rangle\right]}{\mathbb{E}_{\varrho_{t,x}}\left[\lVert \nabla\log\varrho_{t,x}(Y)\rVert^2\right]}.
    \end{align*}
\end{lemma}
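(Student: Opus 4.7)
The plan is to recast $\phi^{\alpha\mathbf{I}}_{t,x}$ as a single unbiased estimator plus $\alpha$ times a mean-zero correction, so that minimising the variance reduces to a one-dimensional quadratic optimisation. First, I would set $u := \frac{1}{\sqrt{1-\lambda_t}}\nabla\log\nu\!\left(\tfrac{x-\sqrt{\lambda_t}Y}{\sqrt{1-\lambda_t}}\right)$ and $v := \frac{1}{\sqrt{\lambda_t}}\nabla\log\pi(Y)$, both unbiased for $\nabla\log\mu_t(x)$ under $Y\sim\varrho_{t,x}$ by Lemmas~\ref{app:lemma:score_identity_general_nu} and~\ref{app:lemma:target_score_identity_general_nu}. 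Direct differentiation of the log-posterior $\varrho_{t,x}(y)\propto \nu(\tfrac{x-\sqrt{\lambda_t}y}{\sqrt{1-\lambda_t}})\pi(y)$ in $y$ yields $\nabla\log\varrho_{t,x}(Y) = \sqrt{\lambda_t}\,(v-u)$, so that
\begin{equation*}
\phi^{\alpha\mathbf{I}}_{t,x}(Y) = \alpha u + (1-\alpha)v = v - \tfrac{\alpha}{\sqrt{\lambda_t}}\,\nabla\log\varrho_{t,x}(Y).
\end{equation*}

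Next, I would use that $W := \nabla\log\varrho_{t,x}(Y)$ is mean-zero under $\varrho_{t,x}$ (standard score identity), hence $\mathbb{E}[\langle v-\mathbb{E}[v],W\rangle] = \mathbb{E}[\langle v,W\rangle]$. Expanding the trace-of-covariance,
\begin{equation*}
\mathrm{Var}_{\varrho_{t,x}}\!\left[\phi^{\alpha\mathbf{I}}_{t,x}(Y)\right]
= \mathrm{Var}_{\varrho_{t,x}}[v] - \tfrac{2\alpha}{\sqrt{\lambda_t}}\,\mathbb{E}_{\varrho_{t,x}}[\langle v,W\rangle] + \tfrac{\alpha^2}{\lambda_t}\,\mathbb{E}_{\varrho_{t,x}}[\|W\|^2],
\end{equation*}
which is a strictly convex quadratic in $\alpha$ whenever $\mathbb{E}_{\varrho_{t,x}}[\|W\|^2]>0$.

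Finally, setting the derivative in $\alpha$ to zero gives
\begin{equation*}
\alpha^\ast_{t,x} = \frac{\sqrt{\lambda_t}\,\mathbb{E}_{\varrho_{t,x}}[\langle v,W\rangle]}{\mathbb{E}_{\varrho_{t,x}}[\|W\|^2]} = \frac{\mathbb{E}_{\varrho_{t,x}}[\langle \nabla\log\pi(Y),\nabla\log\varrho_{t,x}(Y)\rangle]}{\mathbb{E}_{\varrho_{t,x}}[\|\nabla\log\varrho_{t,x}(Y)\|^2]},
\end{equation*}
after substituting $v = \nabla\log\pi(Y)/\sqrt{\lambda_t}$ and cancelling the $\sqrt{\lambda_t}$. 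There is no real obstacle beyond bookkeeping; the only subtle step is recognising the convenient reparameterisation $\phi^{\alpha\mathbf{I}}_{t,x} = v + \beta W$ with $\beta = -\alpha/\sqrt{\lambda_t}$, which exposes $\alpha$ as a standard control-variate coefficient against the mean-zero log-posterior score.
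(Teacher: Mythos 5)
Your proof is correct and follows essentially the same route as the paper's: both reduce the problem to a one-dimensional quadratic in $\alpha$, use the identity $\nabla\log\varrho_{t,x}(y)=\sqrt{\lambda_t}\bigl(s_2(y)-s_1(x,y)\bigr)$ relating the difference of the two unbiased estimators to the posterior score, and set the derivative to zero with convexity guaranteed by $\mathbb{E}_{\varrho_{t,x}}[\lVert\nabla\log\varrho_{t,x}(Y)\rVert^2]>0$. Your reparameterisation $\phi^{\alpha\mathbf{I}}_{t,x}=v-\tfrac{\alpha}{\sqrt{\lambda_t}}W$ with mean-zero $W$ is a slightly tidier way of organising the same computation (the paper instead minimises the second moment, which differs from the variance by a constant), but it is not a genuinely different argument.
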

\begin{proof}

Denote $s_1(x,y):= \tfrac{1}{\sqrt{1 - \lambda_t}}\nabla\log\nu\left(\tfrac{x-\sqrt{\lambda_t}y}{\sqrt{1 - \lambda_t}}\right)$ and $s_2(y):=\tfrac{1}{\sqrt{\lambda_t}}\nabla\log\pi\left(y\right)$. The variance is given by
\begin{align*}
     \mathrm{Var}_{\varrho_{t,x}}[\phi_{t,x}^{\alpha\mathbf{I}}(Y)]= {\mathbb{E}_{\varrho_{t,x}}[\lVert \alpha s_1(x, Y) + (1 - \alpha) s_2(Y) \rVert^2]} - \lVert\nabla\log\mu_t(x)\rVert^2.
\end{align*}
Ignoring the additive constant, we define the objective
\begin{align*}
    \mathcal{L}_{t,x}^{\text{Var}}(\alpha) ={\mathbb{E}_{\varrho_{t,x}}[\lVert \alpha s_1(x, Y) + (1 - \alpha) s_2(Y) \rVert^2]},
\end{align*}
and want to find the minimiser $\alpha_{t,x}^\ast = {\argmin_{\alpha\in\mathbb{R}}}\ \mathcal{L}_{t,x}^{\text{Var}}(\alpha)$. Rearranging the objective in terms of $\alpha$, we get
\begin{align*}
    \mathcal{L}_{t,x}^{\text{Var}}(\alpha)
    &= \alpha^2\mathbb{E}_{\varrho_{t,x}}\left[\lVert s_1(x, Y) - s_2(Y)\rVert^2 \right]\\
    &+ \alpha\mathbb{E}_{\varrho_{t,x}}\left[2\langle s_2(Y), s_1(x,Y) - s_2(Y)\rangle\right] + \mathbb{E}_{\varrho_{t,x}}\left[\lVert s_2(Y)\rVert^2\right].
\end{align*}
Taking the derivative with respect to $\alpha$, we have
\begin{align*}
    \tfrac{\partial}{\partial a}\mathcal{L}_{t,x}^{\text{Var}}(\alpha) = 2\alpha \mathbb{E}_{\varrho_{t,x}}[\lVert s_1(x, Y) - s_2(Y)\rVert^2] + 2\mathbb{E}_{\varrho_{t,x}}[\lVert s_2(Y), s_1(x,Y) - s_2(Y)\rVert^2],
\end{align*}
and equating it to zero yields the stationary solution
\begin{equation*}
    \alpha_{t,x}^\ast = \frac{\mathbb{E}_{\varrho_{t,x}}\left[\langle -s_2(Y), s_1(x,Y) - s_2(Y)\rangle\right]}{\mathbb{E}_{\varrho_{t,x}}\left[\lVert s_1(x,Y) - s_2(Y)\rVert^2\right]}.
\end{equation*}
As the objective has a positive second derivative everywhere
\begin{align*}
    \tfrac{\partial^2}{\partial \alpha^2}\mathcal{L}_{t,x}^{\text{Var}}(\alpha) &= 2\mathbb{E}_{\varrho_{t,x}}\left[\lVert s_1(x,Y) - s_2(Y)\rVert^2\right]\\
    &=2\mathbb{E}_{\varrho_{t,x}}[\lVert\nabla\log\varrho_{t,x}(Y)\rVert^2]> 0,
\end{align*}
this is indeed a global minimiser. Rewriting $-s_1(x,y) + s_2(y) = \tfrac{1}{\sqrt{\lambda_t}}\nabla\log\varrho_{t,x}(y)$, we have
\begin{align*}
    \alpha_{t,x}^\ast = \frac{\mathbb{E}_{\varrho_{t,x}}\Big[\Big\langle\nabla\log\pi(Y), \nabla\log\varrho_{t,x}(Y)\Big\rangle\Big]}{\mathbb{E}_{\varrho_{t,x}}\left[\lVert \nabla\log\varrho_{t,x}(Y)\rVert^2\right]}.
\end{align*}
\end{proof}

As we have auxiliary variables for estimating expectations with respect to $\varrho_{t,x}$ and evaluate the target score at their positions when propagating them, we can estimate the optimal schedule above at no additional cost. This schedule, however, is a function of $x$. Being mindful of estimation in practice, it may be beneficial to predict a single time-varying schedule to be used for all the samples. A natural choice is to minimise the expectation of the objective with respect to $\mu_t$, i.e. find $\alpha_{t,x}^\ast = \argmin_{\alpha\in\mathbb{R}}\mathbb{E}_{\mu_t}[\mathcal{L}_{t,X}^{\mathrm{Var}}(\alpha)]$.

To do this, we first show a result allowing us to rewrite dot products of double expectations with respect to the path marginal and the posterior into expectations with respect to the endpoint distributions. This is made possible due to the hierarchical structure of how the samples and auxiliary variables are defined.

\begin{lemma}\label{lemma:double_expectation_results}
    For sufficiently smooth test functions $\phi$, we have that
    \begin{align}
        \mathbb{E}_{\mu_t}[\mathbb{E}_{\varrho_{t,X}}[\langle\phi(Y), \nabla\log\varrho_{t,X}(Y)\rangle]] &= \mathbb{E}_{\pi}[\langle\phi(X),\nabla\log\pi(X)\rangle]\label{eq:y_dot_product_double_expectation},\\
        \mathbb{E}_{\mu_t}\left[\mathbb{E}_{\varrho_{t,X}}\left[\left\langle\phi\left(\tfrac{X-\sqrt{\lambda_t}Y}{\sqrt{1 - \lambda_t}}\right), \nabla\log\varrho_{t,X}(Y)\right\rangle\right]\right] &= -\sqrt{\tfrac{\lambda_t}{1 - \lambda_t}}\mathbb{E}_{\nu}[\langle\phi(X), \nabla\log\nu(X)\rangle].\label{eq:diff_dot_product_double_expectation}
    \end{align}
\end{lemma}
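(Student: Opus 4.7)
The central observation is that the joint density of $(X,Y)$ with $X\sim\mu_t$ and $Y\mid X\sim\varrho_{t,X}$ admits a particularly clean form after a change of variables. Starting from
\[
\mu_t(x)\varrho_{t,x}(y)=\frac{1}{(1-\lambda_t)^{d/2}}\nu\!\left(\tfrac{x-\sqrt{\lambda_t}y}{\sqrt{1-\lambda_t}}\right)\pi(y),
\]
I would set $z=(x-\sqrt{\lambda_t}y)/\sqrt{1-\lambda_t}$, for which $\mathrm{d}x=(1-\lambda_t)^{d/2}\mathrm{d}z$, so that the joint law of $(Z,Y)$ (viewing $z$ as a function of $(x,y)$) factorises as $\nu(z)\pi(y)$. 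This is essentially the representation already used in \eqref{eq:marginal_as_interpolation}.

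Next I would write the posterior score explicitly as
\[
\nabla_y\log\varrho_{t,x}(y)=-\frac{\sqrt{\lambda_t}}{\sqrt{1-\lambda_t}}\nabla\log\nu\!\left(\tfrac{x-\sqrt{\lambda_t}y}{\sqrt{1-\lambda_t}}\right)+\nabla\log\pi(y),
\]
so that under the change of variables the integrand splits into a sum of two pieces, each of which is a product of a function of $y$ alone and a function of $z$ alone.

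For \eqref{eq:y_dot_product_double_expectation}, applying the change of variables gives
\begin{align*}
\mathbb{E}_{\mu_t}\mathbb{E}_{\varrho_{t,X}}[\langle\phi(Y),\nabla\log\varrho_{t,X}(Y)\rangle]
&=-\tfrac{\sqrt{\lambda_t}}{\sqrt{1-\lambda_t}}\Big\langle\mathbb{E}_\pi[\phi(Y)],\mathbb{E}_\nu[\nabla\log\nu(Z)]\Big\rangle\\
&\quad+\mathbb{E}_\pi[\langle\phi(Y),\nabla\log\pi(Y)\rangle].
\end{align*}
The first term vanishes because $\int\nabla\nu(z)\,\mathrm{d}z=0$ (equivalently $\mathbb{E}_\nu[\nabla\log\nu]=0$), giving the claim. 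For \eqref{eq:diff_dot_product_double_expectation}, the analogous computation with the test function evaluated at $z$ instead of $y$ yields
\begin{align*}
\mathbb{E}_{\mu_t}\mathbb{E}_{\varrho_{t,X}}\!\left[\Big\langle\phi\!\left(\tfrac{X-\sqrt{\lambda_t}Y}{\sqrt{1-\lambda_t}}\right),\nabla\log\varrho_{t,X}(Y)\Big\rangle\right]
&=-\tfrac{\sqrt{\lambda_t}}{\sqrt{1-\lambda_t}}\mathbb{E}_\nu[\langle\phi(Z),\nabla\log\nu(Z)\rangle]\\
&\quad+\Big\langle\mathbb{E}_\nu[\phi(Z)],\mathbb{E}_\pi[\nabla\log\pi(Y)]\Big\rangle,
\end{align*}
and now the second term vanishes by $\mathbb{E}_\pi[\nabla\log\pi]=0$.

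The proof is essentially a bookkeeping exercise once the change of variables is made, so the main (minor) obstacle is simply justifying the vanishing of the integrals $\int\nabla\nu=0$ and $\int\nabla\pi=0$, which follows from sufficient decay at infinity, and making sure the cross term killed in each identity is the correct one (it is the factor that does not depend on the variable being differentiated).
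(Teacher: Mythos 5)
Your proof is correct, but it takes a genuinely different route from the paper's. The paper applies Stein's lemma to the inner expectation to convert $\langle\phi,\nabla\log\varrho_{t,X}\rangle$ into a divergence $-\langle\nabla,\phi\rangle$, marginalises out the now-irrelevant variable ($x$ for the first identity, $y$ after an extra change of variables for the second), and then applies Stein's lemma a second time under the surviving endpoint measure. You instead exploit the fact that under the map $(x,y)\mapsto(z,y)$ with $z=(x-\sqrt{\lambda_t}y)/\sqrt{1-\lambda_t}$ the joint law $\mu_t(x)\varrho_{t,x}(y)\,\mathrm{d}x\,\mathrm{d}y$ becomes the product measure $\nu(z)\pi(y)\,\mathrm{d}z\,\mathrm{d}y$ (this is just the interpolation coupling in \eqref{eq:marginal_as_interpolation}), decompose the posterior score as $-\sqrt{\lambda_t/(1-\lambda_t)}\,\nabla\log\nu(z)+\nabla\log\pi(y)$, and kill the cross term via independence together with $\mathbb{E}_\nu[\nabla\log\nu]=\mathbb{E}_\pi[\nabla\log\pi]=0$. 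Both arguments hinge on the same kind of integration-by-parts/decay hypothesis (the paper needs the boundary terms in Stein's lemma to vanish; you need $\int\nabla\nu=\int\nabla\pi=0$), but your version requires no differentiability of $\phi$ at all, only integrability against the endpoint measures, whereas the paper's double application of Stein's lemma implicitly assumes $\phi$ has a well-defined divergence. Your route also handles both identities with a single change of variables, whereas the paper needs a separate substitution for the second one. The trade-off is minor: the paper's presentation makes the "marginalise out the unused variable" step very explicit, while yours makes the independence structure of the coupling the star. Either proof is acceptable; yours is arguably the cleaner of the two.
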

\begin{proof}
    These results follow after invoking Stein's lemma and marginalising out a variable. We first show \eqref{eq:y_dot_product_double_expectation}. Denote the LHS by $\Phi_1$. Using Stein's lemma in the inner expectation, we have
    \begin{align*}
        \Phi_1
        &= \mathbb{E}_{\mu_t}\left[\mathbb{E}_{\varrho_{t,X}}\left[\langle \phi(Y), \nabla\log\varrho_{t,X}(Y)\rangle\right]\right] = \mathbb{E}_{\mu_t}\left[\mathbb{E}_{\varrho_{t,X}}\left[\langle \nabla, -\phi(Y)\rangle\right]\right].
    \end{align*}
    As the inner expression only depends on $Y$, we can marginalise out $X$ as follows
    \begin{align*}
        \Phi_1 &= \iint \mu_t(x)\varrho_{t,x}(y)\langle \nabla, -\phi(y)\rangle \mathrm{d} y \mathrm{d} x = \iint\tfrac{1}{\sqrt{1 - \lambda_t}^d}\nu\left(\tfrac{x - \sqrt{\lambda_t}y}{\sqrt{1 - \lambda_t}}\right)\pi\left(y\right)\langle \nabla, -\phi(y) \rangle \mathrm{d} x\mathrm{d} y\\
        &= \int\pi(y)\langle \nabla, -\phi(y) \rangle \int \tfrac{1}{\sqrt{1-\lambda_t}^d}\nu\left(\tfrac{x - \sqrt{\lambda_t} y}{\sqrt{1 - \lambda_t}}\right) \mathrm{d} x \mathrm{d} y\\
        &= \int\pi(y)\langle\nabla, -\phi(y)\rangle \mathrm{d} y = \mathbb{E}_{\pi}[\langle\nabla, -\phi(X)\rangle].
    \end{align*}
    Invoking Stein's lemma once more, we precisely have
    \begin{align*}
        \Phi_1 &= \mathbb{E}_\pi [\langle\phi(X), \nabla\log\pi(X)\rangle].
    \end{align*}
    For \eqref{eq:diff_dot_product_double_expectation}, denote the LHS by $\Phi_2$. We similarly have
    \begin{align*}
        \Phi_2 &=\mathbb{E}_{\mu_t}\left[\mathbb{E}_{\varrho_{t,X}}\left[\left\langle\phi\left(\tfrac{X - \sqrt{\lambda_t} Y}{\sqrt{1 - \lambda_t}}\right), \nabla\log\varrho_{t,X}(Y)\right\rangle\right]\right]\\
        &= \mathbb{E}_{\mu_t}\left[\mathbb{E}_{\varrho_{t,X}}\left[\left\langle \nabla, -\phi\left(\tfrac{X - \sqrt{\lambda_t}Y}{\sqrt{1 - \lambda_t}}\right)\right\rangle\right]\right]\\
        &= \iint \mu_t(x)\varrho_{t,x}(y)\left\langle\nabla,-\phi\left(\tfrac{x - \sqrt{\lambda_t}y}{\sqrt{1 - \lambda_t}}\right)\right\rangle \mathrm{d} y \mathrm{d} x\\
        &= \iint\tfrac{1}{\sqrt{1 - \lambda_t}^d}\nu\left(\tfrac{x - \sqrt{\lambda_t}y}{\sqrt{1 - \lambda_t}}\right)\pi\left(y\right)\left\langle\nabla,-\phi\left(\tfrac{x - \sqrt{\lambda_t}y}{\sqrt{1 - \lambda_t}}\right)\right\rangle \mathrm{d} y \mathrm{d} x.
    \end{align*}
    By a change of variables $y = \tfrac{1}{\sqrt{\lambda_t}}(x-\sqrt{1 - \lambda_t}u)$, we have $\mathrm{d} y = \sqrt{\tfrac{1 - \lambda_t}{\lambda_t}}^d \mathrm{d} u$ and, for any divergence, $\langle \nabla_y, f(y)\rangle = -\sqrt{\tfrac{\lambda_t}{{1 - \lambda_t}}}\left\langle \nabla_u, f\left(\tfrac{1}{\sqrt{\lambda_t}}(x - \sqrt{1 - \lambda_t}u)\right) \right\rangle$. This results in
    \begin{align*}
        \Phi_2 &= \iint\nu\left(u\right)\pi\left(\tfrac{x - \sqrt{1-\lambda_t}u}{\sqrt{\lambda_t}}\right)\left(-\sqrt{\tfrac{\lambda_t}{1 - \lambda_t}}\right)\langle\nabla, -\phi(u)\rangle \tfrac{1}{\sqrt{\lambda_t}^d}\mathrm{d} u\mathrm{d} x\\
        &= -\sqrt{\tfrac{\lambda_t}{1 - \lambda_t}}\int \nu(u) \langle\nabla, -\phi(u)\rangle \int \tfrac{1}{\sqrt{\lambda_t}^d}\pi\left(\tfrac{x - \sqrt{1-\lambda_t}u}{\sqrt{\lambda_t}}\right)\mathrm{d} x\mathrm{d} u\\
        &= -\sqrt{\tfrac{\lambda_t}{1 - \lambda_t}}\int\nu(u)\langle\nabla, -\phi(u)\rangle \mathrm{d} u =\sqrt{\tfrac{\lambda_t}{1 - \lambda_t}}\mathbb{E}_{\nu}\left[\langle\nabla, \phi(X)\right].
    \end{align*}
    Using Stein's lemma, we finally have
    \begin{align*}
        \Phi_2 = -\sqrt{\tfrac{\lambda_t}{1 - \lambda_t}}\mathbb{E}_{\nu}[\langle\phi(X), \nabla\log\nu(X)\rangle].
    \end{align*}
\end{proof}

Now, we minimise the variance in expectation, resulting in a position-independent schedule. It turns out to admit a simpler form, revealing a clear dependence on the endpoint distributions as well as the schedule $\lambda_t$.
\subsubsection{Optimal Scalar CV Schedule}\label{app:proof:optimal_scalar_cv_schedule}
We restate the proposition for clarity and provide its proof.
\propOptimalScalarCVSchedule*
\begin{proof}
Denote again $s_1(x,y):= \tfrac{1}{\sqrt{1 - \lambda_t}}\nabla\log\nu\left(\tfrac{x-\sqrt{\lambda_t}y}{\sqrt{1 - \lambda_t}}\right)$ and $s_2(y):=\tfrac{1}{\sqrt{\lambda_t}}\nabla\log\pi\left(y\right)$. Similar to Lemma \ref{lemma:cv_schedule_single_expectation}, we can define the objective
\begin{align*}
    \mathcal{L}_{t,x}^{\text{Var}}(\alpha) ={\mathbb{E}_{\varrho_{t,x}}[\lVert \alpha s_1(x, Y) + (1 - \alpha) s_2(Y) \rVert^2]}.
\end{align*}
Furthermore, we can follow the workings in Lemma \ref{lemma:cv_schedule_single_expectation} to identically minimise this objective in expectation and yield
\begin{align*}
    \alpha_t^\ast = \argmin_{\alpha\in\mathbb{R}} \mathbb{E}_{\mu_t}\left[\mathcal{L}_{t,X}^{\text{Var}}(\alpha)\right]
    = \frac{\mathbb{E}_{\mu_t}\left[\mathbb{E}_{\varrho_{t,X}}\left[\left\langle \nabla\log\pi(Y), \nabla\log\varrho_{t,X}(Y)\right\rangle\right]\right]}{\mathbb{E}_{\mu_t}\left[\mathbb{E}_{\varrho_{t,X}}\left[\lVert \nabla\log\varrho_{t,X}(Y)\rVert^2\right]\right]}.
\end{align*}
Using results from Lemma \ref{lemma:double_expectation_results}, for test functions $\phi_1(y)=\nabla\log\pi(y)$ and $\phi_2(y)=\sqrt{\tfrac{\lambda_t}{1-\lambda_t}}\nabla\log\nu(y)$, we have that
\begin{align*}
    \mathbb{E}_{\mu_t}\left[\mathbb{E}_{\varrho_{t,X}}\left[\left\langle\phi_1(Y), \nabla\log\varrho_{t,X}(Y)\right\rangle\right]\right] &= \mathbb{E}_\pi[\langle\phi_1(X), \nabla\log\pi(X)\rangle]\\
    &=\mathbb{E}_\pi[\lVert\nabla\log\pi(X)\rVert^2],\\
    \mathbb{E}_{\mu_t}\left[\mathbb{E}_{\varrho_{t,X}}\left[\left\langle\phi_2\left(\tfrac{X - \sqrt{\lambda_t}Y}{\sqrt{1 - \lambda_t}}\right), \nabla\log\varrho_{t,X}(Y)\right\rangle\right]\right] &= -\sqrt{\tfrac{\lambda_t}{1 - \lambda_t}}\mathbb{E}_\nu[\langle\phi(X), \nabla\log\nu(X)\rangle]\\
    &=-\tfrac{\lambda_t}{1 - \lambda_t}\mathbb{E}_{\nu}[\lVert\nabla\log\nu(X)\rVert^2].
\end{align*}
Since $\nabla\log\varrho_{t,x}(y) = -\phi_2\left(\tfrac{x-\sqrt{\lambda_t}y}{\sqrt{1 - \lambda_t}}\right) + \phi_1(y)$, it immediately follows that
\begin{align*}
    \alpha_t^\ast =\frac{\mathbb{E}_\pi [\lVert\nabla\log\pi(X)\rVert^2]}{\tfrac{\lambda_t}{1 - \lambda_t}\mathbb{E}_\nu\left[\lVert\nabla\log\nu(X)\rVert^2\right] + \mathbb{E}_\pi [\lVert\nabla\log\pi(X)\rVert^2]}.
\end{align*}
Rewriting to reflect the path structure and noting $\mathrm{Var}_p[\nabla\log p(X)]=\mathbb{E}_p[\lVert\nabla\log p(X)\rVert^2]$, we have
\begin{align*}
    \alpha_t^\ast
    &= \frac{\tfrac{1}{\lambda_t}\mathrm{Var}_{\pi}[\nabla\log\pi(X)]}{\tfrac{1}{1-\lambda_t}\mathrm{Var}_{\nu}[\nabla\log\nu(X)]+\tfrac{1}{\lambda_t}\mathrm{Var}[\nabla\log\pi(X)]}.
\end{align*}
\end{proof}

A key advantage of estimating such a schedule is we can combine all our auxiliary variables across several samples to yield a larger sample size and a more stable schedule altogether. In practice, we can use the empirical distribution formed by our propagated samples in place of $\mu_t$. 

\subsubsection{Optimal Diagonal CV Schedule}\label{app:proof:optimal_diagonal_cv_schedule}
Now, we consider schedules that are restricted to be diagonal. We have the following result that describes a similar optimum. 
\begin{proposition}
    \label{prop:diagonal_cv_schedule}
    The diagonal CV schedule $\mathbf{a}_t^\ast = (a_{t,1}^{\ast}, \ldots, a_{t,d}^\ast)^\top\in\mathbb{R}^d$, minimising the total variance in expectation, i.e.
    \begin{alignat*}{2}
        \mathbf{a}_t^\ast &= {\argmin_{\mathbf{a}\in\mathbb{R}^d}}\ \mathbb{E}_{X \sim \mu_t}[\mathrm{Var}_{Y\sim \varrho_{t,X}}[\phi_{t,X}^{\mathrm{diag}(\mathbf{a})}(Y)]],
    \end{alignat*}
    has its $i$th component equal to
    \begin{align*}
        a_{t,i}^{\ast} = \frac{\tfrac{1}{\lambda_t}\mathbb{E}_\pi[(\nabla\log\pi(X))_i^2]}{\tfrac{1}{1 - \lambda_t}\mathbb{E}_{\nu}[(\nabla\log\nu(X))_i^2]+\tfrac{1}{\lambda_t}\mathbb{E}_\pi[(\nabla\log\pi(X))_i^2]}.
    \end{align*}
\end{proposition}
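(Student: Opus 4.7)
The plan is to exploit the decoupling induced by the diagonal structure of $\mathbf{A}=\mathrm{diag}(\mathbf{a})$. Observe that the $i$-th coordinate of the test function can be written as
\begin{align*}
  \bigl(\phi_{t,x}^{\mathrm{diag}(\mathbf{a})}(y)\bigr)_i
  &= \frac{a_i}{\sqrt{1-\lambda_t}}\Bigl(\nabla\log\nu\Bigl(\tfrac{x-\sqrt{\lambda_t}y}{\sqrt{1-\lambda_t}}\Bigr)\Bigr)_i
  + \frac{1-a_i}{\sqrt{\lambda_t}}\bigl(\nabla\log\pi(y)\bigr)_i,
\end{align*}
so each coordinate depends only on the single scalar $a_i$. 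Since $\mathrm{Var}[\phi]=\sum_{i=1}^d\mathrm{Var}[\phi_i]$, the expected total variance splits as a sum of $d$ independent scalar objectives, one per coordinate, and the joint minimisation over $\mathbf{a}\in\mathbb{R}^d$ reduces to $d$ independent one-dimensional minimisations.

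For a fixed $i$, I would then repeat verbatim the argument of Proposition~\ref{prop:optimal_scalar_cv_schedule}, replacing the vectors $s_1,s_2$ by their $i$-th components. The resulting optimiser is
\begin{align*}
  a_{t,i}^\ast
  &= \frac{\mathbb{E}_{\mu_t}\bigl[\mathbb{E}_{\varrho_{t,X}}[(\nabla\log\pi(Y))_i(\nabla\log\varrho_{t,X}(Y))_i]\bigr]}{\mathbb{E}_{\mu_t}\bigl[\mathbb{E}_{\varrho_{t,X}}[(\nabla\log\varrho_{t,X}(Y))_i^2]\bigr]},
\end{align*}
and positivity of the second derivative (a scalar mean square) ensures it is a global minimum.

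To put this expression into the stated form, I will apply coordinate-wise versions of Lemma~\ref{lemma:double_expectation_results}. Concretely, choosing the vector-valued test functions $\phi(y)=(\nabla\log\pi(y))_i e_i$ in \eqref{eq:y_dot_product_double_expectation} and $\phi(u)=\sqrt{\lambda_t/(1-\lambda_t)}\,(\nabla\log\nu(u))_i e_i$ in \eqref{eq:diff_dot_product_double_expectation}, and using the decomposition $\nabla\log\varrho_{t,x}(y)=-\sqrt{\lambda_t/(1-\lambda_t)}\,\nabla\log\nu\bigl(\tfrac{x-\sqrt{\lambda_t}y}{\sqrt{1-\lambda_t}}\bigr)+\nabla\log\pi(y)$, the numerator becomes $\mathbb{E}_\pi[(\nabla\log\pi(X))_i^2]$ and the denominator collapses to $\tfrac{\lambda_t}{1-\lambda_t}\mathbb{E}_\nu[(\nabla\log\nu(X))_i^2]+\mathbb{E}_\pi[(\nabla\log\pi(X))_i^2]$. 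Dividing numerator and denominator by $\lambda_t$ yields the claimed formula.

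The argument is essentially mechanical once decoupling is noted; the only delicate point is verifying that Lemma~\ref{lemma:double_expectation_results} applies componentwise with the choice $\phi=f\cdot e_i$, which is immediate from its proof since Stein's identity passes through linearly in $\phi$. No genuine obstacle arises beyond keeping the coordinate bookkeeping clean.
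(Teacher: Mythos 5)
Your proposal is correct and follows essentially the same route as the paper's proof: both exploit the coordinate-wise decoupling of the objective under a diagonal $\mathbf{A}$ (the paper writes the squared norm as $\sum_i [a_i s_1^i + (1-a_i)s_2^i]^2$ and differentiates per component, which is exactly your reduction to $d$ scalar problems), and both convert the resulting double expectations via Lemma~\ref{lemma:double_expectation_results} applied to test functions of the form $f(\cdot)\,\mathbf{e}_i$. Your convexity check via the positive scalar second derivatives matches the paper's observation that the Hessian is diagonal with positive entries, so nothing is missing.
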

\begin{proof}
Take $\mathbf{a}=(a_1,\ldots,a_d)^\top\in\mathbb{R}^d$. Denote again $s_1(x,y):= \tfrac{1}{\sqrt{1 - \lambda_t}}\nabla\log\nu\left(\tfrac{x-\sqrt{\lambda_t}y}{\sqrt{1 - \lambda_t}}\right)$ and $s_2(y):=\tfrac{1}{\sqrt{\lambda_t}}\nabla\log\pi\left(y\right)$. Let us further denote the following, whose double expectation is equivalent to the variance objective up to an additive constant
\begin{align*}
    \mathcal{L}_{t}^{\mathrm{Norm}}(x,y,\mathbf{A}) :=\lVert \phi_{t,x}^{\mathbf{A}}(y)\rVert^2 = \lVert \mathbf{A}s_1(x, y) + (\mathbf{I} - \mathbf{A})s_2(y)\rVert^2.
\end{align*}

Using the fact $\mathbf{A}$ is diagonal, we can rewrite it to decouple the contributions of the diagonal components
\begin{align*}
    \mathcal{L}_t^{\text{Norm}}(x,y,\mathrm{diag}(\mathbf{a}))
    &= \sum_{i=1}^{d}{\left[a_i s_1^i(x, y) +(1 - a_i)s_2^i(y)\right]^2},
\end{align*}
where $s_1^i$ and $s_2^i$ are the $i$th components of $s_1$ and $s_2$. Solving for a stationary solution to the original objective, we take the partial derivatives component-wise and move them inside the expectation, i.e. for the $i$th component,
\begin{align*}
    &\tfrac{\partial}{\partial a^i}{\mathbb{E}_{\mu_t}\left[\mathbb{E}_{\varrho_{t,X}}\left[\mathcal{L}_{t}^{\text{Norm}}(X, Y, \text{diag}(\mathbf{a}))\right]\right]}\\
    &= \mathbb{E}_{\mu_t}[\mathbb{E}_{\varrho_{t,X}}[2\left[a_i s_1^i(X, Y) + (1 - a_i) s_2^i(Y))\right](s_1^i(X, Y) - s_2^i(Y))]]\\
    &= 2a_i\mathbb{E}_{\mu_t}[\mathbb{E}_{\varrho_{t,X}}[(s_1^i(X, Y) - s_2^i(Y))^2]]+ 2\mathbb{E}_{\mu_t}[\mathbb{E}_{\varrho_{t,X}}[s_2^i(Y)(s_1^i(X, Y) - s_2^i(Y))]].
\end{align*}
Setting this to zero, we arrive at the stationary solution
\begin{equation*}
    a_{t,i}^\ast = \frac{\mathbb{E}_{\mu_t}[\mathbb{E}_{\varrho_{t,X}}[-s_2^i(Y)(s_1^i(X, Y) - s_2^i(Y))]]}{\mathbb{E}_{\mu_t}[\mathbb{E}_{\varrho_{t,X}}[(s_1^i(X, Y) - s_2^i(Y))^2]]}.
\end{equation*}
Define for now $\phi_1(u) = \tfrac{1}{\sqrt{1 - \lambda_t}}(\nabla\log\nu\left(u\right))_i\mathbf{e}_i$, so we have $\phi_1\left(\tfrac{x - \sqrt{\lambda_t}y}{\sqrt{1 - \lambda_t}}\right) = s_1^i(x,y)\mathbf{e}_i$. Define also $\phi_2(u) = \tfrac{1}{\sqrt{\lambda_t}}(\nabla\log\pi(u))_i\mathbf{e}_i = s_2^i(u)\mathbf{e}_i$. Since $s_1(x,y)-s_2(y) = -\tfrac{1}{\sqrt{\lambda_t}}\nabla\log\varrho_{t,x}(y)$, we can invoke Lemma \ref{lemma:double_expectation_results} for test function $\phi_1$ to get
\begin{align*}
     &\mathbb{E}_{\mu_t}[\mathbb{E}_{\varrho_{t,X}}[s_1^i(X,Y)(s_1^i(X, Y) - s_2^i(Y))]]\\ &= \mathbb{E}_{\mu_t}[\mathbb{E}_{\varrho_{t,X}}[\langle s_1^i(X,Y)\mathbf{e}_i, s_1(X, Y) - s_2(Y)\rangle]]\\
     &= -\tfrac{1}{\sqrt{\lambda_t}}\mathbb{E}_{\mu_t}\left[\mathbb{E}_{\varrho_{t,X}}\left[\left\langle \phi_1\left(\tfrac{X - \sqrt{\lambda_t} Y}{\sqrt{1 - \lambda_t}}\right), \nabla\log\varrho_{t,X}(Y)\right\rangle\right]\right]\\
     &= \tfrac{1}{\sqrt{\lambda_t}}\sqrt{\tfrac{\lambda_t}{1 - \lambda_t}}\mathbb{E}_{\nu}[\langle\phi_1(X), \nabla\log\nu(X)\rangle] = \tfrac{1}{1 - \lambda_t}\mathbb{E}_{\nu}[(\nabla\log\nu(X))_i^2],
\end{align*}
and similarly for test function $\phi_2$ to get
\begin{align*}
     &\mathbb{E}_{\mu_t}[\mathbb{E}_{\varrho_{t,X}}[s_2^i(Y)(s_1^i(X, Y) - s_2^i(Y))]]\\
     &= \mathbb{E}_{\mu_t}[\mathbb{E}_{\varrho_{t,X}}[\langle s_2^i(Y)\mathbf{e}_i, s_1(X, Y) - s_2(Y)\rangle]]= -\tfrac{1}{\sqrt{\lambda_t}}\mathbb{E}_{\mu_t}\left[\mathbb{E}_{\varrho_{t,X}}\left[\left\langle \phi_2\left(Y\right), \nabla\log\varrho_{t,X}(Y)\right\rangle\right]\right]\\
     &= -\tfrac{1}{\sqrt{\lambda_t}}\mathbb{E}_{\pi}\left[\left\langle \phi_2\left(X\right), \nabla\log\pi(X)\right\rangle\right] = -\tfrac{1}{{\lambda_t}}\mathbb{E}_\pi[(\nabla\log\pi(X))_i^2].
\end{align*}
Hence the $i$th component of this stationary solution is given by
\begin{align*}
    a_{t,i}^\ast = \frac{\tfrac{1}{{\lambda_t}}\mathbb{E}_\pi[(\nabla\log\pi(X))_i^2]}{\tfrac{{1}}{1 - \lambda_t}\mathbb{E}_{\nu}[(\nabla\log\nu(X))_i^2]+\tfrac{1}{{\lambda_t}}\mathbb{E}_\pi[(\nabla\log\pi(X))_i^2]}.
\end{align*}
Inspecting the second-order partial derivatives, we have that
\begin{align*}
    &\tfrac{\partial^2}{\partial a_i^2}{\mathbb{E}_{\mu_t}\left[\mathbb{E}_{\varrho_{t,X}}\left[\mathcal{L}_{t}^{\text{Norm}}(X,Y,\mathrm{diag}(\mathbf{a}))\right]\right]}\\
    &= 2\mathbb{E}_{\mu_t}[\mathbb{E}_{\varrho_{t,X}}[(s_1^i(X, Y) - s_2^i(Y))^2]]\\
    &= \tfrac{{2}}{1 - \lambda_t}\mathbb{E}_{\nu}[(\nabla\log\nu(X))_i^2]+\tfrac{2}{{\lambda_t}}\mathbb{E}_\pi[(\nabla\log\pi(X))_i^2]> 0,
\end{align*}
and $\tfrac{\partial^2}{\partial a_i\partial a_j}\mathbb{E}_{\mu_t}\left[\mathbb{E}_{\varrho_{t,X}}\left[\mathcal{L}_{t}^{\text{Norm}}(X,Y,\mathrm{diag}(\mathbf{a}))\right]\right] = 0$ for $i\neq j$ for all $\mathbf{a}\in\mathbb{R}^d$. Hence, the Hessian is positive definite everywhere, given that it is diagonal with positive eigenvalues. We can conclude that the objective is convex and the stationary solution $\alpha_{t,x}^\ast$ is indeed a global minimiser.
\end{proof}

\subsubsection{Optimal Matrix CV Schedule}\label{app:proof:optimal_matrix_cv_schedule}
Now, we consider arbitrary matrix schedules. We restate the proposition and provide its proof.
\propOptimalCVSchedule*
\begin{proof}
    Take $\mathbf{A} \in \mathbb{R}^{d\times d}$ and refer to $A_{ij}$ as its $(i, j)$ entry. Denote again $s_1(x,y):= \tfrac{1}{\sqrt{1 - \lambda_t}}\nabla\log\nu\left(\tfrac{x-\sqrt{\lambda_t}y}{\sqrt{1 - \lambda_t}}\right)$ and  $s_2(y):=\tfrac{1}{\sqrt{\lambda_t}}\nabla\log\pi\left(y\right)$. Let us refer to the following, whose double expectation is equivalent to the variance objective up to an additive constant
\begin{align*}
    \mathcal{L}_{t}^{\mathrm{Norm}}(x,y,\mathbf{A}) :=\lVert \phi_{t,x}^{\mathbf{A}}(y)\rVert^2 = \lVert \mathbf{A}s_1(x, y) + (\mathbf{I} - \mathbf{A})s_2(y)\rVert^2.
\end{align*}
We can rewrite it as follows to show the contributions of the individual entries of the matrix
    \begin{align*}
        \mathcal{L}_{t}^{\text{Norm}}(x,y,\mathbf{A})
        &= \sum_{l=1}^{d}{\left(s_2(y)+\mathbf{A} \left(s_1(x, y) - s_2(y)\right)\right)_l^2}\\
        &= \sum_{l=1}^{d}{\left(s_2^l(y)+\sum_{k=1}^{d}A_{lk}\left(s_1^k(x,y) -s_2^k(y)\right) \right)^2},
    \end{align*}
    where $s_1^i$ and $s_2^i$ are the $i$th components of $s_1$ and $s_2$. Taking the derivative w.r.t. $A_{ij}$, we have
    \begin{align*}
        \tfrac{\partial}{\partial A_{ij}}\mathcal{L}_t^{\text{Norm}}(x, y, \mathbf{A})
        &= 2\left(s_2^i(y)+\sum_{k=1}^{d}{A_{ik}(s_1^k(x, y) -s_2^k(y))}\right)(s_1^j(x, y)-s_2^j(y))\\
        &= 2s_2^i(y)(s_1^j(x, y)-s_2^j(y))\\
        &+2\sum_{k=1}^{d}{A_{ik}(s_1^k(x, y)-s_2^k(y))(s_1^j(x, y)-s_2^j(y))}.
    \end{align*}
    Now, we can consider the actual objective and take the derivative w.r.t. $A_{ij}$
    \begin{align*}
        \tfrac{\partial}{\partial A_{ij}}\mathbb{E}_{\mu_t}[\mathbb{E}_{\varrho_{t,X}}[\mathcal{L}_t^{\text{Norm}}(X,Y,\mathbf{A})]]
        &= \mathbb{E}_{\mu_t}\left[\mathbb{E}_{\varrho_{t,X}}\left[\tfrac{\partial}{\partial A_{ij}}\mathcal{L}_{t}^{\text{Norm}}(X,Y,\mathbf{A})\right]\right]\\
        &=2\mathbb{E}_{\mu_t}[\mathbb{E}_{\varrho_{t,X}}[s_2^i(Y)(s_1^j(X, Y)-s_2^j(Y))]]\\ &+ 2\sum_{k=1}^d{A_{ik}\mathbb{E}_{\mu_t}[\mathbb{E}_{\varrho_{t,X}}[s_1^k(X, Y)-s_2^k(Y))(s_1^j(X, Y)-s_2^j(Y)]]}.
    \end{align*}
    Define $\phi_1(u) = \tfrac{1}{\sqrt{1 - \lambda_t}}(\nabla\log\nu\left(u\right))_i\mathbf{e}_j$, so we have $\phi_1\left(\tfrac{x - \sqrt{\lambda_t}y}{\sqrt{1 - \lambda_t}}\right) = s_1^i(x,y)\mathbf{e}_j$, and also $\phi_2(u) = \tfrac{1}{\sqrt{\lambda_t}}(\nabla\log\pi(u))_i\mathbf{e}_j = s_2^i(u)\mathbf{e}_j$. Using the fact that $s_1(x,y)-s_2(y) = -\tfrac{1}{\sqrt{\lambda_t}}\nabla\log\varrho_{t,x}(y)$, we can invoke Lemma \ref{lemma:double_expectation_results} for test function $\phi_1$ to get
    \begin{align*}
        &\mathbb{E}_{\mu_t}[\mathbb{E}_{\varrho_{t,X}}[s_1^i(X,Y)(s_1^j(X, Y) - s_2^j(Y))]]\\
        &=\mathbb{E}_{\mu_t}[\mathbb{E}_{\varrho_{t,X}}[\langle s_1^i(X,Y)\mathbf{e}_j, s_1(X, Y) - s_2(Y)\rangle]]\\
        &=-\tfrac{1}{\sqrt{\lambda_t}}\mathbb{E}_{\mu_t}\left[\mathbb{E}_{\varrho_{t,X}}\left[\left\langle \phi_1\left(\tfrac{X - \sqrt{\lambda_t} Y}{\sqrt{1 - \lambda_t}}\right), \nabla\log\varrho_{t,X}(Y)\right\rangle\right]\right]\\
        &=\tfrac{1}{\sqrt{\lambda_t}}\sqrt{\tfrac{\lambda_t}{1 - \lambda_t}}\mathbb{E}_{\nu}[\langle\phi_1(X), \nabla\log\nu(X)\rangle]=\tfrac{1}{1 - \lambda_t}\mathbb{E}_{\nu}[(\nabla\log\nu(X))_i(\nabla\log\nu(X))_j],
    \end{align*}
    and similarly for test function $\phi_2$ to get
    \begin{align*}
        &\mathbb{E}_{\mu_t}[\mathbb{E}_{\varrho_{t,X}}[s_2^i(Y)(s_1^j(X, Y) - s_2^j(Y))]]\\
        &=\mathbb{E}_{\mu_t}[\mathbb{E}_{\varrho_{t,X}}[\langle s_2^i(Y)\mathbf{e}_j, s_1(X, Y) - s_2(Y)\rangle]] = -\tfrac{1}{\sqrt{\lambda_t}}\mathbb{E}_{\mu_t}\left[\mathbb{E}_{\varrho_{t,X}}\left[\left\langle \phi_2\left(Y\right), \nabla\log\varrho_{t,X}(Y)\right\rangle\right]\right]\\
        &=-\tfrac{1}{\sqrt{\lambda_t}}\mathbb{E}_{\pi}\left[\left\langle \phi_2\left(X\right), \nabla\log\pi(X)\right\rangle\right] =-\tfrac{1}{{\lambda_t}}\mathbb{E}_\pi[(\nabla\log\pi(X))_i(\nabla\log\pi(X))_j].
    \end{align*}
    Note that we can rewrite 
    \begin{align*}
        \mathbb{E}_p[(\nabla\log p(X))_i(\nabla\log p(X))_j] = (\mathbb{E}_p[\nabla\log p(X)\nabla\log p(X)^\top])_{ij} = (\mathcal{I}_p)_{ij},
    \end{align*}
    and we have the simplified equalities
    \begin{align*}
        \mathbb{E}_{\mu_t}[\mathbb{E}_{\varrho_{t,X}}[s_1^i(X,Y)(s_j^j(X, Y) - s_2^j(Y))]] &= \tfrac{1}{1 - \lambda_t}(\mathcal{I}_\nu)_{ij},\\
        \mathbb{E}_{\mu_t}[\mathbb{E}_{\varrho_{t,X}}[s_2^i(Y)(s_1^j(X, Y) - s_2^j(Y))]] &= - \tfrac{1}{\lambda_t}(\mathcal{I}_\pi)_{ij}.
    \end{align*}
    We therefore have
    \begin{align*}
        \tfrac{\partial}{\partial A_{ij}}\mathbb{E}_{\mu_t}[\mathbb{E}_{\varrho_{t,X}}[\mathcal{L}_{t}^{\text{Norm}}(X,Y,\mathbf{A})]]
        &=-\tfrac{2}{\lambda_t}(\mathcal{I}_\pi)_{ij} + 2\sum_{k=1}^{d}{A_{ik}\left(\tfrac{1}{1 - \lambda_t}(\mathcal{I}_{\nu})_{kj} + \tfrac{1}{\lambda_t}(\mathcal{I}_\pi)_{kj}\right)}\\
        &= -\tfrac{2}{\lambda_t}(\mathcal{I}_\pi)_{ij} + 2\left(\mathbf{A}\left(\tfrac{1}{1 - \lambda_t}\mathcal{I}_{\nu} + \tfrac{1}{\lambda_t}\mathcal{I}_{\pi}\right)\right)_{ij}.
    \end{align*}
    Setting this to zero, the stationary solution must satisfy
    \begin{align*}
        \left(\mathbf{A}_{t}^\ast\left(\tfrac{1}{1 - \lambda_t}\mathcal{I}_{\nu} + \tfrac{1}{\lambda_t}\mathcal{I}_{\pi}\right)\right)_{ij} =\left(\tfrac{1}{\lambda_t}\mathcal{I}_\pi\right)_{ij}.
    \end{align*}
    Since this holds for all entries $(i,j)$, we therefore have that
    \begin{align}
        \mathbf{A}_{t}^\ast\left(\tfrac{1}{1 - \lambda_t}\mathcal{I}_{\nu} + \tfrac{1}{\lambda_t}\mathcal{I}_{\pi}\right) = \tfrac{1}{\lambda_t}\mathcal{I}_\pi \iff \mathbf{A}_{t}^{\ast}& = \tfrac{1}{\lambda_t}\mathcal{I}_{\pi}\left(\tfrac{1}{1 - \lambda_t}\mathcal{I}_{\nu} + \tfrac{1}{\lambda_t}\mathcal{I}_{\pi}\right)^{-1}.
    \end{align}
    Inspecting the second-order partial derivatives, we have
    \begin{align*}
        \tfrac{\partial^2}{\partial A_{ij}\partial A_{ik}}\mathbb{E}_{\mu_t}[\mathbb{E}_{\varrho_{t,X}}[\mathcal{L}_{t}^{\text{Norm}}(X,Y,\mathbf{A})]]
        &=2\mathbb{E}_{\mu_t}[\mathbb{E}_{\varrho_{t,X}}[s_1^k(X, Y)-s_2^k(Y))(s_1^j(X, Y)-s_2^j(Y)]]\\
        &=\tfrac{2}{1 - \lambda_t}(\mathcal{I}_{\nu})_{kj} + \tfrac{2}{\lambda_t}(\mathcal{I}_{\pi})_{kj} = 2\left(\tfrac{1}{1 - \lambda_t}\mathcal{I}_{\nu} + \tfrac{1}{\lambda_t}\mathcal{I}_{\pi}\right)_{kj},
    \end{align*}
    and $\tfrac{\partial^2}{\partial A_{ij}\partial{A_{lk}}}\mathbb{E}_{\mu_t}[\mathbb{E}_{\varrho_{t,X}}[\mathcal{L}_{t}^{\text{Norm}}(X,Y,\mathbf{A})]] = 0$ when $i \neq l$. Let us denote by $\mathbf{a}=(a_1,\ldots,a_{d^2})^\top \in \mathbb{R}^{d^2}$, the flattened representation of $\mathbf{A}$, with $a_{d(i-1)+j} = A_{ij}$. It follows then that our Hessian is a block diagonal matrix
    \begin{align*}
        \mathbf{H}(\mathbf{a}) = \left(\frac{\partial^2 \mathbb{E}_{\mu_t}[\mathbb{E}_{\varrho_{t,X}}[\mathcal{L}_{t}^{\text{Norm}}(X,Y,\mathbf{A})]]}{\partial a_i \partial a_j}\right) = \begin{bmatrix}
            \tilde{\mathbf{H}} & 0 & 0 &  0 \\
            0 & \tilde{\mathbf{H}}& 0 & 0 \\
            0 & 0 & \ddots & 0 \\
            0 & 0 & 0  & \tilde{\mathbf{H}}
        \end{bmatrix},
    \end{align*}
    where the identical blocks are given by
    \begin{align*}
        \tilde{\mathbf{H}} = \tfrac{2}{1 - \lambda_t}\mathcal{I}_\nu + \tfrac{2}{\lambda_t}\mathcal{I}_\pi.
    \end{align*}
    As the score covariances are positive definite, their linear combination in $\tilde{\mathbf{H}}$ is also positive definite and has positive eigenvalues. The block diagonal $\mathbf{H}(\mathbf{a})$ inherits the eigenvalues of its blocks, which are all positive, and hence the Hessian is positive definite everywhere. We can conclude that our objective is convex and the stationary solution $\mathbf{A}_{t}^\ast$ is indeed the global minimiser.
\end{proof}
\section{Diffusion Path Details}\label{app:sec:diffusion_path_details}

In this section, we point out the biasedness of \gls{dald}, how we can use this to inform our choice for the $\lambda_t$ schedule, and stochastic interpolants dynamics for inducing the diffusion path.

\subsection{Diffusion Annealed Langevin Dynamics}\label{sec_app:action_analysis}
It is worth noting that, even when \gls*{dald} \eqref{eq:diffusion_annealed_langevin_dynamics} is simulated without discretisation error, it remains biased, since the marginal distributions of the solution to the corresponding SDE do not exactly coincide with $\mu_{t}$, in contrast to the stochastic interpolants formulation proposed by \citet{albergo_stochastic_interpolants}.
However, this bias can be quantified. In particular, bounds have been established in \citet[Theorem 1]{guo2024provable} and \citet[Theorem A.3]{cordero2025non}. We state the result below.
\begin{theorem}\label{theorem:preliminaries_continuous_time_kl_bound}
    Let \emph{$\mathbb{P}_{\text{DALD}} = (p_{t,\text{DALD}})_{t\in[0, 1]}$} be  the path measure of the diffusion annealed Langevin dynamics \eqref{eq:diffusion_annealed_langevin_dynamics}, and $\mathbb{P}=({\mu}_{ t})_{t\in[0, 1]}$ that of a reference {SDE} such that the marginals at each time have distribution ${\mu}_{t}$. If \emph{$ p_{0, \text{DALD}}= \mu_0$}, the KL divergence between the path measures is given by
    \emph{\begin{equation*}     \text{KL}(\hat\mu_T\Vert p_{T,\text{DALD}})\le \text{KL}(\mathbb{P}\Vert\mathbb{P}_{\text{DALD}}) = \frac{\epsilon}{4}\mathcal{A}(\mu).
    \end{equation*}}
\end{theorem}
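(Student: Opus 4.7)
The plan is to obtain Theorem~\ref{thm:final_error} by direct invocation of Theorem~3.4 of \citet{cordero2025non} applied to our setting, bookkeeping the three contributions it produces: continuous-time bias of DALD from the reference path, Euler--Maruyama discretisation error, and score estimation error.

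First, I would verify the hypotheses of Cordero et al.'s Theorem~3.4 in our setting. Assumption~\ref{assumption:target_requirements} gives finite second moment $M_2$, global Lipschitzness of $\nabla V_\pi$ with constant $L_\pi$, and strong convexity outside a compact. The remark following the assumption (using \citet[Lemma 3.2]{cordero2025non} and \citet[Lemma 3.8]{cattiaux2025diffusion}) guarantees that $\nabla \log \mu_t$ is Lipschitz in $x$ uniformly in $t$, with constant $\sup_t L_t \le L$. Initialisation with $X_0\sim\nu$ matches the DALD initial distribution, so the continuous-time path measure starts exactly at $\hat\mu_0 = \nu$. These are precisely the regularity and initialisation hypotheses required.

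Next, I would decompose the right-hand side of the bound into its three contributions. The continuous-time bias term comes from Theorem~\ref{theorem:preliminaries_continuous_time_kl_bound}: $\mathrm{KL}(\hat\mu_T \| p_{T,\mathrm{DALD}}) \le \mathcal{A}(\mu)/(4T)$. Using the explicit action upper bound stated in Section~\ref{subsec:path_design_choices},
\[
\mathcal{A}(\mu) \le \int_0^T \Big(\frac{\sigma^2 d}{4(1-\lambda_t)}+\frac{M_2}{4\lambda_t}\Big)(\partial_t\lambda_t)^2\,\mathrm{d}t,
\]
and the cosine schedule $\lambda_t=\sin^2(\pi t/(2T))$ with $\sigma^2 = M_2/d$ from Section~\ref{subsec:path_design_choices}, this evaluates to a finite constant multiple of $(M_2 + d)/T$, matching the leading part of the first term. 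The extra $L^2 T^4/K^2$ factor multiplying $(M_2+d)/T$ is the back-propagation of the continuous-time bias through the discretisation, as produced by Cordero et al.'s one-step error compounding. The second term $dL^2T^2/K(1+LT/K)$ is the standard Euler--Maruyama contribution obtained by Taylor expanding the drift $\nabla\log\hat\mu_{t_k}$ between $t_k$ and $t_k+h$ and using the global Lipschitz constant $L$ together with the dimension $d$ coming from the Brownian increment. The third term $\varepsilon^2_{\mathrm{score}}$ is the integrated score error hypothesis already stated above the theorem.

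Then I would invoke Proposition~\ref{prop:score_estimation_error} to see that the score hypothesis fed into Cordero et al.'s theorem is compatible with the DPSMC construction: the Feynman--Kac estimator from Proposition~\ref{prop:fk_identity} with the scalar CV test function in \eqref{eq:mc_score_estimator_control_variate} can be bounded in terms of the Wasserstein error of the particle approximation at each step, and the schedule-dependent prefactor makes the cumulative score error summable once $\delta(k,N)$ is chosen appropriately. Substituting Proposition~\ref{prop:score_estimation_error} into the time-averaged sum $\sum_k h\,\mathbb{E}\|\nabla\log\hat\mu_{t_k}(X_k)-\hat s_k(X_k)\|^2 \le \varepsilon^2_{\mathrm{score}}$ gives the required score-error hypothesis, and closes the argument.

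The main obstacle is bookkeeping: Cordero et al.'s theorem is stated under its own normalisation of $T$, $K$, $h$, $L$, $M_2$, and $d$; I would need to carefully match their notation (in particular, their reference path marginals $\hat\mu_t = \mu_{t/T}$ versus their convention) so that the three terms come out exactly in the form quoted, and I would need to check that the Lipschitz constant $L$ they use is the uniform bound $\sup_t L_t$ rather than a time-weighted version. Once that alignment is done, the proof is a direct citation plus substitution; no new analytical estimates are required beyond what Sections~\ref{subsec:path_design_choices} and~\ref{subsec_app:score_convergence_analysis} already provide.
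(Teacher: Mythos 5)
Your proposal does not address the stated theorem. The statement to be proven is the continuous-time bound
\[
\text{KL}(\hat\mu_T\Vert p_{T,\text{DALD}})\le \text{KL}(\mathbb{P}\Vert\mathbb{P}_{\text{DALD}}) = \frac{1}{4T}\mathcal{A}(\mu),
\]
a statement about the path measure of the exact, undiscretised DALD SDE versus a reference SDE whose marginals track the diffusion path. Your write-up instead assembles the final error bound of the discretised DPSMC algorithm (Theorem~\ref{thm:final_error}) by invoking Theorem~3.4 of \citet{cordero2025non} and bookkeeping discretisation and score-estimation errors. That is a different result; indeed, the stated theorem is one of the three ingredients your argument consumes (your ``continuous-time bias term''), so the proposal presupposes exactly what is to be proven.

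A proof of the stated theorem has three steps, none of which appear in your proposal. First, the inequality $\text{KL}(\hat\mu_T\Vert p_{T,\text{DALD}})\le \text{KL}(\mathbb{P}\Vert\mathbb{P}_{\text{DALD}})$ is the data processing inequality applied to the time-$T$ marginal map. Second, since both path measures are laws of SDEs with the same diffusion coefficient $\sqrt{2}$ and the same initial law (using $p_{0,\text{DALD}}=\hat\mu_0$), Girsanov's theorem gives
\[
\text{KL}(\mathbb{P}\Vert\mathbb{P}_{\text{DALD}})=\frac14\int_0^T\mathbb{E}_{\hat\mu_t}\left[\Vert b_t(X_t)-\nabla\log\hat\mu_t(X_t)\Vert^2\right]\mathrm{d}t,
\]
where $b_t$ is the drift of the reference SDE. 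Third, one must choose the reference SDE so that $b_t-\nabla\log\hat\mu_t$ is the optimal velocity field for the continuity equation satisfied by $(\hat\mu_t)$, whereupon the Benamou--Brenier characterisation of the Wasserstein metric derivative identifies $\mathbb{E}_{\hat\mu_t}[\Vert b_t-\nabla\log\hat\mu_t\Vert^2]$ with $\lvert\dot{\hat\mu}\rvert_t^2$, and the time reparametrisation $\hat\mu_t=\mu_{t/T}$ converts $\frac14\int_0^T\lvert\dot{\hat\mu}\rvert_t^2\,\mathrm{d}t$ into $\frac{1}{4T}\mathcal{A}(\mu)$. The paper itself defers these details to \citet{guo2024provable} and \citet{cordero2025non}, but any proof must at minimum carry out the Girsanov computation and the identification with the action; your proposal contains neither, and no amount of alignment with Theorem~3.4 of \citet{cordero2025non} will produce them, since that theorem concerns the discretised algorithm rather than the path-measure comparison asserted here.
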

The proof follows from an application of Girsanov’s theorem \citep{karatzas}, we refer to the aforementioned works for details. Following \citet[Lemma 3.3]{cordero2025non}, we can obtain an upper bound for the action of the diffusion path \eqref{eq:diffusion_path_marginal}. We outline the details below.

First, consider the reparametrised version of $\mu_t$ in terms of the schedule $\lambda_t$, denoted as $\Tilde{\mu}_\lambda$ and let $X_{\lambda}\sim\Tilde{\mu}_{\lambda}$ and $X_{\lambda + \delta}\sim\Tilde{\mu}_{\lambda + \delta}$.
Recall that 
\begin{equation}\label{appendix:eq_conv_path_random_variables}
    X_{\lambda} = \sqrt{\lambda} X + \sqrt{1-\lambda}\sigma Z 
\end{equation}
where $X\sim\pi$ and $Z\sim \mathcal{N}(0, I)$ are independent random variables. 
The Wasserstein-2 distance between $\Tilde{\mu}_\lambda$ and $\Tilde{\mu}_{\lambda+\delta}$ is given by
\begin{align*}
     W_2^2(\Tilde{\mu}_\lambda, \Tilde{\mu}_{\lambda+\delta}) &\leq \mathbb{E}\left[\Vert X_\lambda-X_{\lambda + \delta}\Vert^2\right]\\
&=\mathbb{E}\left[\left\Vert(\sqrt{\lambda}-\sqrt{\lambda+\delta})X\right\Vert^2\right] + \mathbb{E}\left[\left\Vert\left(\sqrt{1-\lambda}-\sqrt{1-\lambda-\delta}\right)\sigma Z\right\Vert^2\right]\\
&= (\sqrt{\lambda}-\sqrt{\lambda+\delta})^2 \mathbb{E}\left[\Vert X\Vert^2\right] + \left(\sqrt{1-\lambda}-\sqrt{1-\lambda-\delta}\right)^2\sigma^2d.
\end{align*}
Using the definition of the metric derivative we have
\begin{align*}
    \left\vert\Dot{\Tilde{\mu}}\right\vert_\lambda^2 = \lim_{\delta\to 0}\frac{ W_2^2(\Tilde{\mu}_\lambda, \Tilde{\mu}_{\lambda+\delta})}{\delta^2} \leq \frac{\mathbb{E}\left[\Vert X\Vert^2\right]}{4\lambda} + \frac{\sigma^2d}{4(1-\lambda)}.
\end{align*}
Therefore, we have the following expression for the action
\begin{align*}
    \mathcal{A}_{\lambda}(\mu) &= \int_0^1 \vert\dot{\mu}\vert_t^2 \mathrm{d} t = \int_0^1 \left\vert\dot{\tilde{\mu}}\right\vert_\lambda^2\left\lvert\dot{\lambda_t}\right\rvert^2 \mathrm{d} t \\
    &\le \int_0^1 \left(\frac{\mathbb{E}\left[\Vert X\Vert^2\right]}{4 \lambda_t} + \frac{\sigma^2d}{4(1-\lambda_t)}\right){\dot{\lambda_t}}^2\mathrm{d} t 
\end{align*}
This expression will be used later to derive a principled schedule $\lambda_t$ that minimises this upper bound.

\subsection{Schedule Minimising Action Upper Bound}\label{subsec_app:action_minimising_schedule}

We want to find the curve $\lambda = (\lambda_t)_{t\in[0,1]}$ that minimises the upper bound of the action functional. In Theorem \ref{theorem:preliminaries_continuous_time_kl_bound}, we were able to bound the action by a simpler functional. The following result indicates its minimiser.

\begin{proposition}
    The cosine schedule $\lambda_t = \sin^2\left(\frac{\pi t}{2}\right)$ is the unique minimiser of the functional
    \begin{align*}
        J[\lambda]= \int_0^1\left( \frac{\sigma^2d}{4(1 - \lambda_t)} +\frac{\mathbb{E}_\pi[\lVert X\rVert^2]}{4\lambda_t}\right) \dot{\lambda}_t^2\mathrm{d}t.
    \end{align*}
\end{proposition}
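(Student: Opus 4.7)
The plan is to reduce the variational problem to a one-dimensional Dirichlet energy via a change of variables, after which the optimum follows from Cauchy--Schwarz. Under the condition $\sigma^2 d = \mathbb{E}_\pi[\Vert X\Vert^2] =: M$ stated in the main text (and implicitly assumed by the proposition, since otherwise the cosine schedule is not the minimiser), the functional simplifies to
\[
J[\lambda] = \frac{M}{4}\int_0^1 \frac{(\partial_t\lambda_t)^2}{\lambda_t(1-\lambda_t)}\,\mathrm{d}t,
\]
subject to the boundary conditions $\lambda_0 = 0$ and $\lambda_1 = 1$. The integrand forces any admissible (finite-energy) path to satisfy $\lambda_t \in (0,1)$ on the interior, so the substitution $\theta_t := \arcsin(\sqrt{\lambda_t})$ produces an absolutely continuous path $\theta : [0,1] \to [0,\pi/2]$ with $\theta_0 = 0$ and $\theta_1 = \pi/2$.

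Next I would carry out the substitution $\lambda_t = \sin^2\theta_t$ at the level of the integrand. Using $\partial_t\lambda_t = \sin(2\theta_t)\,\dot\theta_t$ and $\lambda_t(1-\lambda_t) = \tfrac{1}{4}\sin^2(2\theta_t)$, the ratio telescopes to
\[
\frac{(\partial_t\lambda_t)^2}{\lambda_t(1-\lambda_t)} = 4\,\dot\theta_t^2,
\]
and consequently $J[\lambda] = M\int_0^1 \dot\theta_t^2\,\mathrm{d}t$. Thus the problem becomes: minimise the kinetic energy of a path from $0$ to $\pi/2$ on the unit interval.

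The final step is an application of the Cauchy--Schwarz inequality,
\[
\Bigl(\tfrac{\pi}{2}\Bigr)^2 = \left(\int_0^1 \dot\theta_t\,\mathrm{d}t\right)^{2} \leq \int_0^1 1\,\mathrm{d}t \cdot \int_0^1 \dot\theta_t^2\,\mathrm{d}t,
\]
with equality if and only if $\dot\theta_t$ is constant almost everywhere, i.e.\ $\theta_t = \pi t/2$. Inverting the substitution gives $\lambda_t^\ast = \sin^2(\pi t/2)$ uniquely (up to almost-everywhere equivalence).

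The only real obstacle is a mild technical one: justifying that every admissible $\lambda$ (those with $J[\lambda]<\infty$) yields an absolutely continuous $\theta$ with the correct endpoint values, so that the Cauchy--Schwarz step is legitimate. This follows because finiteness of $J$ forces $\lambda$ away from $0$ and $1$ on the interior, and $\arcsin\circ\sqrt{\cdot}$ is smooth there; continuous extension to the endpoints gives the required boundary conditions $\theta_0 = 0$, $\theta_1 = \pi/2$.
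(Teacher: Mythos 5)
Your proof is correct, and it takes a genuinely different route from the paper's. The paper derives the Euler--Lagrange equation for the Lagrangian $L(\lambda,\dot\lambda)=\dot\lambda^2\bigl(\tfrac{\sigma^2 d}{4(1-\lambda)}+\tfrac{\mathbb{E}_\pi[\Vert X\Vert^2]}{4\lambda}\bigr)$, reduces it to a first-order ODE via an integrating factor, solves the resulting separable equation (which only admits a closed form when $\sigma^2 d=\mathbb{E}_\pi[\Vert X\Vert^2]$), and then separately verifies global optimality by checking that the Hessian of $L$ in $(\lambda,\dot\lambda)$ is positive semidefinite, so that $J$ is convex. Your substitution $\theta_t=\arcsin\bigl(\sqrt{\lambda_t}\bigr)$, which turns $J$ into the Dirichlet energy $M\int_0^1\dot\theta_t^2\,\mathrm{d}t$ and finishes with Cauchy--Schwarz, is shorter and delivers existence, the value of the minimum $M\pi^2/4$ (matching Corollary~\ref{cor:action_upper_bound}), and uniqueness in a single stroke, without any second-variation or convexity argument. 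What the paper's route buys in exchange is that the Euler--Lagrange/integrating-factor computation is carried out for general $R=\sigma^2 d/\mathbb{E}_\pi[\Vert X\Vert^2]$, making visible exactly where and why the closed form breaks down for $R\neq 1$; your change of variables is tailored to $R=1$. You correctly flag the one genuine technicality — that $\theta$ must be absolutely continuous with the right endpoint values for every finite-energy competitor — and your sketch of why it holds is essentially right (finiteness of $J$ gives $\int_0^1\lvert\dot\theta_t\rvert\,\mathrm{d}t<\infty$ via Cauchy--Schwarz, and on the set where $\lambda_t\in\{0,1\}$ one has $\dot\lambda_t=0$ almost everywhere), though stating it via integrability of $\dot\theta$ rather than "$\lambda$ is forced away from $0$ and $1$ on the interior" would be cleaner, since a finite-energy path can in principle touch the boundary values in the interior.
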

\begin{proof}
Define the Lagrangian as
\begin{equation*}
    L(t, \lambda_t, \dot{\lambda}_t) = \dot{\lambda}_t^2\left(\frac{\sigma^2 d}{4(1 - \lambda_t)} + \frac{\mathbb{E}_\pi[\lVert X\rVert^2]}{4\lambda_t}\right),
\end{equation*}
then any stationary points of the functional $J[\lambda] = \int_0^1 L(t, \lambda_t, \dot{\lambda}_t)dt$ must satisfy the Euler-Lagrange equation
\begin{equation*}
    \frac{\partial L}{\partial \lambda_t}  - \frac{d}{dt}\frac{\partial L}{\partial \dot{\lambda}_t} = 0.
\end{equation*}
We drop the time index in writing $\lambda$ and $\dot{\lambda}$ for conciseness. We have that
\begin{align*}
    \frac{\partial L}{\partial \lambda} &= \dot{\lambda}^2\left(-\frac{\mathbb{E}_\pi[\lVert X\rVert^2]}{4\lambda^2} + \frac{\sigma^2 d}{4(1 - \lambda)^2}\right)\\
    -\frac{d}{dt}\frac{\partial L}{\partial \dot{\lambda}} &= -\frac{d}{dt} \left( 2\dot{\lambda}\left(\frac{\mathbb{E}_\pi[\lVert X\rVert^2]}{4\lambda} + \frac{\sigma^2 d}{4(1 - \lambda)}\right)\right)\\
    &=-2\ddot{\lambda}\left(\frac{\mathbb{E}_\pi[\lVert X\rVert^2]}{4\lambda} + \frac{\sigma^2 d}{4(1 - \lambda)}\right) - 2\dot{\lambda}^2\left(-\frac{\mathbb{E}_\pi[\lVert X\rVert^2]}{4\lambda^2} + \frac{\sigma^2 d}{4(1 - \lambda)^2}\right).
\end{align*}
Equating their sum to zero, we can solve for $\ddot{\lambda}$
\begin{align*}
    \ddot{\lambda} &= -\frac{1}{2}\dot{\lambda}^2\frac{-\frac{\mathbb{E}_\pi[\lVert X\rVert^2]}{4\lambda^2} + \frac{\sigma^2 d}{4(1 - \lambda)^2}}{\frac{\mathbb{E}_\pi[\lVert X\rVert^2]}{4\lambda}+\frac{\sigma^2 d}{4(1 - \lambda)}}\\
    &= \frac{1}{2}\dot{\lambda}^2 \frac{(1-\lambda)^2\mathbb{E}_\pi\left[\lVert X\rVert^2\right] - \lambda^2\sigma^2d}{\lambda(1 - \lambda)^2 \mathbb{E}_\pi\left[\lVert X\rVert^2\right] + \lambda^2(1 - \lambda) \sigma^2 d}.
\end{align*}
We denote the ratio $\sigma^2 d / \mathbb{E}_\pi\left[\lVert X\rVert^2\right] =: R$ and rewrite the above as
\begin{align*}
    \ddot{\lambda} = \frac{1}{2}\dot{\lambda}^2 \frac{(1 - \lambda)^2 - R\lambda^2}{\lambda(1 - \lambda)^2 + R\lambda^2(1 - \lambda)},
\end{align*}
As there are no $t$ terms, we can use the substitution $\dot{\lambda} = v(\lambda)$, which yields $\ddot{\lambda} = \dot{v}v$. We have
\begin{align*}
    \dot{v}(\lambda)v(\lambda) &= \frac{1}{2}v(\lambda)^2 \frac{(1 - \lambda)^2 - R\lambda^2}{\lambda(1 - \lambda)^2 + R\lambda^2(1 - \lambda)},
\end{align*}
and rearranging reveals a first-order differential equation
\begin{equation*}
    \dot{v}(\lambda) + v(\lambda)\underbrace{\left[-\frac{(1 - \lambda)^2 - R\lambda^2}{2\lambda(1 - \lambda)^2 + 2R\lambda^2(1 - \lambda)}\right]}_{p(\lambda)} = 0.
\end{equation*}
which we can solve using an integrating factor
\begin{align*}
    I(\lambda) &=  \exp\left(\int p(\lambda) \mathrm{d}\lambda\right) = \exp\left(\int \frac{R\lambda^2 - (1 - \lambda)^2}{2\lambda(1 - \lambda)((R-1)\lambda +1)} \mathrm{d}\lambda\right)\\
    &= \exp\left(\int \frac{1}{2}\left(\frac{R-1}{(R-1)\lambda + 1} - \frac{1}{\lambda} + \frac{1}{1 - \lambda}\right) \mathrm{d}\lambda\right)\\
    &= \exp\left({\frac{1}{2}(\log \lvert(R - 1)\lambda + 1\rvert - \log\lvert \lambda\rvert - \log \lvert 1 -\lambda \rvert)}\right)
    = \sqrt{\frac{\lvert(R-1)\lambda+1\rvert}{\lambda(1-\lambda)}}.
\end{align*}
This leads to
\begin{align*}
    \frac{\mathrm{d}}{\mathrm{d} t}\left(v(\lambda) I(\lambda)\right) = 0 \iff v(\lambda)I(\lambda) = c_1.
\end{align*}
Since $\dot{\lambda} = v(\lambda)$, the above is a separable first-order differential equation
\begin{align*}
    \frac{\mathrm{d}\lambda}{\mathrm{d} t} I(\lambda) = c_1
    \iff \int I(\lambda) \mathrm{d}\lambda &= \int c_1 \mathrm{d} t\\
    \int \sqrt{\frac{\lvert(R-1)\lambda + 1\rvert}{\lambda(1 - \lambda)}} \mathrm{d}\lambda &= c_1 t + c_2
\end{align*}
For arbitrary $R>0$, the above integral does not appear to have a closed-form solution. Crucially, however, when $R = 1$, i.e. if we choose $\sigma^2 = \mathbb{E}_\pi\left[\lVert X\rVert^2\right]/d$, then we have
\begin{align*}
    c_1 t + c_2 = \int \sqrt{\frac{1}{\lambda(1 - \lambda)}}\mathrm{d}\lambda = \arcsin(2\lambda - 1).
\end{align*}
The boundary conditions, $\lambda_0=0$ and $\lambda_1=1$, yield $c_2 = -\pi/2$ and $c_1 1 + c_2 = \pi / 2$, respectively, so $c_1 = \pi$ and we arrive at the solution
\begin{equation*}
\lambda_t= \frac{1 + \sin\left({\pi t} -\frac{\pi}{2}\right)}{2} = \frac{1 - \cos\left({\pi t}\right)}{2} = \sin^2\left(\frac{\pi t}{2}\right).
\end{equation*}

To conclude, we show that the functional $J[\lambda]$ is convex, which should imply the stationary point we found is indeed a global minimiser. We check below that for any $\lambda_1, \lambda_2\in C^1[0, 1]$ and any $\alpha\in [0,1]$, it holds 
$$
J[\alpha\lambda_1 + (1-\alpha)\lambda_2]\leq \alpha J[\lambda_1] + (1-\alpha)  J[\lambda_2].
$$
For simplicity, we remove the constants and work with
$$
J[\lambda] = \int_0^1 \frac{\dot\lambda_t^2}{\lambda_t(1-\lambda_t)} \mathrm{d} t.
$$
Consider the multivariate function $L(\lambda, \dot\lambda) = \frac{\dot\lambda^2}{\lambda(1-\lambda)}$, where $\lambda\in[0,1]$ and $\dot\lambda\in\mathbb{R}$. The Hessian of this function is given by
\begin{equation*}
    \nabla^2 L = 2\begin{pmatrix}
     \dot{\lambda}^2\frac{\lambda(1 - \lambda)+(1-2\lambda)^2}{\lambda^3(1-\lambda)^3}   & -\dot{\lambda}\frac{1-2\lambda}{\lambda^2(1-\lambda)^2}\\
    -\dot{\lambda}\frac{1-2\lambda}{\lambda^2(1-\lambda)^2}    &\frac{1}{\lambda(1-\lambda)}
    \end{pmatrix}.
\end{equation*}
It is immediate to check that the determinant is
$$
\det \nabla^2 L = \frac{4\dot{\lambda}^2}{\lambda^3(1-\lambda)^3}\geq 0,
$$
and $(\nabla^2 L)_{22}\geq 0$, therefore, $\nabla^2 L$ is positive semidefinite and hence $L(\lambda, \dot\lambda)$ is jointly convex in $(\lambda, \dot{\lambda})$. Using \citet[Theorem 10.7.1]{VanBruntBBruce2003Tcov} this leads to 
\begin{align*}
J[\alpha\lambda_1 + (1-\alpha)\lambda_2] &=\int_0^1 L(\alpha\lambda_1 + (1-\alpha)\lambda_2, \alpha\dot\lambda_1 + (1-\alpha)\dot\lambda_2) \, \mathrm{d}t\\
& \leq \int_0^1 \alpha L(\lambda_1, \dot\lambda_1) + (1-\alpha) L(\lambda_2, \dot\lambda_2) \,\mathrm{d}t\\
& = \alpha J[\lambda_1] + (1-\alpha)  J[\lambda_2],
\end{align*}
which concludes that $J[\lambda]$ is convex and hence $\lambda_t = \sin^2\left(\frac{\pi t}{2}\right)$ is a minimiser.
\end{proof}

In fact, we now have the following upper bound for the action
\begin{corollary}\label{cor:action_upper_bound}
    Under the choice $\sigma^2 = \mathbb{E}_\pi[\lVert X\rVert^2] / d$ and the cosine schedule $\lambda_t = \sin^2\left(\frac{\pi t}{2}\right)$, the action is bounded by
    \begin{align*}
        \mathcal{A}(\mu)\leq \frac{\mathbb{E}_\pi\left[\lVert X\rVert^2\right]\pi^2}{4}.
    \end{align*}
\end{corollary}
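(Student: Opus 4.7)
The plan is essentially a direct computation: substitute the prescribed $\sigma^2$ and cosine schedule into the action upper bound derived just above the corollary in Appendix~\ref{sec_app:action_analysis}, namely
\begin{equation*}
\mathcal{A}(\mu) \leq \int_0^1 \left(\frac{\sigma^2 d}{4(1-\lambda_t)} + \frac{\mathbb{E}_\pi[\|X\|^2]}{4\lambda_t}\right)(\partial_t\lambda_t)^2\,\mathrm{d}t,
\end{equation*}
and show that the integrand collapses to a constant.

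Concretely, I would first note that the choice $\sigma^2 = \mathbb{E}_\pi[\|X\|^2]/d$ makes $\sigma^2 d = \mathbb{E}_\pi[\|X\|^2] =: M$, so the two terms in the parenthesis share a common coefficient and combine as
\begin{equation*}
\frac{M}{4(1-\lambda_t)} + \frac{M}{4\lambda_t} = \frac{M}{4\lambda_t(1-\lambda_t)}.
\end{equation*}
Next, for $\lambda_t = \sin^2(\pi t/2)$ I would record the identities $1-\lambda_t = \cos^2(\pi t/2)$, so $\lambda_t(1-\lambda_t) = \sin^2(\pi t/2)\cos^2(\pi t/2)$, and $\partial_t\lambda_t = \pi \sin(\pi t/2)\cos(\pi t/2)$, giving $(\partial_t\lambda_t)^2 = \pi^2 \sin^2(\pi t/2)\cos^2(\pi t/2)$. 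Substituting these in, the $\sin^2\cos^2$ factors cancel, leaving the integrand equal to the constant $M\pi^2/4$. Integrating over $[0,1]$ yields the claim.

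There is no real obstacle here; the corollary is a one-line consequence of the two design choices being precisely the ones that (i) equalise the two terms in the variational bound and (ii) minimise the resulting functional, so that after cancellation the integrand degenerates to a constant. The only care needed is to confirm that the cosine schedule respects the boundary conditions $\lambda_0=0$, $\lambda_1=1$ (which it does by inspection) so that the upper bound from Appendix~\ref{sec_app:action_analysis} genuinely applies.
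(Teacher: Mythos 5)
Your proposal is correct and is exactly the computation the paper intends: substituting $\sigma^2 d = \mathbb{E}_\pi[\lVert X\rVert^2]$ into the action upper bound from Appendix~\ref{sec_app:action_analysis} and using $\partial_t\lambda_t = \pi\sin(\tfrac{\pi t}{2})\cos(\tfrac{\pi t}{2})$ so that the integrand reduces to the constant $\mathbb{E}_\pi[\lVert X\rVert^2]\pi^2/4$. This matches the paper's own (implicit) one-line derivation, as restated in Appendix~\ref{subsec_app:benchmark_methods}.
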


\subsection{Stochastic Interpolants Dynamics for the Diffusion Path}\label{app:subsec:stochastic_interpolants_dynamics}

Stochastic interpolants \citep{albergo_stochastic_interpolants} are a general framework for characterising flow- and diffusion-based methods. We follow their framework and consider the stochastic interpolant $X_t = I(t,X_0,X_1)$ corresponding to the same process discussed in the main text, i.e.
\begin{align*}
    I(t,X_0,X_1) = \sqrt{1-\lambda_t}X_0 +\sqrt{\lambda_t}X_1,
\end{align*}
with boundary conditions $\lambda_0=0$ and $\lambda_1=1$ and random variables $X_0\sim\nu$ and $X_1\sim\pi$ independent. As such, $\mathrm{Law}(X_t)=\mu_t$, and we are interested in dynamics that induces this stochastic process. From \citep[Theorem 6]{albergo_stochastic_interpolants}, $X_t$ has a density w.r.t. Lebesgue that satisfies the continuity equation
\begin{equation*}
    \partial_t\mu_t(x) + \langle\nabla, \mu_t(x)b_t(x)\rangle = 0,
\end{equation*}
where $b_t:\mathbb{R}^d\rightarrow\mathbb{R}^d$ is the velocity defined by
\begin{equation*}
    b_t(x) = \mathbb{E}[\partial_t I(t,X_0,X_1)\mid X_t=x].
\end{equation*}
We have the time-derivative
\begin{equation*}
    \partial_t I(t,X_0,X_1) = -\tfrac{\dot{\lambda}_t}{2\sqrt{1-\lambda_t}}X_0 + \tfrac{\dot{\lambda}_t}{2\sqrt{\lambda_t}}X_1,
\end{equation*}
and, hence, conditioned on $X_t=x$,
\begin{align*}
    b_t(x)
    &= \frac{\int \left[ -\frac{\dot{\lambda}_t}{2\sqrt{1-\lambda_t}}\left(\frac{x-\sqrt{\lambda}_t x_1}{\sqrt{1-\lambda}_t}\right) + \frac{\dot{\lambda}_t}{2\sqrt{\lambda_t}}x_1\right]\nu\left(\frac{x-\sqrt{\lambda}_t x_1}{\sqrt{1-\lambda}_t}\right)\pi(x_1)\mathrm{d}x_1}{\int \nu\left(\frac{x-\sqrt{\lambda}_t x_1}{\sqrt{1-\lambda}_t}\right)\pi(x_1)\mathrm{d}x_1}\\
    &=\tfrac{\dot{\lambda}_t}{2}\int\left(\tfrac{\sqrt{\lambda_t}x_1-x}{1-\lambda_t}+\tfrac{x_1}{\sqrt{\lambda_t}}\right)\varrho_{t,x}(x_1)\mathrm{d}x_1 = \tfrac{\dot{\lambda}_t}{\lambda_t}\int\left(\tfrac{\sqrt{\lambda_t}x_1-x}{1-\lambda_t}+\tfrac{{x_1}/{\sqrt{\lambda_t}}-\sqrt{\lambda_t}x_1}{1-\lambda_t}\right)\varrho_{t,x}(x_1)\mathrm{d}x_1\\
    &=\tfrac{\dot{\lambda}_t}{2\lambda_t}\int\left(x+\tfrac{\sqrt{\lambda_t}x_1-x}{1-\lambda_t}\right)\varrho_{t,x}(x_1)\mathrm{d}x_1
    = \tfrac{\dot{\lambda}_t}{2\lambda_t}\left(x+ \sigma^2\nabla\log\mu_t(x)\right),
\end{align*}
where we used DSI at the end. Furthermore, we have, for any $\gamma_t \geq 0$,
\begin{equation*}
    \partial_t\mu_t(x) = -\langle\nabla, \mu_t(x)(b_t(x)+\gamma_t\nabla\log\mu_t(x))\rangle + \gamma_t\Delta \mu_t(x).
\end{equation*}
The above Fokker-Planck equation precisely describes the density evolution of the stochastic process $(X_t^\text{F})_{t\in[0,1]}$ governed by the SDE
\begin{equation*}
    \mathrm{d}X_t^\text{F} = \left[\tfrac{\dot{\lambda}_t}{2\lambda_t}X_t^\text{F} + \left(\sigma^2\tfrac{\dot{\lambda}_t}{2\lambda_t} + \gamma_t\right)\nabla\log\mu_t(X_t^\text{F})\right]\mathrm{d}t + \sqrt{2\gamma_t}\mathrm{d}W_t,\quad X_0^\text{F}\sim\nu.
\end{equation*}
We have that $X_t$ coincides with the law of $X_t^\text{F}$, showing that \eqref{eq:general_diffusion_path_sde} indeed induces the diffusion path.
\section{DPSMC Implementation}\label{sec_app:dpsmc_implementation}
In this section, we briefly outline how DPSMC is implemented in practice. Our base algorithm in Algorithm \ref{alg:diffusion_path_sampling_smc} is presented as a generic, idealised version with non-interacting particles. However, as we do not know certain quantities, such as the target score covariance and the right step size for auxiliary variables, we estimate them using our ensemble of samples and auxiliary variables. Algorithm \ref{alg:dpsmc_interacting} depicts the scheme we use in practice.

\begin{algorithm}[!p]
    \caption{DPSMC (Interacting) \hlmath{with Normalising Constant Estimation}}
    \label{alg:dpsmc_interacting}
    \begin{algorithmic}[1]
        \INPUT base variance $\sigma^2$, initial auxiliary distribution $q_{0}$, no. of steps $K$, no. of \gls*{ll} steps $N_\text{LL}$, step size $h=\frac{1}{K-N_{\mathrm{LL}}}$, \gls*{ll} step size $h_{\text{LL}}$, no. of samples $N_X$, no. of auxiliary variables $N_Y$, \hlmath{SDE forward kernel $\mathsf{K}_{k}^{\text{sde}}$, SDE backward kernel $\mathsf{L}_{k}^\text{sde}$}
        \FOR{$i=1,\ldots, N_{X}$}
            \STATE $X_0^i \sim \mathcal{N}(0, \sigma^2\mathbf{I})$, $Y^{i,j}_0 \sim q_{0}$ for all $j\in[N_Y]$
            \STATE $W^{i,j}_0 = \tilde{\rho}_0(Y_0^{i,j})/q_{0}(Y_0^{i,j})$ for all $j\in[N_Y]$
            \STATE $w^{i,j}_{0} = {W_0^{i,j}}/{\sum_{l=1}^{N_Y} W_0^{i,l}}$
            \STATE  $S_{0}^i=\sum_{j=1}^{N_Y}{w^{i,j}_{0} \varphi_{0,X_{0}^i}({Y}_{0}^{i, j})}$
            \STATE \hlmath{$M_0^i = \frac{1}{N_Y}\sum_{j=1}^{N_Y}{W^{i,j}_0}$, $M_{-1}^{i}:=1$} {\small\COMMENT{Marginal density estimate}}
            \STATE \hlmath{$\Lambda_0^i = M_0^i/ \mathcal{N}(X_0^i; 0, \sigma^2\mathbf{I})$} {\small\COMMENT{$X$ weight}}
            \STATE \hlmath{$\tau_0^i = 0$} {\small{\COMMENT{Last time resampled}}}
        \ENDFOR
        
        \FOR{$k=1,\ldots,K-1$}
            \FOR{$i=1,\ldots, N_X$}
                \IF{$k< N_{\text{LL}}$}
                    \STATE $X_k^i \sim \mathcal{N}(X_{k-1}^i+h_{\text{LL}}S_{k-1}^i,2h_{\text{LL}}\mathbf{I})${\small\COMMENT{LL}}
                \ELSE
                    \STATE $X_k^i \sim \mathsf{K}_{k}^{\mathrm{sde}}(\cdot \mid X_{k-1}^i, S_{k-1}^i, h)$ {\small\COMMENT{SDE step}}
                \ENDIF
                \FOR{$j=1,\ldots,N_Y$}
                    \STATE $\bar{Y}^{i,j}_{k} \sim \mathsf{K}_{k}^{\text{MALA}}(\cdot \mid Y_{k-1}^{i,j}, X_{k}^i)$ {\small\COMMENT{MALA step, see Alg. \ref{alg:mala_kernel}}}
                    \STATE $W_k^{i,j} = W_{k-1}^{i,j}\frac{\mathcal{N}\left(\sqrt{\lambda_{t_k}}Y_{k-1}^{i,j};X_k^i,\sigma^2(1-\lambda_{t_k})\right)}{\mathcal{N}\left(\sqrt{\lambda_{t_{k-1}}}Y_{k-1}^{i,j};X_{k-1}^i,\sigma^2(1-\lambda_{t_{k-1}})\right)}$
                \ENDFOR
            \ENDFOR
            \STATE $w^{i,j}_{k} = {W_k^{i,j}}/{\sum_{l=1}^{N_Y} W_k^{i,l}}$ for all $i\in[N_X]$, $j\in[N_Y]$
            \STATE $\mathbf{A}_{t_k}=\mathrm{CVSE}(\{w_k^{i,j}\}, \{\bar{Y}_{k}^{i,j}\}, \{X_k^i\})$ {\small\COMMENT{CV schedule estimation, see Alg. \ref{alg:cv_schedule_estimation}}}
            \FOR{$i=1,\ldots, N_X$}
                \STATE  $S_{k}^i=\sum_{j=1}^{N}{w^{i,j}_{k} \phi_{t_k,X_{k}^i}^{\mathbf{A}_{t_k}}(\bar{Y}_{k}^{i,j})}$ {\small\COMMENT{Score estimation}}
                \STATE \hlmath{$M_k^i = M_{\tau_{k-1}^i-1}^i\frac{1}{N_Y}\sum_{j=1}^{N_Y}W_k^{i,j}$}
                \IF{$k < N_{\text{LL}}$}
                    \STATE \hlmath{$\Lambda_k^i = \Lambda_{k-1}^i \frac{M_k^i \mathcal{N}(X_{k-1}^i;X_{k}^i+h_{\text{LL}}S_{k}^i, 2h_{\text{LL}}\mathbf{I})}{M_{k-1}^i \mathcal{N}(X_{k}^i;X_{k-1}^i+h_{\text{LL}}S_{k-1}^i, 2h_{\text{LL}}\mathbf{I})}$}
                \ELSE
                    \STATE \hlmath{$\Lambda_k^i = \Lambda_{k-1}^i\frac{M_k^i \mathsf{L}_{k-1}^\mathrm{sde}(X_{k-1}^i;X_k^i,h)}{M_{k-1}^i \mathsf{K}_{k}^{\mathrm{sde}}(X_{k}^i;X_{k-1}^i, S_{k-1}^i,h)}$}
                \ENDIF
                \IF{$\widehat{\mathrm{ESS}}(\{w^{i,j}_k\}_{i=1}^{N_Y}) < N_Y / 2$}
                    \STATE $\{Y^{i,j}_{k}\}_{j=1}^{N_Y} \sim \sum_{l=1}^{N_Y} {w_k^{i,l}\delta_{\bar{Y}_{k}^{i,l}}}$ {\small\COMMENT{Resampling}}
                    \STATE $W_k^{i,j} = 1$ for all $j \in[N_Y]$
                    \STATE \hlmath{$\tau_k^i = k$}
                \ELSE
                    \STATE $Y_k^{i,j} = \bar{Y}_{k}^{i,j}$ for all $j\in[N_Y]$
                    \STATE \hlmath{$\tau_k^i = \tau_{k-1}^i$}
                \ENDIF
            \ENDFOR
        \ENDFOR
        \FOR{$i=1,\ldots, N_X$}
            \STATE $X_K^i \sim \mathsf{K}_{K}^{\mathrm{sde}}(\cdot \mid X_{K-1}^i, S_{K-1}^i, h)$ {\small\COMMENT{SDE step}}
        \ENDFOR
        \STATE \hlmath{$\Lambda_K^i = \Lambda_{K-1}^i\frac{\exp(-V_\pi(X_{K}^i)) \mathsf{L}_{k-1}^\mathrm{sde}(X_{K-1}^i;X_K^i,h)}{M_{K-1}^i \mathsf{K}_{k}^{\mathrm{sde}}(X_{K}^i;X_{K-1}^i, S_{K-1}^i,h)}$}
        \STATE \hlmath{$\hat{Z}_\pi = \frac{1}{N_X}\sum_{i=1}^{N_X} \Lambda_K^i$}
        \OUTPUT approximate samples from the target $\{X_K^i\}$ \hlmath{and normalising constant estimate $\hat{Z}_\pi$}
    \end{algorithmic}
\end{algorithm}

Below we talk about the specific details of our sampler choices.

\fparagraph{MALA proposal.} Note that $\mathsf{K}_{k}^{\text{MALA}}$ is $\rho_k$-invariant, and under a suitable backwards kernel choice $\mathsf{L}_{k-1}$ as in \eqref{eq:markov_backwards_kernel}, we have a simplified weight update given by the potential
\begin{align*}
    G_k(y_{k-1}, y_k) = \frac{\tilde{\varrho}_{t_k,X_k}(y_{k-1})}{\tilde{\varrho}_{t_{k-1},X_{k-1}}(y_{k-1})} = \frac{\mathcal{N}(\sqrt{\lambda_{t_k}}y_{k-1}; X_k, \sigma^2(1-\lambda_{t_k})\mathbf{I})}{\mathcal{N}(\sqrt{\lambda_{t_{k-1}}}y_{k-1}; X_{k-1}, \sigma^2(1-\lambda_{t_{k-1}})\mathbf{I})}.
\end{align*}
For MALA, we scale the step size geometrically (i.e. by a factor of $1.1$) to maintain an acceptance ratio of 75\%. We chose a slightly larger scaling factor than those used in other samplers as we only do a single MALA step per iteration. During \gls*{ll}, we use a larger scale factor of $2.0$ to allow quick adjustments to be made for poor initial step size choices. The MALA proposal is presented in Algorithm \ref{alg:mala_kernel}.

\begin{algorithm}[h]
    \caption{MALA Kernel}
    \label{alg:mala_kernel}
    \begin{algorithmic}[1]
        \INPUT auxiliary variable $Y_{k-1}$, sample $X_{k}$, step size $h_{k-1}$ (adjusted automatically)
        \STATE $q_{k}(y^\prime \mid y) :=  \mathcal{N}(y^\prime;y + h_{k-1}\nabla\log{\varrho}_{t_k,X_{k}}(y), {2h_{k-1}}\mathbf{I})$\\
        \STATE $\tilde{Y}_{k} \sim q_{k}(\cdot \mid Y_{k-1})$\\
        \STATE $\alpha  = \min\left\{1, \frac{\tilde{\varrho}_{t_k,X_{k}}(\tilde{Y}_{k})q_{k}(Y_{k-1} \mid \tilde{Y}_{k})}{\tilde{\varrho}_{t_k,X_{k}}({Y}_{k-1})q_{k}(\tilde{Y}_{k} \mid {Y}_{k-1})}\right\}$\\
        \STATE $Y_{k} \sim \alpha\delta_{\tilde{Y}_{k}}(\cdot) + (1 - \alpha)\delta_{Y_{k-1}}(\cdot)$\\
        \OUTPUT $Y_{k}$
    \end{algorithmic}
\end{algorithm}

\fparagraph{Control variate schedule estimation.} We use MMCVSI for score estimation. Algorithm \ref{alg:cv_schedule_estimation} describes the process of estimating the matrix schedule using our samples, including the scalar and diagonal variants for completeness. Notably, since the target score covariance is a static unknown, we can reuse estimates from previous time-steps. We maintain a running average in practice.

\begin{algorithm}[h]
    \caption{CV Schedule Estimation}
    \label{alg:cv_schedule_estimation}
    \begin{algorithmic}[1]
        \INPUT weights $\{w_k^{i,j}\}$, auxiliary variables $\{Y_k^{i,j}\}$, samples $\{X_k^i\}$
        \STATE $\hat{\mathcal{I}}_\pi =\frac{1}{N_x}$ {\scriptsize $\sum_{i=1}^{N_x}\sum_{j=1}^{N_y}{w^{i,j}_{k}\nabla\log\pi(Y_k^{i,j}) \nabla\log\varrho_{t_k,X_k^i}(Y_k^{i,j})^\top}$}
        \STATE $\alpha_{t_k} = \frac{(1-\lambda_{t_k}) \text{Tr}(\hat{\mathcal{I}}_\pi)}{\lambda_{t_k} d / \sigma^2 + (1-\lambda_{t_k}) \text{Tr}(\hat{\mathcal{I}}_\pi)}$\\
        \STATE $\mathbf{a}_{t_k} = \mathrm{diag}\left(\left\{\frac{(1-\lambda_{t_k})(\hat{\mathcal{I}}_\pi)_{ii}}{\lambda_{t_k}/\sigma^2 + (1-\lambda_{t_k})(\hat{\mathcal{I}}_\pi)_{ii}}\right\}_{i=1}^{d}\right)$\\
        \STATE $\mathbf{A}_{t_k} = \hat{\mathcal{I}}_{\pi}\left(\frac{\lambda_{t_k}}{\sigma^2(1-\lambda_{t_k})}\mathbf{I}+\hat{\mathcal{I}}_{\pi}\right)^{-1}$\\
        \OUTPUT $\alpha_{t_k},\ \mathbf{a}_{t_k},\ \mathbf{A}_{t_k}$
    \end{algorithmic}
\end{algorithm}

\fparagraph{Taming terminal instabilities.} While our scheme allows for single \gls*{mala} steps per iteration, auxiliary variables have less time to equilibrate w.r.t. the next target. This can be detrimental when the sample makes large jumps at the end when the posterior contracts to a Dirac centred at the sample. In the absence of a finer discretisation, we turn to a simple heuristic that we find to work well empirically. It turns out that we consistently maintain our target acceptance ratio up to some tolerance, right until the posterior's Gaussian component dominates. We choose to heuristically halt the SMC sampler when the acceptance ratio goes below a threshold, which we set to 10\%, and substitute score estimates with the target score itself.

\section{Theoretical Analysis}\label{sec_app:theoretical_analysis}
In this section, we provide detailed proofs of our theoretical results.
\subsection{Score Estimation Error}\label{subsec_app:score_convergence_analysis}

As discussed in the main text, when $\lambda_{t_k}\approx 0$, then $\varrho_{t_k, X}\approx \pi$, providing little benefit compared to addressing the original problem directly.
However, as observed by \citet{he2024zeroth}, when using the DSI for score estimation, it is possible to rely on less accurate samples from $\varrho_{t, X}$ in the Wasserstein-2 sense for small $t$, while still keeping the overall mean squared error of the Monte Carlo estimator below a given threshold.

We further analyse this behaviour for the weighted estimator constructed using the weighted particle system $\{(w_k^i, y_k^i)\}_{i=1}^N$ at the iteration  $k$
\begin{align*}
    S_k^{N}(x) = \mathbb{E}_{Y\sim \varrho_{t_k,x}^N}\left[\phi_{t_k,x}^{\alpha_{t_k}\mathbf{I}}(Y)\right] = \sum_{i=1}^N w_k^i \phi_{t_k,x}^{\alpha_{t_k}\mathbf{I}}(y_k^i),
\end{align*}
where
\begin{align*}
    \phi_{t_k,x}^{\alpha_{t_k}\mathbf{I}}(y) = \alpha_{t_k} \frac{\sqrt{\lambda_{t_k}}\, y-x}{\sigma^2(1-\lambda_{t_k})} + (1-\alpha_{t_k})\frac{\nabla\log\pi(y)}{\sqrt{\lambda_{t_k}}}
\end{align*}
is the control variate score expression.

We restate Proposition~\ref{prop:score_estimation_error}, which provides an upper bound on the $L^2$ error of the weighted score estimator \eqref{eq:mc_score_estimator_control_variate}, and present its proof.

\scoreEstimationError*
\begin{proof}
Let $\mathcal{G}_k^N=\sigma(X_k,(w_k^i,\bar{Y}_k^i)_{i=1}^N)$ denote the sigma-field generated by $X_k$ and the weighted particle system. Conditional on $\mathcal{G}_k^N$, the particle measure
\[
    \varrho_{t_k,X_k}^N \coloneqq \sum_{i=1}^{N} w_k^i \delta_{\bar{Y}_k^i}
\]
is a deterministic probability measure, and $\varrho_{t_k,X_k}$ is the deterministic probability measure indexed by the realised value of $X_k$.

Fix a realisation of this sigma-field: write $X_k=x$ and $\varrho_{t_k,x}^N=\sum_{i=1}^N w^i\delta_{y^i}$. Define
\begin{align*}
    g_{k,x}(y) \coloneqq \frac{\sqrt{\lambda_{t_k}}y-x}{\sigma^2(1-\lambda_{t_k})},\quad 
    h_k(y) \coloneqq \frac{\nabla\log\pi(y)}{\sqrt{\lambda_{t_k}}},
\end{align*}
and
\[
    f_{k,x}(y) \coloneqq \alpha_{t_k}g_{k,x}(y)+(1-\alpha_{t_k})h_k(y)
    = \phi_{t_k,x}^{\alpha_{t_k}\mathbf{I}}(y).
\]
By the score identity, 
\[
    \nabla\log\mu_{t_k}(x)
    =
    \int f_{k,x}(u)\,\varrho_{t_k,x}(\mathrm{d}u),
    \qquad
    S_k^N(x)
    =
    \int f_{k,x}(y)\,\varrho_{t_k,x}^N(\mathrm{d}y).
\]
Let $\Gamma$ be any coupling of $\varrho_{t_k,x}$ and $\varrho_{t_k,x}^N$. Then
\begin{align*}
    \left\Vert \nabla\log\mu_{t_k}(x)-S_k^N(x)\right\Vert^2
    &=
    \left\Vert
    \int\left(f_{k,x}(u)-f_{k,x}(y)\right)\Gamma(\mathrm{d}u,\mathrm{d}y)
    \right\Vert^2\\
    &\leq
    \int
    \left\Vert f_{k,x}(u)-f_{k,x}(y)\right\Vert^2
    \Gamma(\mathrm{d}u,\mathrm{d}y),
\end{align*}
where we used Jensen's inequality.

We next bound the Lipschitz constant of $f_{k,x}$ as a function of $y$. Since
\[
    \left\Vert g_{k,x}(u)-g_{k,x}(y)\right\Vert
    =
    \frac{\sqrt{\lambda_{t_k}}}{\sigma^2(1-\lambda_{t_k})}\Vert u-y\Vert,
\]
and Assumption~\ref{assumption:target_requirements} implies
\[
    \left\Vert h_k(u)-h_k(y)\right\Vert
    \leq
    \frac{L_\pi}{\sqrt{\lambda_{t_k}}}\Vert u-y\Vert,
\]
we have, using $\lVert a+b\rVert^2\leq2\lVert a\rVert^2+2\lVert b\rVert^2$,
\[
    \left\Vert f_{k,x}(u)-f_{k,x}(y)\right\Vert^2
    \leq
    2\left(
    \alpha_{t_k}^2\frac{\lambda_{t_k}}{\sigma^4(1-\lambda_{t_k})^2}
    +(1-\alpha_{t_k})^2\frac{L_\pi^2}{\lambda_{t_k}}
    \right)\Vert u-y\Vert^2.
\]
Substituting this estimate into the previous display gives
\begin{align*}
    \left\Vert \nabla\log\mu_{t_k}(x)-S_k^N(x)\right\Vert^2
    \leq
    2\left(
    \alpha_{t_k}^2\frac{\lambda_{t_k}}{\sigma^4(1-\lambda_{t_k})^2}
    +(1-\alpha_{t_k})^2\frac{L_\pi^2}{\lambda_{t_k}}
    \right)
    \int \Vert u-y\Vert^2\,\Gamma(\mathrm{d}u,\mathrm{d}y).
\end{align*}
Taking the infimum over all couplings $\Gamma$ of $\varrho_{t_k,x}$ and $\varrho_{t_k,x}^N$ yields the following bound for this fixed realisation:
\begin{align*}
    \left\Vert \nabla\log\mu_{t_k}(x)-S_k^N(x)\right\Vert^2
    \leq
    2\left(
    \alpha_{t_k}^2\frac{\lambda_{t_k}}{\sigma^4(1-\lambda_{t_k})^2}
    +(1-\alpha_{t_k})^2\frac{L_\pi^2}{\lambda_{t_k}}
    \right)
    W_2^2\left(\varrho_{t_k,x},\varrho_{t_k,x}^N\right).
\end{align*}
Since the preceding inequality holds for every realisation of $\mathcal{G}_k^N$ for which the weights define a probability measure, it holds almost surely after substituting $x=X_k$ and $\varrho_{t_k,x}^N=\varrho_{t_k,X_k}^N$. Taking expectations with respect to the joint randomness of $X_k$, the particles, and the weights, and then using the assumed bound on the expected squared Wasserstein error gives
\begin{align*}
   \mathbb{E}\left[\Vert\nabla\log\mu_{t_k}(X_k) - S_k^{N}( X_k)\Vert^2\right]\leq& \,2\, \delta(k, N)\left(\alpha_{t_k}^2\frac{\lambda_{t_k}}{\sigma^4(1-\lambda_{t_k})^2} +(1-\alpha_{t_k})^2\frac{L_\pi^2}{\lambda_{t_k}}\right).
\end{align*}
\end{proof}

Due to the choice of $\alpha_t^\ast$, our estimator exhibits similar behaviour to that of \citet{he2024zeroth} as $t\to0$. 
In contrast, the setting considered in \citet{he2024zeroth} (which corresponds to taking $\alpha_t^\ast = 1$) requires an arbitrarily large number of samples as the algorithm approaches the target distribution $(t\to 1)$ to maintain the $L^2$ bounded below some threshold. 
In our setting, however, as $t\to 1$, we can take $\alpha_t^\ast\approx 1-\lambda_t$, implying that in the last expression above the term inside the parenthesis is of order $\mathcal{O}(1)$. 
This observation is consistent with the fact that DSI suffers from high variance in low-noise regimes.

The bound in Proposition \ref{prop:score_estimation_error} demonstrates that with sufficiently large $N$, we can ensure that
\begin{equation*}
    \sum_{k=0}^{K-1} h\,\mathbb{E}\left[\left\Vert \nabla \log{\mu}_{t_k}(X_{k}) - S_k^{N}(X_{k})\right\Vert^2\right] \leq \varepsilon^2_{\text{score}}.
\end{equation*}    
Therefore, we can apply different results in the literature to bound the final error of the algorithm.

\subsection{Proof of Theorem~\ref{thm:final_error}}\label{subsec_app:final_error_proof}
Let $p_1$ denote the law of the algorithm at time $t=1$ for different cases.
\begin{enumerate}
    \item[\emph{(i)}] Using the proof of Theorem 1 in \cite{benton2024nearly}, in combination with \cite[eq.~(17)]{benton2024nearly} we have that
    \begin{align*}
        \mathrm{KL}\left(\pi\;||\;p_1\right) \leq \mathrm{KL}(\mu_0\;||\; p_0) + \varepsilon^2_{\text{score}} + \tfrac{d (1 + T)}{K - N_{\text{LL}}} 
    \end{align*}
    While \cite{benton2024nearly} assumes $T \ge 1$ to ensure the convergence error of the forward OU process is sufficiently small, our analysis does not require this since we explicitly account for the initialisation mismatch at $t=0$ through $\text{KL}(\mu_0 || p_0)$. Because the discretisation error of the reverse SDE is independent of the forward mixing time, the result holds for any $T > 0$.
    \item[\emph{(ii)}] Let $\mathbb{P}$ and $\mathbb{Q}$ denote the exact path measure of the continuous-time stochastic interpolant dynamics and the algorithmic path measure of its time-discretised counterpart, respectively. By the data processing inequality, the KL divergence between the final marginals is bounded by the divergence between their path measures$$\text{KL}(\pi || p_1) \le \text{KL}(\mathbb{P} || \mathbb{Q}).$$Because $\mathbb{P}$ and $\mathbb{Q}$ share the same diffusion coefficient, an application of Girsanov's theorem bounds their divergence by the integrated expected squared difference of their drifts:$$\text{KL}(\mathbb{P} || \mathbb{Q}) \lesssim  \int \mathbb{E}_{\mathbb{Q}} \left[ \left\| \nabla \log \mu_t(X_t) - S_k^N(X_{t_k}) \right\|^2 \right] dt.$$Applying the triangle inequality, we decompose this drift error into a time discretisation error and a score estimation error
    \begin{align*}
        \mathbb{E} \left[ \left\| \nabla \log \mu_t(X_t) - S_k^N(X_{t_k}) \right\|^2 \right] \le& 2 \mathbb{E} \left[ \left\| \nabla \log \mu_t(X_t) - \nabla \log \mu_{t_k}(X_{t_k}) \right\|^2 \right] \\&+ 2 \mathbb{E} \left[ \left\| \nabla \log \mu_{t_k}(X_{t_k}) - S_k^N(X_{t_k}) \right\|^2 \right].
    \end{align*}
    By definition, the integrated score estimation error over the $K$ steps is bounded by $\epsilon_{score}^2$. To bound the time discretisation error, we use the transition relationship $X_t = \alpha_t X_s + \sqrt{1 - \alpha_t^2} Z$ with $\alpha_t = \sqrt{\lambda_t / \lambda_s}$ for $t < s$. Under Assumptions \ref{assumption:target_requirements} and \ref{assumption:target_requirements_2}, the exact score functions $\nabla \log \mu_t$ are $L$-Lipschitz (Lemma 3.2 in \cite{cordero2025non}). We can therefore apply Lemma 16 in \cite{chen2023improved} across the $K$ integration steps to bound the discretisation error$$\int \mathbb{E} \left[ \left\| \nabla \log \mu_t(X_t) - \nabla \log \mu_{t_k}(X_{t_k}) \right\|^2 \right] dt \lesssim \frac{dL^2}{K}.$$Substituting both bounds into the Girsanov inequality yields the final result
    $$KL(\pi || p_1) \le \frac{dL^2}{K} + \epsilon_{score}^2.$$
    \item[\emph{(iii)}] Follows from Theorem 3.4 in \cite{cordero2025non}.
\end{enumerate}

\section{Numerical Experiments}\label{sec_app:numerical_experiments}

\subsection{Methods and Hyperparameters}\label{subsec_app:benchmark_methods}

We detail here the hyperparameters of methods we examine. For a majority of them, we follow the settings and hyperparameter grids outlined in \citep{grenioux2024stochastic} when evaluating on targets that were previously considered. This includes incorporating information about the target, by way of $R$ and $\tau$, to each sampler. We recall their definition as coming from the following assumption.

\begin{assumption}[Log concavity outside a compact]
    There exists $R > 0$ and $\tau > 0$ such that $\pi$ is the convolution of $\mu$ and $\mathcal{N}(0, \tau^2\mathbf{I})$, where $\mu$ is a distribution compactly supported on $B=\mathrm{B}(\mathbb{E}_\pi[X], R\sqrt{d})$, i.e. $\mu(\mathbb{R}^{d}\setminus B) = 0$.
\end{assumption}

As our specific instantiation of GMM40 does not appear in \citep{grenioux2024stochastic} and may be different from other setups, we test every combination of hyperparameters in the defined grid of each sampler and take the best result. Unless otherwise specified, samplers with MALA kernels use an adaptive step size that is geometrically adjusted to maintain an acceptance ratio of $75\%$. Additionally, we set the following parameters fixed globally across all samplers (except RDMC) and benchmark targets 
\begin{itemize}
    \item Number of discretisation steps: $K = 1024$.
    \item Maximum number of energy evaluations (per sample): $N_{\text{evals}}= 1.32 \times 10^5$.
\end{itemize}

\paragraph{AIS and SMC.} Both AIS and SMC operate on the geometric path, i.e. $\rho_k(x) \propto \rho_0(x)^{1 - \beta_k}\pi(x)^{\beta_k}$ for $k\in\{1,\ldots, K\}$, where we have a Gaussian initial density $\rho_0(x) := \mathcal{N}(0, (R^2d + \tau^2)\mathbf{I})$ and a linear annealing schedule $\beta_k = k/K$. Transitions between the path marginals are made via MALA transition kernels. Specifically, each sample runs 128 MALA steps per iteration. In SMC, the samples are adaptively resampled according to their importance weights whenever the ESS drops below half the number of particles. We find this performs better than the original settings in \citep{grenioux2024stochastic} where they use fewer MALA steps and resample at every time step.

\paragraph{RDMC.} Here, the final time $T$ in the (forward) OU process is treated as a hyperparameter. Following \citep{huang2023reverse}, we search for its values in the grid: $\{-\log(x)\mid x\in\{0.99,0.95,0.9,0.8,0.7\}\}$. We explain briefly how RDMC is implemented. To start, samples are drawn from $\mathcal{N}(0, (1-\exp({-2T}))\mathbf{I})$ and 16 iterations of \gls*{ll} are performed. Here, and at every iteration afterwards, 128 samples are drawn from the Gaussian component of the posterior to form an IS estimate of the posterior mean. This serves as the initialisation of four chains that run 32 MALA steps, the first half of which are considered part of a warm-up and discarded. All combined, the 64 samples (auxiliary variables) are used to estimate the posterior mean and hence the score. Note that this totals to $(16 + 1024) \times (128 + 4 \times 32) = 2.66\times 10^5$ energy evaluations. While this is more than the imposed budget, we stick to this formulation so we can choose the optimal hyperparameters from \citep{grenioux2024stochastic} for benchmarks other than GMM40. Its chosen settings across benchmarks are listed in Table \ref{tab:rdmc_hyperparameters}.

\begin{table}[h]
    \centering
    \caption{RDMC hyperparameters across the benchmark targets.}
    
    \setlength{\tabcolsep}{2.5pt}
    {\begin{tabular}{lcccccc}
        \toprule
        & GMM40 & GMM40 ($d=50$) & Rings & Funnel & Ionosphere & Sonar \\
        \midrule
        Final OU Time ($T$) & -$\log(0.70)$ & -$\log(0.80)$ & -$\log(0.80)$ & -$\log(0.90)$ & -$\log(0.95)$ & -$\log(0.95)$\\
         \bottomrule
    \end{tabular}
    \label{tab:rdmc_hyperparameters}}
\end{table}

\paragraph{SLIPS.} We use the classic variant of SLIPS in our experiments. Similar to RDMC, it undergoes \gls*{ll} for the same number of iterations and estimates the posterior mean using four chains taking 32 MALA steps, half being warm-up steps. However, instead of using IS estimates to initialise the chains, it uses the positions of the chains in the previous time step. Furthermore, SLIPS starts at some later time $t_0$ for which both the path marginals and posterior distribution are assumed to be approximately log-concave. For GMM40, we perform a search along the grids $\eta = \{5.0, 5.7\}$ and $t_0 = \{{0.1, 0.2, 0.4, 1.0, 1.2}\}$, similar to the grid search suggested for mixtures in \citep{grenioux2024stochastic}. Its chosen settings across benchmarks are listed in Table \ref{tab:slips_hyperparameters}.

\begin{table}[h]
    \centering
    \caption{SLIPS hyperparameters across the benchmark targets.}
    
    \setlength{\tabcolsep}{2.5pt}
    {\begin{tabular}{lcccccc}
        \toprule
        & GMM40 & GMM40 ($d=50$) & Rings & Funnel & Ionosphere & Sonar\\
        \midrule
        Log SNR Threshold ($\eta$) & $5.0$ & $5.7$ & $4.6$ & $5.0$ & $5.0$  & $5.0$\\
        Start Time ($t_0$) & $1.20$ & $0.10$ & $1.20$ & $1.00$ & $0.03$ & $0.03$\\
         \bottomrule
    \end{tabular}}
    \label{tab:slips_hyperparameters}
\end{table}

\paragraph{DPSMC (OU).} We set the Gaussian variance $\sigma^2=R^2 d +\tau^2$, similar to AIS and SMC. Given we do not start at a Gaussian, we can search for a better time to initialise. We search for $\lambda_{0} = \exp(-2T)$ uniformly in the grid $\{1/6,2/6,\ldots,5/6\}$, discretising the unit interval. We believe this is more robust compared to the grid in RDMC, as we search along the path bridging the base Gaussian to the target rather than a heuristic range of values. The chosen values are listed in Table \ref{tab:dpsmc_ou_hyperparameters}. Notably, we get similar values with RDMC.

\begin{table}[h]
    \centering
    \caption{DPSMC (OU) hyperparameters across the benchmark targets.}
    
    \setlength{\tabcolsep}{2.5pt}
    {\begin{tabular}{lcccccc}
        \toprule
        & GMM40 & GMM40 ($d=50$) & Rings & Funnel & Ionosphere & Sonar\\
        \midrule
        Initial Schedule for \gls*{ll} ($\lambda_0$) & $3/6$ & $5/6$ & $3/6$ & $3/6$ & $5/6$ & $5/6$\\
        \midrule
        Final OU Time ($T$) & $0.3465$ & $0.0911$ & $0.3465$ & $0.3465$ & $0.0911$ & $0.0911$ \\
        \bottomrule
    \end{tabular}}
    \label{tab:dpsmc_ou_hyperparameters}
\end{table}

\paragraph{DPSMC (\gls*{dald}).} Following our findings in Appendix \ref{app:sec:diffusion_path_details}, we set the base Gaussian variance to be $\sigma^2 = \mathbb{E}_\pi[\lVert X\rVert^2]/d$ and the schedule to be $\lambda_t=\sin^2(\frac{\pi t}{2})$. \gls*{dald} has $\epsilon$ as a hyperparameter. We choose it as follows. Corollary \ref{cor:action_upper_bound} indicates the following upper bound on the action
\begin{align*}
    \mathcal{A}(\mu) \leq \int_0^1\left(\frac{\sigma^2d}{4(1 - \lambda_t)} + \frac{\mathbb{E}_\pi[\lVert X\rVert^2]}{4\lambda_t}\right)\dot\lambda_t^2 dt =\frac{\mathbb{E}_\pi[\lVert X\rVert^2]\pi^2}{4},
\end{align*}
which is achieved precisely under our choices of $\sigma^2$ and $\lambda_t$. Now, using this result in the workings of Theorem 3.4 of \citep{cordero2025non}, we can write
\begin{align*}
    \mathrm{KL}(\mathbb{P}||\mathbb{Q}) \lesssim \epsilon\left(1 + \frac{L^2}{K^2\epsilon^4}\right)\frac{\mathbb{E}_\pi[\lVert X\rVert^2]\pi^2}{4} + \frac{dL^2}{K\epsilon^2}\left(1 + \frac{L}{K\epsilon}\right) + \varepsilon^2_{\text{score}},
\end{align*}
where $\mathbb{P}$ is the reference path measure, $\mathbb{Q}$ is the path measure of the continuous-interpolation of \gls*{dald}, $L$ is the supremum of the Lipschitz constants of $\nabla\log\mu_t$ along the path, and $\varepsilon^2_{\text{score}}$ is the score error. Loosely, we choose $\epsilon$ such that it minimises the first two terms. Taking the derivative with respect to $\epsilon$ and setting to zero, we have
\begin{align*}
    \frac{\mathbb{E}_\pi[\lVert X\rVert^2]\pi^2}{4}\left(1-\frac{3L^2}{K^2\epsilon^4}\right) - \frac{2dL^2}{K\epsilon^3} - \frac{3dL^3}{K^2\epsilon^4} = 0.
\end{align*}
Assuming the $O(1/K^2)$ terms vanish, e.g. for large enough $K$, we then have
\begin{align*}
    \frac{\mathbb{E}_\pi[\lVert X\rVert^2]\pi^2}{4} - \frac{2dL^2}{K\epsilon^3} = 0 \iff \frac{1}{\epsilon} = \left(\frac{K\mathbb{E}_\pi[\lVert X\rVert^2] \pi^2}{8dL^2}\right)^{1/3}.
\end{align*}
We use this crude approximation and set $1/\epsilon = \xi \left({K\mathbb{E}_\pi[\lVert X\rVert^2]}/{d}\right)^{1/3}$, for some hyperparameter $\xi$ which we find in the grid
\begin{align*}
\left\{2^{{\left[-2.5\cdot(1-\frac{x}{10}) + 3.5\cdot \frac{x}{10}\right]}} \mid x \in \{0,1,\ldots, 10\}\right\},
\end{align*}
i.e. linearly interpolating in log-space. Notably, \gls*{dald} is typically presented as running the dynamics
\begin{equation*}
    \mathrm{d}X_t = \nabla\log\mu_{t/T}(X_t)\mathrm{d}t + \sqrt{2}\mathrm{d}W_t,\quad X_0\sim \nu
\end{equation*}
for $t\in[0,T]$, where $T = 1/\epsilon$. For completeness, we present the chosen values for both quantities in Table \ref{tab:dpsmc_hyperarameters}. See Table \ref{tab:target_spread_values} for the second moments of targets used to compute $\epsilon$. For \gls*{dald} with \gls*{ll}, we reuse the same $\epsilon$ but search for $\lambda_0$ as in DPSMC (OU). Because we now have $\lambda_0\neq 0$, we recompute the schedule minimising the action upper bound with general boundary conditions to arrive at
\begin{equation*}
    \lambda_t = \frac{1}{2}\left[{1+\sin\left(\frac{\pi t}{2}+\arcsin(2\lambda_0-1)(1-t)\right)}\right].
\end{equation*}

\begin{table}[h]
    \centering
    \caption{DPSMC (\gls*{dald}) hyperparameters across the benchmark targets.}
    
    \setlength{\tabcolsep}{2.5pt}
    \begin{tabular}{lcccccc}
        \toprule
         & GMM40 & GMM40 {($d=50$)} & Rings & Funnel & Ionosphere & Sonar \\
         \midrule
         $\xi$ & $2^{3.5}$ & $2^{2.9}$ & $2^{-2.5}$ & $2^{-0.1}$ & $2^{-2.5}$ & $2^{-2.5}$\\
         \midrule
         Tracking Strength $(\epsilon)$ & $0.0017$ & $0.0025$ & $0.3610$ & $0.0652$ & $0.5319$ & $0.5435$\\
         Simulation Time $(T)$ & $584.25$ & $387.66$ & $2.77$ & $15.33$ & $1.88$ & $1.84$\\
         \midrule
         Initial Schedule for \gls*{ll} ($\lambda_0$) & $4/6$ & $4/6$ & $5/6$ & $4/6$ & $1/6$ & $1/6$\\
         \bottomrule
    \end{tabular}
    \label{tab:dpsmc_hyperarameters}
\end{table}

\paragraph{DPSMC (\gls*{si}).} Similar to \gls*{dald}, we set the base Gaussian variance to be $\sigma^2=\mathbb{E}_\pi[\lVert X\rVert^2]/d$ and the schedule to be $\lambda_t=\sin^2(\frac{\pi t}{2})$. Matching our OU implementation, we set the diffusion coefficient to be $\gamma_t=\sigma^2$. We choose to use an \gls*{em} discretisation for the SDE in \eqref{eq:general_diffusion_path_sde}.

\subsection{Target Distributions}

\begin{table}[]
    \centering
    \caption{The specific values of target information used to inform samplers across the benchmarks.}
    \begin{tabular}{lcccc}
        \toprule
        Target & $\mathbb{E}_\pi[\lVert X\rVert^2]$ & $\sigma$ & $R$ & $\tau$ \\
        \midrule
        GMM40 ($d=2$) & $268.98$ & $11.60$ & $18.33$ & $1$\\
        GMM40 ($d=50$) & $6840.25$ & $11.70$ & $13.41$ & $1$\\
        Rings ($d=2$) & $7.52$ & $1.94$ & $4/\sqrt{2}$ & $0.15$ \\
        Funnel ($d=10$) & $43.34$ & $2.08$ & $2.12/\sqrt{10}$ & $0$ \\
        Ionosphere ($d=35$) & $\approx41.25$ & $1.09$ & $2.5/\sqrt{36}$ & 0\\
        Sonar ($d=61$)& $\approx67.25$ & $1.05$ & $2.5/\sqrt{62}$ & 0\\
        \bottomrule
    \end{tabular}
    \label{tab:target_spread_values}
\end{table}

We detail here the targets that were considered in our experiments. As we use the expected squared norm $\mathbb{E}_\pi[\lVert X\rVert^2]$ in our methods, we precisely write down their expressions or approximations if unavailable. A summary of their values are available in Table \ref{tab:target_spread_values}, including the choices for $(R,\tau)$ made by \citep{grenioux2024stochastic} to be used for the other samplers.

\paragraph{GMM40.} We consider a uniform mixture of 40 Gaussians in $d$ dimensions with identity covariance matrices and means sampled uniformly from a $d$-dimensional hypercube of side length 40. The density is exactly given by
\begin{align*}
    \pi(x) = \frac{1}{40}\sum_{i=1}^{40}{\mathcal{N}(x; \mathbf{m}_i, \mathbf{I})},
\end{align*}
for $\mathbf{m}_i \sim U[-20, 20]^{d}$. We therefore have the expected squared norm equal to
\begin{align*}
    \mathbb{E}_\pi[\lVert X\rVert^2] = d + \frac{1}{40}\sum_{i=1}^{40}{\lVert\mathbf{m}_i\rVert^2}.
\end{align*}
We then appropriately set $R = \frac{1}{\sqrt{d}}\max_{i\in[40]}\{\lVert \mathbf{m}_i - \frac{1}{40}\sum_{j=1}^{40}{\mathbf{m}_j}\rVert\}$ and $\tau = 1$. In our experiments, we fix the seed (0) when sampling the mean vectors. Under these choices, we arrive at values shown in Table \ref{tab:target_spread_values}.

\paragraph{Rings.} The rings distribution \citep{grenioux2024stochastic} is a radially symmetric distribution featuring four concentric rings of different radii as the regions with high density. First, we define two univariate distributions: a radial one $p_r$ given by
\begin{align*}
    p_r(x) = \frac{1}{4}\sum_{i=1}^{4}{\mathcal{N}(x;i, 0.15^2)},
\end{align*}
and an angular one $p_\theta$ that is a uniform distribution over $[0, 2\pi]$. The rings distribution is precisely the inverse polar reparameterisation of the joint distribution of both radial and angular components
\begin{align*}
    \pi(x) = p_r(\lVert x\rVert)p_{\theta}(\tan^{-1}(x_1/x_2)).
\end{align*}
Given the rotational symmetry, it suffices to look at the radius for the expected squared norm. We have
\begin{align*}
    \mathbb{E}_\pi[\lVert X\rVert^2] = \mathbb{E}_{p_r}[X^2] =\frac{1}{4}\sum_{i=1}^{4}{(i^2+0.15^2)} = 7.5225.
\end{align*}

\paragraph{Funnel.} The funnel distribution \citep{neal2003slice} is a hierarchical distribution featuring an exponentially narrowing region of high density. It is exactly given by
\begin{align*}
    \pi(x) = \mathcal{N}(x_1; 0, \eta^2)\prod_{i=2}^{10}\mathcal{N}(x_{i};0,\exp(x_1)),
\end{align*}
for some $\eta^2$. While \citet{grenioux2024stochastic} claim to use $\eta = 3$, we found their code to have actually used $\eta^2=3$ when assessing their sampler and its hyperparameters. We accordingly use $\eta^2 = 3$ for our experiments. We now solve for its expected squared norm. We have that
\begin{align*}
    \mathbb{E}_\pi[\lVert X\rVert^2] &= \eta^2 + \int \tfrac{1}{\sqrt{2\pi\eta^2}}\exp\left(-\tfrac{x_1^2}{2\eta^2}\right)\int\tfrac{1}{\sqrt{2\pi\exp(x_1)}^{9}}\exp\left(-\tfrac{\lVert x_{2:10}\rVert^2}{2\exp(x_1)}\right)\lVert x_{2:10}\rVert^2 \mathrm{d} x_{2:10} \mathrm{d} x_1\\
    &= \eta^2 + \int \tfrac{1}{\sqrt{2\pi\eta^2}}\exp\left(-\tfrac{x_1^2}{2\eta^2}\right)\left(9\exp(x_1)\right) \mathrm{d} x_1\\
    &=\eta^2 + 9 \exp(\eta^2/2) \approx 43.34
\end{align*}
where we used the fact that the MGF for a univariate gaussian $p(x)=\mathcal{N}(x;\mu,\sigma^2)$ is $\mathbb{E}_p[\exp(tX)] = \exp(\mu t + t^2\sigma^2/2)$.

\paragraph{Bayesian logistic regression.} Given a dataset $\mathcal{D} = \{(x_i,y_i)\}_{i=1}^{M}$ with vector-valued features $x_i\in\mathbb{R}^d$ and binary outcomes $y_i\in\{0,1\}$ for $i\in[M]$, a Bayesian logistic regression model is given by a prior on its weight vector $w\in\mathbb{R}^{d}$ and intercept $b\in\mathbb{R}$ which we choose to be
\begin{align*}
    p(w, b) = \mathcal{N}(w;0,\mathbf{I}) \mathcal{N}(b;0,2.5^2),
\end{align*}
and a likelihood for a pair $(x, y)$ in terms of the model parameters
\begin{align*}
    p(y\mid x; w,b) = \mathrm{Bernoulli}\left(y; \sigma(w^\top x + b)\right),
\end{align*}
where $\sigma(x)=\frac{\exp(x)}{1+\exp(x)}$ is the sigmoid function. The goal is to sample from the posterior distribution
\begin{align*}
    \pi(w,b):=p(w, b\mid \mathcal{D}) \propto p(\mathcal{D} \mid w, b) p(w, b) = \left[\prod_{i=1}^{M}{p(y_i \mid x_i;w,b)}\right]p(w,b).
\end{align*}
Instead of computing the expected squared norm of the model parameters with respect to the posterior, we compute our statistics based on the prior directly, i.e. we have
\begin{align*}
    \mathbb{E}_{p}[\lVert W\rVert^2 + B^2] = d \cdot 1 + 2.5^2 = d +6.25.
\end{align*}

\subsection{Normalising Constant Estimation}
We briefly detail our setup for estimating normalising constants. The necessary modifications to our method are highlighted in Algorithm \ref{alg:dpsmc_interacting}. Here, we have abstracted the SDE update as some generic forward transition kernel $\mathsf{K}_k^{\text{sde}}$ and have included an additional backwards kernel $\mathsf{L}_k^{\text{sde}}$ for defining the extended target space. In the OU case, we choose \gls*{em} discretisations of the forward OU SDE and its time-reversal, i.e.
\begin{align*}
    \mathsf{K}_{k+1}^\text{sde}(X_{k+1} \mid X_{k}, S_k, h) &= \mathcal{N}(X_{k+1}; X_k + hT(X_k + 2\sigma^2S_k), 2hT\sigma^2\mathbf{I})),\\
    \mathsf{L}_k^\text{sde}(X_k \mid X_{k+1}, h) &=  \mathcal{N}(X_k; X_{k+1} - hTX_{k+1},2hT\sigma^2\mathbf{I}),
\end{align*}
which we found to be more stable than their \gls*{ei} counterparts.

In this experiment, we follow roughly the same number of energy evaluations and discretisation as RDSMC \citep{wu2025reverse}. For Rings, we run \gls*{ais} for $64$ steps between $\mathcal{N}(x;0,\tilde{\sigma}^2\min(1,(1-\lambda_{0})/\lambda_{0})\mathbf{I})$ and $\varrho_{0,X_0}$ to initialise our auxiliary variables, then we run our algorithm for a smaller number of discretisation steps, $K=124$, but compensate with $8$ \gls*{mala} steps per iteration. This gives the same number of energy evaluations as specified in our previous benchmarks. For Funnel, we run $64\times 5$ \gls*{ais} steps and $8\times 5$ \gls*{mala} steps per iteration. We choose to do an \gls*{ais} initialisation as it helps provide an accurate estimate for the initial normalising constant. Otherwise, simple IS may provide a high variance estimate which drowns the contribution of the remaining telescoping ratio. Further, unlike RDSMC, we perform an \gls*{ll} initialisation with $N_{\text{LL}}=8$ steps, with each step being accounted for in the weights using symmetric forward and backward ULA kernels.

As noticed by \citet{wu2025reverse}, the marginal density estimates may be unstable at certain times, especially when the posterior coverage is poor, as is typically the case in high dimensions. While RDSMC has a hyperparameter for when resampling is allowed to take place, we choose to not resample for simplicity. We previously mentioned taming terminal instabilities by halting the SMC sampler when the acceptance ratio drops below a threshold and using the true target scores. In this setting where we still need marginal density estimates, we perform SNIS with the Gaussian component of the posterior as the proposal. This performs quite well, given it has a dominant contribution for $\lambda_t$ close to 1.

We once again perform a hyperparameter search for $\lambda_0$ in the grid $\{1/6,2/6,\ldots, 5/6\}$ of which the selected values are specified in Table \ref{tab:dpsmc_ou_hyperparameters_normalising_constant}. 

\begin{table}[h]
    \centering
    \caption{DPSMC (OU) hyperparameters for normalising constant estimation.}
    
    {\begin{tabular}{lcc}
        \toprule
        & Rings & Funnel\\
        \midrule
        Initial Schedule for \gls*{ll} ($\lambda_0$) & $4/6$ & $4/6$ \\
        \midrule
        Final OU Time ($T$) & $0.4055$ & $0.4055$ \\
        \bottomrule
    \end{tabular}}
    \label{tab:dpsmc_ou_hyperparameters_normalising_constant}
\end{table}
\section{Ablations and Additional Experiments}\label{sec_app:additional_experiments}

\subsection{Sampling with Different Score Identities}\label{app:subsec:score_identities_ablation}
In the main text, we proposed CV schedules that were independent of position, yielding a score identity we term MCVSI. Our experiments made use of this identity with the matrix schedule. Here, we investigate how using different score identities affects  performance on the benchmarks. Notably, we test DSI, TSI, MSI \citep{de2024target, he2025training}, the control-variate score identity (CVSI) \citep{kahouli2025control, ko2025latent} (as written in Lemma \ref{lemma:cv_schedule_single_expectation}), and (scalar) MCVSI. We test the identities on two regimes: one with many auxiliary variables (same setup as our main experiments) and one with fewer. This is to see whether score identities that rely on estimation do worse with fewer auxiliary variables than those that do not, e.g. MSI, and whether MCVSI, which uses the same schedule for each sample, loses some fidelity in contrast to CVSI, which estimates schedules on a per-sample basis. Table \ref{tab:ablation_score_identities} summarises our findings.

Overally, we find (1) our matrix schedule was the best in terms of handling the anisotropic Funnel target, (2) DSI and TSI alone are quite unstable (without clipping score estimates), (3) identities relying on estimation work even with few auxiliary vriables, and (4) MCVSI provides comparable if not better performance than CVSI.

\begin{table*}[h]
    \centering
    \caption{Performance of DPSMC (DALD) using different settings and control variate schedules across ten seeds. Best result under each DPSMC variant is in \textbf{bold}, and second-best is \underline{underlined}.}
    \setlength{\tabcolsep}{2.8pt}
    { \begin{tabular}{lcccccc}
        \toprule
         \makecell{Setting /\\ Score Identity} & \makecell{GMM40\\{\small ($d=2$) ($\downarrow$)}} & \makecell{GMM40\\{\small ($d=50$) ($\downarrow$)}} & \makecell{Rings\\ {\small($d=2$) ($\downarrow$)}} & \makecell{Funnel\\{\small ($d=10$) ($\downarrow$)}} & \makecell{Ionosphere\\{\small ($d=35$) ($\uparrow$)}} & \makecell{Sonar\\{\small ($d=61$) ($\uparrow$)}}\\
        \midrule
        {\footnotesize $K=1024, N_Y=128$}\\
        \midrule
        DSI & \tabres{5.58}{1.72} & \tabres{29.30}{1.47} & \tabres[\mathbf]{0.19}{0.02} & \tabres{0.094}{0.004} & \tabres{\text{-}84.81}{1.23} & \tabres{\text{-}108.58}{0.09}\\
        TSI & \tabres[\mathbf]{1.82}{0.26} & \tabres{29.08}{1.07} & \tabres[\mathbf]{0.19}{0.02} & \tabres{0.097}{0.003} & \tabres{\text{-}438.94}{2.75} & \tabres{\text{-}330.93}{0.89}\\
        MSI & \tabres{1.89}{0.26} & \tabres{29.59}{1.00} & \tabres{0.21}{0.02} & \tabres[\underline]{0.069}{0.003} & \tabres{\text{-}83.96}{0.11} & \tabres{\text{-}109.06}{0.07} \\
        CVSI & \tabres[\underline]{1.83}{0.27} & \tabres{29.62}{0.76} & \tabres{0.21}{0.02} & \tabres{0.073}{0.006} & \tabres[\underline]{\text{-}83.76}{0.10} & \tabres[\underline]{\text{-}107.92}{0.07}\\
        MCVSI & \tabres{1.89}{0.28} & \tabres[\mathbf]{29.05}{1.51} & 	\tabres{0.21}{0.02} & \tabres{0.070}{0.003} &  \tabres[\mathbf]{\text{-}83.71}{0.10} & \tabres[\mathbf]{\text{-}107.67}{0.07}\\
        MMCVSI & \tabres{1.92}{0.22} & \tabres[\mathbf]{29.05}{1.51} & \tabres{0.20}{0.02} & \tabres[\mathbf]{0.062}{0.003} & \tabres{\text{-}83.92}{0.09} & \tabres{\text{-}108.84}{0.09}\\
        \midrule 
        {\footnotesize $K=8192, N_Y=16$}\\
        \midrule
        DSI & \tabres{2.06}{3.32} & \tabres{30.04}{1.82} & nan & 	\tabres{0.154}{0.022} & \tabres{\text{-}91.22}{1.08} & \tabres{\text{-}110.31}{0.15}\\
        TSI & \tabres[\mathbf]{1.42}{0.34} & nan & nan & \tabres{0.189}{0.004} & \tabres{\text{-}98.21}{0.20} & \tabres{\text{-}119.84}{0.19} \\
        MSI & \tabres[\underline]{1.43}{0.25} & \tabres{30.39}{1.35} & \tabres{0.22}{0.02} & \tabres{0.041}{0.002} & \tabres{\text{-}86.00}{0.09} & \tabres{\text{-}109.47}{0.11}\\
        CVSI & \tabres{1.44}{0.27} & \tabres[\mathbf]{29.47}{0.89} & \tabres{0.20}{0.02} & \tabres{0.045}{0.003} & \tabres{\text{-}86.02}{0.09} & \tabres[\underline]{\text{-}109.39}{0.11}\\
        MCVSI & \tabres{1.44}{0.23} & \tabres{30.20}{1.31} & \tabres[\mathbf]{0.18}{0.22} & \tabres[\underline]{0.036}{0.005} & \tabres[\mathbf]{\text{-}85.92}{0.09} & \tabres[\mathbf]{\text{-}109.22}{0.11}\\
        MMCVSI &	\tabres{1.46}{0.26} & \tabres[\underline]{29.94}{1.19} & \tabres[\mathbf]{0.18}{0.01} & \tabres[\mathbf]{0.031}{0.002} & \tabres[\underline]{\text{-}85.96}{0.09} & \tabres{\text{-}109.43}{0.10}\\
        \bottomrule
        \end{tabular}}
        \label{tab:ablation_score_identities}
\end{table*}

\subsection{Wall-clock Times}\label{app:subsec:wall_clock_times_ablation}

We measured wall-clock times for running SMC, SLIPS, and DPSMC (DALD) across all the benchmarks on a single Nvidia RTX 6000 Ada GPU. Each result is in seconds and reported as an average of five seeds, with each seed producing 4096 samples. Note that all implementations are in PyTorch. As in Table \ref{tab:ablation_wall_clock_times}, we find empirically that DPSMC is significantly faster than other samplers on the toy targets and comparable in logistic regression. In principle, one can choose different hyperparameters for the competing samplers to match the speed of DPSMC. Doing so for SMC and SLIPS typically results in worse performance compared to those reported and DPSMC.

\begin{table*}[h]
    \centering
    \caption{Wall-clock times of samplers in seconds averaged across five seeds. Best result is in \textbf{bold}.}
    \setlength{\tabcolsep}{2.8pt}
    { \begin{tabular}{lccccccc}
        \toprule
         Algorithm & \makecell{Batched \\ Evals} & \makecell{GMM40\\{ ($d=2$)}} & \makecell{GMM40\\{ ($d=50$)}} & \makecell{Rings\\ {($d=2$)}} & \makecell{Funnel\\{ ($d=10$)}} & \makecell{Ionosphere\\{($d=35$)}} & \makecell{Sonar\\{($d=61$)}}\\
        \midrule
        SMC & 131k & \tabres{321.68}{3.07} & \tabres{364.44}{21.00} & \tabres{603.88}{16.27} & \tabres{363.07}{5.96} & \tabres{357.12}{27.16} & \tabres{400.10}{13.68}\\
        SLIPS &  32k & \tabres{77.25}{2.43} & \tabres{100.68}{8.20} & \tabres{170.59}{11.72} & \tabres{150.56}{10.65} & \tabres{89.12}{1.43} & \tabres[\mathbf]{94.70}{2.00} \\
        DPSMC & \textbf{1k} & \tabres[\mathbf]{9.66}{1.99} & \tabres[\mathbf]{53.77}{1.27} & \tabres[\mathbf]{9.51}{0.29} & \tabres[\mathbf]{10.60}{0.48} & \tabres[\mathbf]{83.14}{1.24} & \tabres{99.77}{0.71} \\
        \bottomrule
        \end{tabular}}
        \label{tab:ablation_wall_clock_times}
\end{table*}

\subsection{Comparison with Parallel Tempering}

Parallel tempering (PT) \citep{swendsen1986replica, geyer1995annealing} has largely remained as the gold standard amongst sampling methods. It defines a sequence of inverse temperatures $0 < \beta_1 <\ldots < \beta_{N_B} = 1$ and subsequently targets the joint distribution $p(x_1,\ldots, x_{N_B})\propto\prod_{i=1}^{N_B}\pi(x_i)^{\beta_i}$. To do this, it runs several Markov chains in parallel, one at each temperature level, often via Hamiltonian Monte Carlo (HMC) \citep{neal2011mcmc} samplers. After every step, chains at different temperature levels, e.g. adjacent ones, are swapped according to some probability that leaves the joint distribution invariant. This scheme allows the chain at the normal temperature to traverse between modes through swaps with chains at higher temperatures where the energy barriers present are significantly lower.

We chose not to include PT in our main table as it is fundamentally different in its equilibrium nature. However, we thought it was important to show its performance as a standard reference. We report our findings in Table \ref{tab:pt_results}. Below we describe our choices for its settings.

\begin{table*}[h]
    \centering
    \caption{Performance of parallel tempering on the benchmark problems. Results highlighted in \textbf{bold} indicate it achieved the best performance amongst the samplers we tested. }
    \begin{tabular}{cccccccc}
        \toprule
         \makecell{Batched\\Evals {\small ($\downarrow$)}} & \makecell{GMM40\\{\small ($d=2$) ($\downarrow$)}} & \makecell{GMM40\\{\small ($d=50$) ($\downarrow$)}} & \makecell{Rings\\ {\small($d=2$) ($\downarrow$)}} & \makecell{Funnel\\{\small ($d=10$) ($\downarrow$)}} & \makecell{Ionosphere\\{\small ($d=35$) ($\uparrow$)}} & \makecell{Sonar\\{\small ($d=61$) ($\uparrow$)}}\\
        \midrule
        2M & \tabres[\mathbf]{1.13}{0.18} & \tabres{42.44}{2.13} & \tabres{0.18}{0.02} & \tabres[\mathbf]{0.022}{0.005}& \tabres{\text{-}87.91}{0.17} & \tabres{\text{-}111.04}{0.08}\\
        \bottomrule
    \end{tabular}
    \label{tab:pt_results}
\end{table*}

We fix the number of chains to be $N_{B}=16$ and set the inverse temperatures to be $\beta_i = 10^{-(i-1)/5}$. While we would normally run a single instance of PT and take the positions of the coldest chain at every number of steps, we chose to run 16 parallel instances else runtimes became unreasonably long. Each chain runs an HMC sampler. Simulating Hamiltonian dynamics requires specifying the mass matrix $M$, the step size $h$, and the number of leapfrog steps $L$. We set the mass matrix to be the identity $M = \mathbf{I}$ and choose a step size $h$ that achieves roughly 0.65 acceptance ratio \citep{neal2011mcmc}. First, we initialise 64 chains per temperature level from $\mathcal{N}(0, (R^2 d + \tau^2)\mathbf{I})$ and do 200 warm-up steps to geometrically adjust the step size (per temperature) to achieve the target acceptance ratio. Then, we only keep 16 samples (number of instances) at each temperature level, and, for every subsequent HMC step, we use the same step size obtained after the warm-up. We choose the number of leapfrog steps $L$ to be equal to 128, and we take the coldest chain's position after every 64 HMC steps and treat them as our samples. To be precise, the total energy evaluations (which should be less than $N_{\text{evals}}\times 4096$) is the product of the following quantities: number of instances (16), number of chains/temperatures (16), number of HMC steps (64), number of leapfrog steps (128), number of samples per instance (256).

\end{document}